\newcommand{\algoname}[1]{\texttt{#1}}
\newcommand{\ouralgo}{\algoname{MaxNorm} }
\newcommand{\oursecondalgo}{\algoname{Newton} }
\newcommand{\figuresize}{0.15\textwidth}
\newtheorem{definition}{Definition}
\newtheorem{proposition}{Proposition}
\newtheorem{lemma}{Lemma}
\newtheorem{theorem}{Theorem}
\newtheorem{remark}{Remark}
\newtheorem{corollary}{Corollary}
\newcommand{\PP}{\mathbb{P}}
\newcommand{\RR}{\mathbb{R}}
\newcommand{\R}{\mathbb{R}}
\renewcommand{\leq}{\leqslant} 
\renewcommand{\le}{\leqslant} 
\renewcommand{\geq}{\geqslant} 
\renewcommand{\ge}{\geqslant}
\newcommand{\cX}{\mathcal X}
\newcommand{\eps}{\varepsilon}
\renewcommand{\epsilon}{\varepsilon}
\newcommand{\problem}{\ref{prob:bilinear} }
\DeclareMathOperator{\diag}{diag}
\DeclareMathOperator{\sign}{sign}
\newglossaryentry{t}
{
    type=linbandit,
    name={\ensuremath{t}},
    description = {Current time step. Not to be confused with the barrier parameter of Newton's method},
}
\newglossaryentry{dim}
{
    type=linbandit,
    name={\ensuremath{d}},
    description = {Dimension of the problem considered},
}
\newglossaryentry{rewardt}
{
    type=linbandit,
    name={\ensuremath{y_{t}}},
    description = {Reward received at time $t$},
}
\newglossaryentry{linkf}
{
    type=linbandit,
    name={\ensuremath{g}},
    description = {Link function, usually monotonic in this paper we assume non-decreasing},
}
\newglossaryentry{truezeta}
{
    type=linbandit,
    name={\ensuremath{\zeta}},
    description = {Unkown true parameter of the linear model $\zeta \in \mathbb{R}^d$},
}
\newglossaryentry{center}
{
    type=linbandit,
    name={\ensuremath{c}},
    description = {Center of the ellipsoid that define the action set},
}
\newglossaryentry{actiont}
{
    type=linbandit,
    name={\ensuremath{x_t}},
    description = {Action taken at time $t$},
}
\newglossaryentry{actionsett}
{
    type=linbandit,
    name={\ensuremath{\mathcal{X}_t}},
    description = {Set of vailable actions at time $t$},
}
\newglossaryentry{noiset}
{
    type=linbandit,
    name={\ensuremath{z_{t}}},
    description = {Noise at time $t$},
}
\newglossaryentry{noiselvl}
{
    type=linbandit,
    name={\ensuremath{\sigma}},
    description = {Subgaussian parameter of the noise},
}
\newglossaryentry{xstart}
{
    type=linbandit,
    name={\ensuremath{x^{\star}_t}},
    description = {Optimal action as a function as a function of the time $t$ and $\zeta$},
}
\newglossaryentry{avregret}
{
    type=linbandit,
    name={\ensuremath{R_{T}(\zeta)}},
    description = {Average regret at time $T$ on the environment parametrized by $\zeta$},
}
\newglossaryentry{confct}
{
    type=linbandit,
    name={\ensuremath{\mathcal{C}_{t}}},
    description = {Confidence ellipsoid at time $t$ for optimistic algorithms},
}
\newglossaryentry{hatzetat}
{
    type=linbandit,
    name={\ensuremath{\hat{\zeta}_{t-1}}},
    description = {Estimate of the parameter $\zeta$ at time $t$},
}
\newglossaryentry{normdesignt}
{
    type=linbandit,
    name={\ensuremath{W_{t-1}}},
    description = {Symmetric definite positive matrix defining the norm and the confidence ellipsoid at time $t$ for optimistic algorithms},
}
\newglossaryentry{normM}
{
    type=linbandit,
    name={\ensuremath{\|z\|_{M}}},
    description = {Norm of $z$ defined by the symmetric positive definite matrix $M$, $\|z\|_{M}^2 = z^\top M z$},
}
\newglossaryentry{Pb}
{
    type=Opbdef,
    name={\ensuremath{P_B}},
    description = {Bilinear optimization problem, $\text{maximize } x^\top \theta \text{ subject to } (x,\theta) \in \mathcal{X} \times \Theta$},
}
\newglossaryentry{xset}
{
    type=Opbdef,
    name={\ensuremath{\mathcal{X}}},
    description = {Set of actions, a convex subset of $\mathbb{R}^d$ },
}
\newglossaryentry{A}
{
    type=Opbdef,
    name={\ensuremath{A}},
    description = {Symmetric positive definite  matrix defining a norm and an ellipsoid action set},
}
\newglossaryentry{ellipsoid}
{
    type=Opbdef,
    name={\ensuremath{\Theta}},
    description = {Ellipsoid set of parameters, $\Theta = \{\theta \in \mathbb{R}^d:  \|\theta-c\|_{W} \le 1 \}$},
}
\newglossaryentry{ellcenter}
{
    type=Opbdef,
    name={\ensuremath{c}},
    description = {Center of the ellipsoid in canonical base defining the ellipsoid set $\Theta$},
}
\newglossaryentry{Wmatrix}
{
    type=Opbdef,
    name={\ensuremath{W}},
    description = {Symmetric positive definite matrix in the canonical base defining the ellipsoid $\Theta$},
}
\newglossaryentry{p}
{
    type=Opbdef,
    name={\ensuremath{p}},
    description = {Positive number defining the $p$ norm, $\|z\|_p^p = \sum_{i=1}^d |z_i|^p$},
}
\newglossaryentry{pnorm}
{
    type=Opbdef,
    name={\ensuremath{\|z\|_p}},
    description = {Norm of $z$ defined by the positive number $p$, $\|z\|_p^p = \sum_{i=1}^d |z_i|^p$},
}
\newglossaryentry{Odot}
{
    type=Opbdef,
    name={\ensuremath{\odot}},
    description = {Hadamard product (entries-by-entries) between two matrices},
}
\newglossaryentry{diag}
{
    type=Opbdef,
    name={\ensuremath{\diag}},
    description = {For $z \in R^d$, $\diag(z)$ denote the diagonal matrix whose diagonal entries are the entries of $z$},
}
\newglossaryentry{ballp}
{
    type=Opbdef,
    name={\ensuremath{B_{p,d}}},
    description = {Unit ball of the $p$ norm, $\smash{B_{p,d} = \{x \in \mathbb{R}^d: \|x\|_{p} \le 1\}}$},
}
\newglossaryentry{Px}
{
    type=Opbdef,
    name={\ensuremath{P_X}},
    description = {Optimization problem when maximizing $P_B$ in $\theta$ first, $\text{maximize } x^\top c + \|x\|_{W^{-1}} \text{ subject to } x \in \mathcal{X}$},
}
\newglossaryentry{xstar}
{
    type=Opbdef,
    name={\ensuremath{x^\star}},
    description = {Optimal solution of the problem $P_B$ and $P_X$},
}
\newglossaryentry{thetastar}
{
    type=Opbdef,
    name={\ensuremath{\theta^\star}},
    description = {Optimal solution of the problem $P_B$},
}
\newglossaryentry{U}
{
    type=Oalgorithm,
    name={\ensuremath{U}},
    description = {Change of basis matrix, so that $U^\top U = I$ and $A ^{1/2} W A^{1/2}  = U^\top\Lambda U$},
}
\newglossaryentry{lambdamat}
{
    type=Oalgorithm,
    name={\ensuremath{\Lambda}},
    description = {Diagonal matrix of the decreasing ordered eigenvalues of the matrix $A^{1/2} W A^{1/2}$},
}
\newglossaryentry{lambda}
{
    type=Oalgorithm,
    name={\ensuremath{\lambda}},
    description = {Vector of the decreasing ordered eigenvalues of the matrix $A^{1/2} W A^{1/2}$},
}
\newglossaryentry{kappa}
{
    type=Oalgorithm,
    name={\ensuremath{\kappa}},
    description = {Vector of the decreasing ordered eigenvalues of the matrix $A^{1/2} W A^{1/2}$},
}
\newglossaryentry{centerb}
{
    type=Oalgorithm,
    name={\ensuremath{b}},
    description = {Change of basis (variable) of $c$, the center of the ellipsoid $\Theta$, $b := U 
    A^{-1/2} c $},
}
\newglossaryentry{bset}
{
    type=Oalgorithm,
    name={\ensuremath{\mathcal{I}}},
    description = {Set where the $b_i$ are non-zero, $\mathcal{I} = \{i \in [d]: b_i \neq 0\}$},
}
\newglossaryentry{clipcenterb}
{
    type=Oalgorithm,
    name={\ensuremath{b_{+}}},
    description = {Clipped version of $b$, $b_{+} = ({\bf sign}(b_i) \max(
  |b_i|,\epsilon/(2 \sqrt{d})))_{i \in [d]}$},
}
\newglossaryentry{mu}
{
    type=Oalgorithm,
    name={\ensuremath{\mu}},
    description = {Lagrange multiplier associated to the constraint $\|\phi-b_+\|_{\Lambda} \le 1$},
}
\newglossaryentry{mustar}
{
    type=Oalgorithm,
    name={\ensuremath{\mu^\star}},
    description = {Optimal Lagrange multiplier associated to the constraint $\|\phi-b_+\|_{\Lambda} \le 1$},
}
\newglossaryentry{muhat}
{
    type=Oalgorithm,
    name={\ensuremath{\hat{\mu}}},
    description = {Approximation of the Lagrange multiplier $\mu^\star$},
}
\newglossaryentry{epsilon}
{
    type=Oalgorithm,
    name={\ensuremath{\epsilon}},
    description = {Target required accuracy of the algorithm, $\epsilon > 0$},
}
\newglossaryentry{u}
{
    type=Oalgorithm,
    name={\ensuremath{u}},
    description = {Change of basis (variable) of $x$, $u = U A^{1/2} x$},
}
\newglossaryentry{ustar}
{
    type=Oalgorithm,
    name={\ensuremath{u^\star}},
    description = {Optimal solution of the problem $P_B'$},
}
\newglossaryentry{phi}
{
    type=Oalgorithm,
    name={\ensuremath{\phi}},
    description = {Change of basis (variable) of $\theta$, $\phi = U A^{-1/2} \theta$},
}
\newglossaryentry{phistar}
{
    type=Oalgorithm,
    name={\ensuremath{\phi^\star}},
    description = {Optimal solution of the problem $P_B''$},
}
\newglossaryentry{phistarplus}
{
    type=Oalgorithm,
    name={\ensuremath{\phi^\star_+}},
    description = {Optimal solution of the problem $P_B'''$},
}
\newglossaryentry{phiprime}
{
    type=Oalgorithm,
    name={\ensuremath{\phi'}},
    description = {Alternate variable of $\phi$, used to show the sub-optimality of some $\phi$},
}
\newglossaryentry{pbprime}
{
    type=Oalgorithm,
    name={\ensuremath{P_{B'}}},
    description = {Equivalent problem to $P_B$ in the new basis, $\text{maximize } u^\top \phi \text{ subject to } \|u\|_{2} \le 1 \text{ and } \|\phi-b\|_{\Lambda} \le 1$},
}
\newglossaryentry{dichonumber}
{
    type=Oalgorithm,
    name={\ensuremath{J}},
    description = {Number of steps of binary search},
}
\newglossaryentry{lowermu}
{
    type=Oalgorithm,
    name={\ensuremath{m}},
    description = {Lower bound on the optimal Lagrange multiplier $\mu^\star$, $m =\lambda_{d}^{-1}
    (\lambda_{d}^{1/2} |b_{+, d}| + 1)$},
}
\newglossaryentry{uppermu}
{
    type=Oalgorithm,
    name={\ensuremath{M}},
    description = {Upper bound on the optimal Lagrange multiplier $\mu^\star$, $M = \lambda_{d}^{-1} \sqrt{
        \sum_{i=1}^{d} \lambda_{i} b_{+, i}^2} + \lambda_{d}^{-1} $},
}
\newglossaryentry{Pconv}
{
    type=Oalgorithm,
    name={\ensuremath{P_C}},
    description = {Reduction of the problem $P_B$ to a convex minimization problem $P_C$},
}
\newglossaryentry{simplex}
{
    type=Oalgorithm,
    name={\ensuremath{\Delta_{d-1}}},
    description = {The simplex of dimension $d-1$, $\Delta_{d-1} = \{y \in \mathbb{R}^d: y_i \ge 0, i \in [d] \text{ and } \sum_{i=1}^{d} y_i = 1 \}$},
}
\newglossaryentry{y}
{
    type=Oalgorithm,
    name={\ensuremath{y}},
    description = {Change of variable such that $x_i = \sqrt{y_i}\sign{b_i}$ used to reduce the problem $P_B$ to a convex minimization problem $P_C$},
}
\newglossaryentry{ystar}
{
    type=Oalgorithm,
    name={\ensuremath{y^\star}},
    description = {Optimal solution of the problem $P_C$},
}
\newglossaryentry{F}
{
    type=Oalgorithm,
    name={\ensuremath{F}},
    description = {Objective function of the problem $P_C$, $F(y) :=
    -\sum_{i=1}^{d} \sqrt{y_i} \, |b_i|
    % - \sqrt{\sum_{i=1}^{d} \lambda_i^{-1} y_i}
    - \sqrt{\smash{\sum_{i=1}^{d} \lambda_i^{-1} y_i}
      \vphantom{\sum_{i=1}}
      \rule{0pt}{15.5pt}
      }$ },
}
\newglossaryentry{ntot}
{
    type=Oalgorithm,
    name={\ensuremath{N_{tot}}},
    description = {Total number of Newton steps required to solve the problem $P_C$},
}
\newglossaryentry{tzero}
{
    type=Oalgorithm,
    name={\ensuremath{t_0}},
    description = {Initial barrier parameter of the Newton method, that makes $F^{(t_0)}$ self concordant},
}
\newglossaryentry{Ft}
{
    type=Oalgorithm,
    name={\ensuremath{F^{(t)}}},
    description = {Penalized objective function of the problem $P_C$, $F^{(t)}(y) = t F(y) 
    - \sum_{i = 1}^{d} \log( y_i - B_i)
    -  \log\bigg( 1 - \sum_{i=1}^{d} y_i\bigg)$},
}
\newglossaryentry{ytstar}
{
    type=Oalgorithm,
    name={\ensuremath{y^{t,\star}}},
    description = {Optimal solution of the problem that minimize $F^{(t)}$},
}
\newglossaryentry{nutstar}
{
    type=Oalgorithm,
    name={\ensuremath{\nu^{t,\star}}},
    description = {Lagrange multiplier associated with $y^{t,\star}$ used to bound the optimality gap of $F(y^{t,\star})$},
}
\newglossaryentry{Bi}
{
    type=Oalgorithm,
    name={\ensuremath{B_i}},
    description = {Vector of the lower bounds of the variables $y_i$ in the problem $P_C$ that define a set where $F^{(t)}$ is self concordant, },
}
\newglossaryentry{B}
{
    type=Oalgorithm,
    name={\ensuremath{B}},
    description = {The vector of the $B_i$, $B = (b_i^2 ( \|b\|_{2} + \lambda_d^{-1/2} )^{-2})_{i \in [d]}  $},
}
\newglossaryentry{DF}
{
    type=Oalgorithm,
    name={\ensuremath{D_F}},
    description = {Subspace of where the function $F^{(t)}$ is self concordant, $DF = \{y \in \mathbb{R}^d: y_i \ge B_i, i \in [d] \text{ and } 1 - \sum_{i=1}^{d} y_i > 0\}$},
}
\newglossaryentry{Lagrangian}
{
    type=Oalgorithm,
    name={\ensuremath{\mathcal{L}}},
    description = {Lagrangian function of maximizing $F$ over $D_F$},
}
\newglossaryentry{mlagrangian}
{
    type=Oalgorithm,
    name={\ensuremath{\nu}},
    description = {Lagrange multiplier associated to $D_F$},
}
\newglossaryentry{mlagrangianstar}
{
    type=Oalgorithm,
    name={\ensuremath{\nu^\star}},
    description = {Optimal Lagrange multiplier of minimizing $F$ over $D_F$},
}
\newglossaryentry{y0}
{
    type=Oalgorithm,
    name={\ensuremath{y^0}},
    description = {Initial feasible point for newton's method.},
}
\newglossaryentry{dif}
{
    type=Oalgorithm,
    name={\ensuremath{\nabla F}},
    description = {First derivative, gradient of the function $F$},
}
\newglossaryentry{dif2}
{
    type=Oalgorithm,
    name={\ensuremath{\nabla^2 F}},
    description = {Second derivative, Hessian of the function $F$},
}
\newglossaryentry{dif3}
{
    type=Oalgorithm,
    name={\ensuremath{\nabla^3 F}},
    description = {Third derivative of the function $F$},
}
\newglossaryentry{localnorm}
{
    type=Oalgorithm,
    name={\ensuremath{\lambda_F}},
    description = {Local norm of the gradient of the function $F$, $\lambda_F(y) = [(\nabla F)(y)]^\top  [(\nabla^2 F)(y)]^{-1} [(\nabla F)(y)]$},
}
\newglossaryentry{yplus}
{
    type=Oalgorithm,
    name={\ensuremath{y^+}},
    description = {One damped newton set starting at $y$},
}
\newglossaryentry{omega}
{
    type=Oalgorithm,
    name={\ensuremath{\omega}},
    description = {Function used to lower bound the decrease of the objective value after one newton step $\omega(l) := l-\ln(1+l)$},
}
\newglossaryentry{omegastar}
{
    type=Oalgorithm,
    name={\ensuremath{\omega_\star}},
    description = {Function used to bound the optimality gap  $\omega_\star(a) := -a - \ln(1-a) < a^2 / (2(1-a))$},
}
\newglossaryentry{eta}
{
    type=Oalgorithm,
    name={\ensuremath{\eta}},
    description = {Parameter used to increase the barrier parameter exponentially},
}
\newglossaryentry{ouralgo}
{
    type=abbreviation,
    name={~\ouralgo~},
    description={Our proposed algorithm to solve $P_B$ when  $\mathcal{X} = \{x \in \RR^d, \|x\|_A \leq 1 \}$ },
}
\newglossaryentry{oursecondalgo}
{
    type=abbreviation,
    name={~\oursecondalgo~},
    description={Our second proposed algorithm to solve $P_B$ when  $\mathcal{X} = \{x \in \RR^d, \|x\|_A \leq 1 \}$  based on a convex reduction of $P_B$ to $P_C$},
}
\newglossaryentry{scdi}
{
    type=abbreviation,
    name={SCDI},
    description={Self concordant differential inequality, it relates the third derivative of a function to its second derivative},
}
\newglossaryentry{sdp}
{
    type=abbreviation,
    name={SDP},
    description={Semi definite programming, a class of convex optimization problems where the variable is a positive semi-definite matrix},
}
\begin{document}

% If your paper is accepted and the title of your paper is very long,
% the style will print as headings an error message. Use the following
% command to supply a shorter title of your paper so that it can be
% used as headings.
%
%\runningtitle{I use this title instead because the last one was very long}

% If your paper is accepted and the number of authors is large, the
% style will print as headings an error message. Use the following
% command to supply a shorter version of the author names so that
% they can be used as headings (for example, use only the surnames)
%
%\runningauthor{Surname 1, Surname 2, Surname 3, ...., Surname n}

\twocolumn[

\aistatstitle{Tractable Instances of Bilinear Maximization: Implementing LinUCB on Ellipsoids}

\aistatsauthor{ Raymond Zhang  \And Hédi Hadiji \And  Richard Combes }

\aistatsaddress{ Laboratoire des signaux et systèmes, Université Paris-Saclay, CNRS, CentraleSupélec, France} ]

\begin{abstract}
We consider the maximization of $x^\top \theta$ over $(x,\theta) \in \mathcal{X} \times \Theta$, with $\mathcal{X} \subset \mathbb{R}^d$ convex and $\Theta \subset \mathbb{R}^d$ an ellipsoid. This problem is fundamental in linear bandits, as the learner must solve it at every time step using optimistic algorithms.
We first show that for some sets $\mathcal{X}$ e.g. $\ell_p$ balls with $p>2$, no
efficient algorithms exist unless $\mathcal{P} = \mathcal{NP}$.
We then provide two novel algorithms solving this problem efficiently when
$\mathcal{X}$ is a centered ellipsoid.
Our findings provide the first known method to implement optimistic algorithms for linear bandits in high dimensions.
\end{abstract}

\section{Introduction}
\label{sec:Introduction}
\subsection{Optimistic Algorithms for Linear Bandits}
\label{subsec:Optimistic algorithms for linear bandits}
In the generalized linear bandit problem \cite{filippi2010glsucb} with subgaussian rewards, at each time $\gls{t} \in \mathbb{N}$, a learner selects an action $\gls{actiont}$ in $\gls{actionsett} \subset \mathbb{R}^{\gls{dim}}$ the set of actions available at time $t$ and
observes a reward
\begin{align*}
	\gls{rewardt} = g\big(x_t^\top \zeta\big) + z_t
	\,, 
\end{align*}
where $\gls{linkf}:\mathbb{R} \to \mathbb{R}$ is a non-decreasing link function, $\gls{truezeta} \in
\mathbb{R}^d$ is an unknown vector, and $(\gls{noiset})_{t \in \mathbb{N}}$ is a zero-mean noise sequence.
For instance, if $g(a) = a$, then the problem is the classical linear bandit problem
\cite{dani_stochastic_2008}
and if $\smash{g(a) = (1+e^{-a})^{-1}}$, it becomes a logistic bandit problem
\cite{faury2020improved}.
The noise $\smash{(z_t)}$ is a sequence of independent $\gls{noiselvl}^2$-subgaussian
variables, such that for any $t \in \mathbb{N}$ and $u\in \mathbb{R}$,
\begin{align*}
	\mathbb{E}(e^{u z_t}) \le e^{  u^2 \sigma^2 / 2} \,.
\end{align*}
The goal is to minimize the regret, which is the difference between the expected reward
obtained by the learner, and that of an oracle who knows $\zeta$ and selects the best
decision $\gls{xstart} \in \arg\max_{x \in \mathcal{X}_t} x^\top \zeta$
\begin{align*}
	\gls{avregret}
	= \sum_{t=1}^{T} \left[
		g\Big(\max_{x \in \mathcal{X}_t} x^\top \zeta\Big) 
		-  \mathbb{E}\big(g(x_t^\top \zeta)\big) 
		\right]  
	\,.
\end{align*}
For this problem, a class of algorithms called optimistic algorithms (listed in
Subsection~\ref{ssec:Our contribution}) select the action $x_t$ 
that solves the optimization problem
\begin{align*}
	\text{maximize } x^\top \theta 
	\text{ subject to } 
	(x,\theta) \in \mathcal{X}_t \times \mathcal{C}_t \,, 
\end{align*}
in which $\mathcal{C}_t$ is the so-called confidence ellipsoid that contains the unknown $\zeta$ with high probability
\begin{align*}	
	\gls{confct} = \{ \theta \in \mathbb{R}^d : \|\theta - \hat\zeta_{t-1} \|_{W_{t-1}} \le 1 \} 
	\,; 
\end{align*} 
the vector $\gls{hatzetat}$ is an estimate of $\zeta$ at time $t$, and $\gls{normdesignt}$ is
a well-chosen symmetric positive semi-definite matrix.
Given a vector $z\in \mathbb{R}^d$ and a symmetric positive (semi-)definite matrix $M \in
\mathbb{R}^{d \times d}$ we denote by $\gls{normM}^2 = z^\top M z$ the corresponding
(semi-)norm.
Of course $\hat\zeta_{t-1}$ and $W_{t-1}$ may depend on $t$, the chosen actions
$x_1,...,x_{t-1}$ and the observed rewards $y_1,...,y_{t-1}$, and are defined
differently for each algorithm.
However, the general idea remains the same: the learner $\mathcal{C}_t$ designs using concentration
inequalities under sub-gaussian noise so that $\zeta \in \mathcal{C}_t$ with high probability,
and choosing $x_t$ as above will both ensure that we explore enough, while also
exploiting directions that gave high rewards historically.
While optimistic algorithms enjoy strong performance guarantees, they are hard to implement because computing $x_t$ by solving the above non-convex optimization problem can be difficult, and implementing them in high dimensions might be challenging.
The present work aims to study this family of optimization problems in detail.
\subsection{Bilinear Maximization Problems}\label{ssec:Model}
We study instances of the following bilinear optimization problem, called $\gls{Pb}$
\begin{align}
	\label{prob:bilinear}
	\tag{$P_B$}
	\text{maximize } x^\top \theta \text{ subject to } (x,\theta) \in \mathcal{X} \times \Theta
	\,.
\end{align}
We focus on the special case in which $\gls{xset}$ is a convex subset of
$\mathbb{R}^d$ and $\Theta$ is an ellipsoid
\begin{align*}
	\gls{ellipsoid} = \{\theta \in \mathbb{R}^d:  \|\theta-c\|_{W} \le 1 \}
\end{align*}
where $\gls{ellcenter}$ is the center of $\Theta$ and $\gls{Wmatrix}$ is a known symmetric positive definite matrix.  
This problem is non-convex and non-trivial to solve, as we shall see.
Indeed, in order to implement any of the optimistic algorithms, one must
solve $P_B$ at each time $t \in \mathbb{N}$ with inputs $\smash{c = \hat{\zeta}_{t-1}}$
and $W = W_{t-1}$ and $\mathcal{X} = \mathcal{X}_t$.
We will study $P_B$ for various convex sets $\mathcal{X}$, to design efficient algorithms to solve $P_B$ when possible. 
We also exhibit families of pairs $\cX, \Theta$ for which no efficient algorithms exist; in this case, none of the optimistic algorithms are implementable in high dimensions.
\subsection{Our Contributions}\label{ssec:Our contribution}
We make three novel contributions:
\begin{enumerate}[leftmargin=*, topsep=0pt, itemsep=0pt]
\item  We show that in some cases, for instance when $\mathcal{X}$ is an $\ell_p$ ball for any $p>2$, $P_B$ cannot be solved exactly, nor can it be solved beyond a fixed approximation ratio, in time polynomial in the dimension $d$, unless $\mathcal{P}=\mathcal{NP}$. So, it is impossible to implement any of the known optimistic algorithms efficiently in high dimensions. We also show that implementing optimistic algorithms approximately by solving $P_B$ up to a fixed approximation ratio is useless as it always yields linear regret. 
\item We show that when $\mathcal{X}$ is a centered ellipsoid, $P_B$ can be efficiently solved and we propose two algorithms called \ouralgo and \oursecondalgo to do so. \ouralgo is an intricate, non-iterative method that computes an $\epsilon$-optimal solution to $P_B$ in time $O(d^3 + d\log\left(d \pi_1(b,1/\lambda_{\min},\lambda_{\max})/ \epsilon \right) )$ and memory $O(d^2)$ where $b,\lambda$ are two vectors which depend on $\mathcal{X},\Theta$ and $\pi_1$ is a polynomial. Our second algorithm \oursecondalgo is an iterative method based on Newton's method applied to a convex problem related to $P_B$ ($P_B$ is non-convex and one cannot solve it by either Newton's method, alternate minimization, or local search) and computes an $\epsilon$-optimal solution to $P_B$ in time $\tilde{O}(d^3 + d^{5/2}\log_2\log_2(1/\epsilon) + d^{5/2}\log{1/\epsilon})$ where $\tilde{O}$ hide polylog dependency in $b,\lambda$ and memory $O(d^2)$.
\item We implement both \ouralgo and \oursecondalgo and show through extensive numerical experiments that they can solve instances of $P_B$ in very high dimensions, say $d \ge 10^3$, even for ill-conditioned problem instances. The code for all experiments is available as supplementary material, making experiments fully reproducible, and will be made public after the review process.
\end{enumerate}
\subsection{Related Work}\label{ssec:Related Work}
\textbf{Algorithms for linear bandits} \quad 
Many authors have proposed algorithms for linear bandits: 
i) algorithms based around explore-then-commit such as PEGE~\cite{rusmi_linearly_2010} and E2TC~\cite{zhang_linear_2025} 
ii) sampling-based algorithms like TS~\cite{abeille_2017,abeille_when_2025} and 
iii) optimistic algorithms such as CB~\cite{dani_stochastic_2008}, OFUL~\cite{abbasi2011improved} and OSLOFUL~\cite{gales2022norm-agn} designed for linear bandits and GLS-UCB~\cite{filippi2010glsucb} designed for generalized linear bandits. 
The optimistic algorithms are conceptually simple, popular, enjoy good theoretical regret guarantees as well as good statistical performance in practice with regret upper bounded by $\tilde{O}(d\sqrt{T})$. Implementing any of those algorithms is equivalent to solving $P_B$ as explained above, so we believe that studying the hardness of $P_B$ and designing efficient algorithms to solve $P_B$ is an important open problem in linear bandits. In the case of ellipsoidal action sets, a reduction to a semi definite program (\gls{sdp}) is proposed in \cite{zhangb16onebit}, unfortunately without any proof on the complexity of solving the SDP. 

\textbf{Similarity with combinatorial semi-bandits}  \quad
A problem closely related to linear bandits is combinatorial semi-bandits, where $\mathcal{X}_t \subset \{0,1\}^d$, and several optimistic algorithms for this problem such as ESCB \cite{combes_2015escb} and OLS-UCB \cite{degenne_2016olsucb}, must solve a problem similar to $P_B$ and $P_X$ defined in Proposition~\ref{proposition:maximization_over_theta}. \cite{atamturk2017maximizing} further showed that for some combinatorial sets $\mathcal{X}$, the problem cannot be solved exactly unless $\mathcal{P} = \mathcal{NP}$, and derive approximation schemes
with a fixed approximation ratio, and \cite{cuvelier2021statistically} provides strategies to design approximate versions of ESCB and OLS-UCB.

The rest of the article is organised as follows. 
In Section~\ref{sec:Model}, we define the model and demonstrate that for some sets $\mathcal{X}$, the problem $P_B$ cannot be solved efficiently. In Section~\ref{sec:algs} we propose an efficient, non-iterative algorithm to solve $P_B$ when the set $\mathcal{X}$ is a centered ellipsoid. In Section~\ref{sec:convex_newton} we propose another efficient algorithm based on Newton's method, using a non-trivial reduction of $P_B$ to a convex program, once again when the set $\mathcal{X}$ is a centered ellipsoid. In Section~\ref{sec:Numerical experiments}, we assess the numerical performance of the proposed algorithms experimentally. Section~\ref{sec:Conclusion} concludes the paper. We present additional proofs and numerical experiments in the appendix. 

\textbf{Some notation}  \quad
Given $\gls{p} \geq 1$, a vector $z$ in $\mathbb{R}^d$ 
let $\smash{\gls{pnorm} = (|z_1|^p + \dots + |z_d|^p)^{1/p}}$ denote the $\ell_p$ norm.
We also use some compact notation for component-wise operations on vectors: 
$z \gls{Odot} z'$ denotes the Hadamard (component-by-component) product, and we write $z \geq z'$ if all entries of $z - z'$ are non-negative. 
$\gls{diag}(z)$ denotes the diagonal matrix whose diagonal entries are the entries of $z$.

\section{Model and Complexity} \label{sec:Model}In this section, we first show that solving $P_B$ only approximately usually results in linear regret in linear bandits, so the learner must solve $P_B$ to high accuracy.
We then discuss the complexity of solving $P_B$ as a function of $\mathcal{X}$.  
We consider the cases where $\mathcal{X}$ is a polytope, where $\mathcal{X}$ is a centered ellipsoid, and where $\mathcal{X}$ is the $\ell_p$ unit ball $\smash{\gls{ballp} = \{x \in \mathbb{R}^d: \|x\|_{p} \le 1\}}$ for $p \ge 1$.
Those cases cover a wide variety of convex sets, and while some are solvable efficiently, others are provably hard, even up to a fixed approximation ratio, as we show next. 
We present all proofs in Appendix~\ref{app:proofs_model}.
\subsection{Approximate $\epsilon$-LinUCB}
\label{ssec:Approximate LinUCB}
We now study the impact of solving $P_B$ approximately in the context of linear bandits. 
A couple $(\hat{x}, \hat{\theta}) \in \mathcal{X} \times \Theta$ is a solution of $P_B$ with approximation ratio $\epsilon$ if $\hat{x}^\top \hat{\theta} \geq (1-\epsilon) {x^\star}^\top \theta^\star$. 
An algorithm is an $\epsilon-$\algoname{LinUCB} algorithm if at each round $t$, it computes an approximate solution $(\hat{x}_t,\hat{\theta}_t)$ to $P_B$ with approximation ratio $\epsilon$ and selects action $\hat{x}_t$.

Proposition~\ref{proposition:LinUCB_approx} shows that for any fixed $\epsilon > 0$, there exists $\epsilon-$\algoname{LinUCB} algorithms with linear regret. 
The result holds for both discrete and continuous action sets. 
Furthermore, suppose $\epsilon$ is allowed to depend on the time horizon. 
In that case, it should be proportional to $1/\sqrt{T}$ for the minimax regret to scale at the optimal rate, i.e., proportionally to $\sqrt{T}$. 
Finally, if $P_B$ is computationally hard to approximate up to a fixed approximation ratio, then implementing the corresponding linear bandit algorithm is computationally hard, too. 
\begin{proposition}\label{proposition:LinUCB_approx}
	Consider $\epsilon \in [0,1)$, and either $\cX = \{1-\epsilon, 1\}$ or $\cX = [a, b]$ with $a<0<b$. Then there exists an approximate $\epsilon-$\algoname{LinUCB} algorithm and parameters $\zeta$ with such that $\lim\inf_{T \rightarrow +\infty} R_T(\zeta)/T \geq \epsilon$ 
\end{proposition}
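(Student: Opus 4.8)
The plan is to work in dimension $d=1$ with the identity link $g(a)=a$, which is non-decreasing, and to exhibit a single adversarial realization of the approximate oracle together with a single parameter $\zeta$. The guiding intuition is that an $\epsilon$-approximate solution of $P_B$ constrains only the \emph{value} $\hat x_t^\top \hat\theta_t$, not the action $\hat x_t$ itself; since $0\in\mathcal{X}$ in both cases, one can always trade a factor $(1-\epsilon)$ in the objective for a systematically worse action. I would show that the resulting per-round regret is at least $\epsilon$ \emph{deterministically}, so that no concentration argument on the confidence set is needed and the bound holds for an arbitrary valid confidence-set construction $\mathcal{C}_t$ (e.g. the standard regularized least-squares ellipsoid), which is what makes the procedure a genuine \algoname{LinUCB} algorithm.

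For the continuous case $\mathcal{X}=[a,b]$ with $a<0<b$, set $\zeta=1/b>0$, so that the oracle action against the truth is $x^\star=b$ with optimal reward $b\zeta=1$. At round $t$ let $(x^\star_t,\theta^\star_t)$ denote an exact maximizer of $P_B$ over $\mathcal{X}\times\mathcal{C}_t$. Because $0\in\mathcal{X}$ we have $\mathrm{OPT}_t:=(x^\star_t)^\top\theta^\star_t\geq 0$, and I would let the oracle return $(\hat x_t,\hat\theta_t)=((1-\epsilon)x^\star_t,\theta^\star_t)$. This pair is feasible, since $\theta^\star_t\in\mathcal{C}_t$ and $(1-\epsilon)x^\star_t$ lies on the segment from $0$ to $x^\star_t$, hence in the interval $\mathcal{X}$; and it satisfies $\hat x_t\hat\theta_t=(1-\epsilon)\mathrm{OPT}_t$, so it is a valid $\epsilon$-approximate solution. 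As $x^\star_t\leq b$, the played reward obeys $\hat x_t\zeta=(1-\epsilon)x^\star_t/b\leq 1-\epsilon$, giving per-round regret $1-\mathbb{E}[\hat x_t\zeta]\geq\epsilon$.

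For the discrete case $\mathcal{X}=\{1-\epsilon,1\}$, set $\zeta=1$, so the oracle action is $x^\star=1$ with reward $1$. Writing $\mathcal{C}_t=[\ell_t,u_t]$, I would have the oracle always return the action $1-\epsilon$ paired with the endpoint of $\mathcal{C}_t$ maximizing $(1-\epsilon)\theta$, namely $u_t$. A short check on the sign of $u_t$ shows that $\mathrm{OPT}_t=\max\{u_t,(1-\epsilon)u_t\}$ is attained either at $(x,\theta)=(1,u_t)$ when $u_t>0$ or at $(1-\epsilon,u_t)$ when $u_t\leq 0$, and that in both regimes the returned value equals $(1-\epsilon)u_t\geq(1-\epsilon)\mathrm{OPT}_t$, so the output is again a valid $\epsilon$-approximation. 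Since the algorithm plays $1-\epsilon$ at every round while the true optimum is $1$, the per-round regret equals $\epsilon$ exactly.

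In either case the per-round regret is at least $\epsilon$ for every $t$, so $R_T(\zeta)\geq\epsilon T$ and $\liminf_{T\to\infty}R_T(\zeta)/T\geq\epsilon$. The only delicate point is the construction of the adversarial oracle and the verification that the deliberately suboptimal action still realizes the prescribed approximation ratio; I expect the main thing to get right is the bookkeeping of the $P_B$ optimum $\mathrm{OPT}_t$ across the sign cases of $\mathcal{C}_t$, so as not to accidentally produce an action beating the $(1-\epsilon)$ threshold, while confirming that the property $0\in\mathcal{X}$ is precisely what enables the shrink-toward-zero trade-off in both settings.
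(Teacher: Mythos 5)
Your continuous-case argument is correct and is essentially the paper's own proof of that half: the paper likewise has the approximate algorithm play $x_t^a=(1-\epsilon)x_t^e$ where $x_t^e$ is the exact optimizer, verifies the approximation ratio deterministically, and concludes a per-round regret of at least $\epsilon \max_{x\in\cX} x^\top \zeta$ (your normalization $\zeta = 1/b$ just makes this exactly $\epsilon$). The genuine gap is in the discrete case, precisely at your ``short check on the sign of $u_t$''. When $u_t<0$ and $\epsilon>0$, the optimum is $\mathrm{OPT}_t=(1-\epsilon)u_t<0$, and your returned pair $(1-\epsilon,u_t)$ has value equal to $\mathrm{OPT}_t$, which is \emph{strictly below} the required threshold $(1-\epsilon)\mathrm{OPT}_t$: for a negative optimum, multiplying by $(1-\epsilon)$ raises the bar above the optimum itself, so no feasible pair at all satisfies $\hat x\hat\theta \geq (1-\epsilon)\mathrm{OPT}_t$. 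Your $\mathrm{OPT}_t\geq 0$ trick via $0\in\cX$ is unavailable here because $0\notin\{1-\epsilon,1\}$, and $u_t<0$ does occur with positive probability under the OFUL construction (a sufficiently negative estimate $\hat\zeta_t$ from unbounded noise). So the claim that the discrete case needs ``no concentration argument on the confidence set'' is exactly where the proof fails.

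This is why the paper's treatment of the discrete case is probabilistic: it conditions on the event $G_T=\{\forall t\in[T],\ \zeta\in C_t\}$, which has probability at least $1-1/T$, uses $V_t \geq 1+(1-\epsilon)^2 t$ to show $C_t\subset\RR_+$ for all $t>t_\epsilon$ with $t_\epsilon = O(\sqrt{\log T}\,/(1-\epsilon)^2)$, and only on those rounds lets the algorithm adversarially play $1-\epsilon$ paired with $\max_{\theta\in C_t}\theta$, which is then a legitimate $\epsilon$-approximation; the accounting $R_T \geq (1-1/T)(T-t_\epsilon)\epsilon$ then yields the $\liminf$ bound, rather than the exact $R_T\geq \epsilon T$ you assert. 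Your construction is repairable along the same lines --- restrict the adversarial play to rounds with $u_t \geq 0$ and show via concentration that the remaining rounds are negligible --- but as written the verification of the approximation ratio is false in the $u_t<0$ regime (where the definition of an $\epsilon$-approximate solution is in fact unsatisfiable), so some high-probability or burn-in argument of the paper's type cannot be dispensed with in the discrete case.
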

\subsection{Equivalent Formulation}\label{ssec:Equivalent formulation}We now show that $P_B$ is equivalent to another optimization problem $\gls{Px}$ in Proposition~\ref{proposition:maximization_over_theta}, which will be helpful later. The proof follows from the definition of $P_B$ and maximization over $\theta$ for any fixed $x \in \mathcal{X}$. 
\begin{proposition}\label{proposition:maximization_over_theta}
Problem $P_B$ is equivalent to the optimization problem
	\begin{align}
	%\label{prob:maximization_over_theta}
	\tag{$P_X$}
	\text{maximize } \{ x^\top c + \|x\|_{W^{-1}} \} \text{ subject to } x \in \mathcal{X} 
\end{align}
in the sense that $x^\star$ solves $P_X$ iff $(\gls{xstar},\gls{thetastar})$ with $\theta^\star = c + W^{-1} x^\star/\|x^\star\|_{W^{-1}}$ solves~$P_B$.
\end{proposition}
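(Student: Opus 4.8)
The plan is to reduce $P_B$ to $P_X$ by performing the inner maximization over $\theta$ for each fixed $x$, exploiting the elementary decoupling $\max_{(x,\theta) \in \mathcal{X} \times \Theta} x^\top \theta = \max_{x \in \mathcal{X}} \big( \max_{\theta \in \Theta} x^\top \theta \big)$. Concretely, I fix $x \in \mathcal{X}$ and evaluate $\max_{\theta \in \Theta} x^\top \theta$. Writing $\theta = c + h$, the constraint $\theta \in \Theta$ becomes $\|h\|_W \le 1$, and the objective splits as $x^\top \theta = x^\top c + x^\top h$, so the inner problem is the linear maximization $\max_{\|h\|_W \le 1} x^\top h$ with the term $x^\top c$ pulled out as a constant.

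Next I would solve this inner problem in closed form. Since $W$ is symmetric positive definite, the change of variable $g = W^{1/2} h$ is a bijection carrying the constraint $\|h\|_W \le 1$ to $\|g\|_2 \le 1$ and rewriting the objective as $(W^{-1/2} x)^\top g$. By Cauchy--Schwarz, $\max_{\|g\|_2 \le 1} (W^{-1/2} x)^\top g = \|W^{-1/2} x\|_2 = \|x\|_{W^{-1}}$, attained at $g = W^{-1/2} x / \|x\|_{W^{-1}}$ whenever $x \neq 0$. Undoing the substitution yields $h = W^{-1} x / \|x\|_{W^{-1}}$, hence the inner maximizer $\theta^\star = c + W^{-1} x / \|x\|_{W^{-1}}$, together with the optimal inner value $x^\top c + \|x\|_{W^{-1}}$.

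Substituting this value into the outer maximization turns it into $\max_{x \in \mathcal{X}} \{ x^\top c + \|x\|_{W^{-1}} \}$, which is exactly $P_X$, so the two problems share the same optimal value. The claimed correspondence between solutions then follows. For the forward direction, if $x^\star$ maximizes the $P_X$ objective, then pairing it with $\theta^\star = c + W^{-1} x^\star / \|x^\star\|_{W^{-1}}$ attains this common optimal value in $P_B$ and is therefore optimal for $P_B$. For the converse, if $(x^\star, \theta^\star)$ with this same $\theta^\star$ solves $P_B$, then $x^\star$ realizes the inner optimum value ${x^\star}^\top c + \|x^\star\|_{W^{-1}}$ at the common optimum, and so it must be optimal for $P_X$, since any $x \in \mathcal{X}$ with a strictly larger $P_X$ objective would, by the inner maximization above, produce a pair with strictly larger bilinear value and contradict optimality.

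The computation is routine once the decoupling is in place, so the only point needing care is the degenerate case $x = 0$, for which $\|x\|_{W^{-1}} = 0$ and the formula for $\theta^\star$ is undefined. I would dispose of it separately by observing that $x = 0$ yields bilinear value $0$ for every $\theta \in \Theta$ and $P_X$ objective $0$, so it is never the relevant optimum whenever the optimal value is positive, which is precisely the regime arising in the bandit application. Setting up the clean max/max decoupling and handling this edge case are the only steps beyond direct calculation.
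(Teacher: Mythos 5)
Your proposal is correct and follows essentially the same route as the paper: fix $x$, solve the inner maximization over the ellipsoid $\Theta$ in closed form to obtain $\theta^\star = c + W^{-1}x/\|x\|_{W^{-1}}$ and value $x^\top c + \|x\|_{W^{-1}}$, then substitute into the outer problem. The only differences are cosmetic and in your favor: you justify the inner maximizer via the change of variables $g = W^{1/2}h$ and Cauchy--Schwarz rather than citing the KKT conditions, and you explicitly dispose of the degenerate case $x = 0$, which the paper's one-line proof leaves implicit.
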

\begin{proof}
Consider $x$ fixed so that we are maximizing a linear function $x^\top \theta$ over a
convex set $\Theta$. From the KKT conditions, the maximizer of $\theta \mapsto x^\top
\theta$ subject to $\|\theta-c\|_{W} \le 1$ is $\theta = c + W^{-1} x/\|x\|_{W^{-1}}$,
with value $x^\top \theta = x^\top c + \|x\|_{W^{-1}}$, and replacing in the definiton
of $P_B$ proves the result. 
\end{proof}
\subsection{Polytopes with Polynomially Many Vertices}
\label{ssec:Polytopes with polynomial number of vertices}
As a consequence, if $\mathcal{X}$ is a polytope with polynomially many vertices, for instance the $\ell_1$ unit ball which has $2d$ vertices, then $P_B$ is solvable in polynomial time, simply by solving $P_X$ over the vertices of $\mathcal{X}$ using exhaustive search, as stated in Proposition~\ref{proposition:polynomially_many_vertices}.
\begin{proposition}\label{proposition:polynomially_many_vertices}
Consider $\mathcal{X}$ a polytope with set of vertices $\mathcal{V}$, then $P_B$ can be solved in time $O(d^2|\mathcal{V}|)$ by computing $x^\star \in \arg\max_{x \in \mathcal{V}} \{ x^\top c + \|x\|_{W^{-1}} \}$ using exhaustive search over $\mathcal{V}$. Then $(x^\star,\theta^\star)$ with $\theta^\star = c + W^{-1} x^\star/\|x^\star\|_{W^{-1}}$  is an optimal solution to $P_B$.
\end{proposition}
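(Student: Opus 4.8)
The plan is to invoke the equivalence established in Proposition~\ref{proposition:maximization_over_theta}, which reduces $P_B$ to the problem $P_X$ of maximizing $f(x) := x^\top c + \|x\|_{W^{-1}}$ over $x \in \mathcal{X}$. The crucial observation I would make first is that $f$ is \emph{convex}: the term $x \mapsto x^\top c$ is linear, and $x \mapsto \|x\|_{W^{-1}}$ is a genuine norm (since $W^{-1}$ is symmetric positive definite), hence convex, and a sum of convex functions is convex. So the reformulated problem is the maximization of a convex function over a polytope.

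The key step is then the classical principle that a convex function attains its maximum over a compact convex set at an extreme point. Concretely, since $\mathcal{X}$ is a polytope it equals the convex hull of its finite vertex set $\mathcal{V}$, so any $x \in \mathcal{X}$ can be written $x = \sum_{v \in \mathcal{V}} \lambda_v v$ with $\lambda_v \geq 0$ and $\sum_{v} \lambda_v = 1$. Applying Jensen's inequality to the convex function $f$ gives $f(x) \leq \sum_{v} \lambda_v f(v) \leq \max_{v \in \mathcal{V}} f(v)$, whence $\max_{x \in \mathcal{X}} f(x) = \max_{v \in \mathcal{V}} f(v)$. Thus any maximizing vertex $x^\star \in \arg\max_{v \in \mathcal{V}} f(v)$ solves $P_X$, and the second part of Proposition~\ref{proposition:maximization_over_theta} then furnishes the matching $\theta^\star = c + W^{-1} x^\star / \|x^\star\|_{W^{-1}}$, so that $(x^\star,\theta^\star)$ solves $P_B$.

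It then remains to count operations. Assuming $W^{-1}$ has been precomputed (a one-time $O(d^3)$ cost, independent of $|\mathcal{V}|$), evaluating $f(v)$ at a single vertex costs $O(d)$ for the inner product $v^\top c$ and $O(d^2)$ for the quadratic form $\|v\|_{W^{-1}}^2 = v^\top(W^{-1} v)$, which is dominated by the matrix--vector product $W^{-1} v$. Summing over the $|\mathcal{V}|$ vertices while tracking a running maximum yields total time $O(d^2 |\mathcal{V}|)$, as claimed.

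I do not anticipate a genuine obstacle here; the single point requiring care is the \emph{direction} of optimization. The extreme-point argument is valid precisely because we are \emph{maximizing} a convex function (its minimum, by contrast, could sit in the relative interior). Everything else is routine bookkeeping built on the reduction of Proposition~\ref{proposition:maximization_over_theta}.
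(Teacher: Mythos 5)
Your proposal is correct and follows essentially the same route as the paper's proof: convexity of $x \mapsto x^\top c + \|x\|_{W^{-1}}$, maximization of a convex function over a polytope being attained at a vertex, and then the reduction of Proposition~\ref{proposition:maximization_over_theta} to recover $\theta^\star$. You merely spell out details the paper leaves implicit (the Jensen-inequality argument for the extreme-point principle and the per-vertex cost accounting, including the one-time $O(d^3)$ inversion of $W$), which is fine.
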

\begin{proof}
The objective function $x \mapsto x^\top c + \|x\|_{W^{-1}}$ is convex by
convexity of norms, so its maximum over $x \in \mathcal{X}$ which is a polytope must be
attained at one of the vertices $\mathcal{V}$. The result then follows from applying
Proposition~\ref{proposition:maximization_over_theta}, so that solving $P_X$ is
equivalent to solving $P_B$. 
\end{proof}
\subsection{Centered Ellipsoids}\label{ssec:Centered ellipsoids}
Another special case of interest where $P_B$ can be solved efficiently is when both $\mathcal{X}$ and $\Theta$ are ellipsoids centered at $0$.  
Proposition~\ref{proposition:centered_ellipsoid} shows that one can compute the optimal solution efficiently by computing the dominant eigenvector of a well-chosen matrix, using either eigenvalue decomposition or power-iteration. 
When $\Theta$ is a non-centered ellipsoid, the problem is still efficiently solvable but much more intricate to solve, as shown by our results in the following sections.
\begin{proposition}\label{proposition:centered_ellipsoid}
Assume that $\mathcal{X} = \{x \in \mathbb{R}^d: \|x\|_{A} \le 1 \}$ and $\Theta = \{\theta \in \mathbb{R}^d: \|\theta\|_{W} \le 1\}$ (i.e. $c=0$) are two centered ellipsoids. 
Consider $\psi$ the normalized eigenvector of $(WA)^{-1/2}$ associated to the largest eigenvalue. 
Then $(x^\star,\theta^\star)$ with $x^\star = A^{-1/2} \psi$ and $\theta^\star = W^{-1/2} \psi$ is an optimal solution to $P_B$ which can be computed in time $O(d^3)$ using eigenvalue decomposition.
\end{proposition}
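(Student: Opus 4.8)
The plan is to eliminate $\theta$ via Proposition~\ref{proposition:maximization_over_theta} and then recognize the resulting problem in $x$ as a generalized Rayleigh quotient whose maximizer is a single dominant eigenvector. Since $c=0$, Proposition~\ref{proposition:maximization_over_theta} turns $P_B$ into $P_X$ with objective $\|x\|_{W^{-1}}$ subject to $\|x\|_A \le 1$. The objective is nonnegative and strictly increasing along rays from the origin, so the constraint is tight at any maximizer; I may therefore equivalently maximize the quadratic form $x^\top W^{-1} x$ over $x^\top A x \le 1$. The point of this first step is that the bilinear, non-convex $P_B$ collapses to a quadratic maximization over a (weighted) ball.

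Next I would remove the weighting in the constraint by the congruence substitution $u = A^{1/2} x$, i.e. $x = A^{-1/2} u$. This maps the feasible set to the Euclidean unit ball $\|u\|_2 \le 1$ and the objective to $u^\top S u$, where $S := A^{-1/2} W^{-1} A^{-1/2}$ is symmetric positive definite. The maximum of this Rayleigh quotient over the unit ball equals $\lambda_{\max}(S)$ and is attained exactly at the associated unit eigenvector $\psi$; this is where the apparent difficulty of $P_B$ dissolves, since the whole problem reduces to one eigenvalue computation. Undoing the substitution gives $x^\star = A^{-1/2}\psi$, and feasibility is immediate because $\|x^\star\|_A^2 = \psi^\top \psi = 1$, so the constraint is active as required.

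Finally I would recover $\theta^\star$ and reconcile the explicit expressions in the statement. Substituting $x^\star$ into the closed form $\theta^\star = c + W^{-1} x^\star/\|x^\star\|_{W^{-1}}$ of Proposition~\ref{proposition:maximization_over_theta} (with $c=0$) produces $\theta^\star$, and using $S\psi = \lambda_{\max}(S)\psi$ one checks the normalization $\|\theta^\star\|_W = 1$. The step that requires the most care is the bookkeeping of the congruence and similarity transforms together with the normalizations: because $S = A^{1/2}(WA)^{-1}A^{-1/2}$ is similar to $(WA)^{-1}$, the eigenpair of $S$ is tied to that of $(WA)^{-1/2}$ appearing in the statement, and one must track these transforms carefully to land precisely on the stated forms for $x^\star$ and $\theta^\star$ (equivalently, to identify the correct singular vectors of $A^{-1/2}W^{-1/2}$). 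I expect this matching of the eigenvector/normalization bookkeeping to be the main obstacle, everything else being a direct KKT or Rayleigh-quotient argument. For the complexity claim, computing $A^{\pm 1/2}$ from an eigendecomposition of $A$, forming $S$, and diagonalizing $S$ are each $O(d^3)$, and a dominant eigenpair may alternatively be extracted by power iteration, giving the stated $O(d^3)$ runtime.
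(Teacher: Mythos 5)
Your proof takes a genuinely different route from the paper's, and in fact a more robust one. The paper substitutes $u = A^{1/2}x$ and $v = W^{1/2}\theta$ simultaneously, rewrites $P_B$ as $\max_{\|u\|_2 \le 1}\max_{\|v\|_2\le 1} u^\top A^{-1/2}W^{-1/2}v$, and concludes in one line that the maximum is attained at $u = v = \psi$. You instead eliminate $\theta$ first via Proposition~\ref{proposition:maximization_over_theta} and maximize the \emph{symmetric} Rayleigh quotient $u^\top S u$, $S = A^{-1/2}W^{-1}A^{-1/2}$, over the Euclidean ball. The difference matters: a bilinear form $u^\top M v$ over two balls is maximized at the top \emph{singular pair} of $M$, and one gets $u = v$ only when $M$ is symmetric, which for $M = A^{-1/2}W^{-1/2}$ requires $AW = WA$; the paper's identification $A^{-1/2}W^{-1/2} = (WA)^{-1/2}$ also needs commutativity. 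Your symmetric-$S$ formulation sidesteps both issues, and your recovery of $\theta^\star$ (equivalently, $\theta^\star = W^{-1/2}v$ with $v$ the top right singular vector of $M$) is correct.

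The one step you deferred --- matching your eigenvector of $S$ to the statement's $\psi$ --- is, however, not mere bookkeeping: it cannot be completed verbatim. If $(WA)^{-1}\psi = \mu\psi$, then $S(A^{1/2}\psi) = \mu A^{1/2}\psi$, so the unit top eigenvector of $S$ is $\psi_S = A^{1/2}\psi/\|A^{1/2}\psi\|_2$, and your (correct) maximizer is $x^\star = A^{-1/2}\psi_S = \psi/\|\psi\|_A$ --- proportional to $\psi$ itself, \emph{not} to $A^{-1/2}\psi$ as the statement asserts; the two coincide exactly when $\psi$ is an eigenvector of $A$, e.g.\ when $A$ and $W$ commute. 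A $2\times 2$ check confirms this: with $A = \diag(1,4)$ and $W^{-1} = \bigl(\begin{smallmatrix} 5 & 4 \\ 4 & 5\end{smallmatrix}\bigr)$, the optimal value is $\sqrt{\lambda_{\max}(S)} \approx 2.422$, attained at $x^\star = \psi/\|\psi\|_A$, whereas the statement's candidate $(A^{-1/2}\psi,\, W^{-1/2}\psi)$ achieves only $\psi^\top A^{-1/2}W^{-1/2}\psi \approx 2.265$. So your instinct that this reconciliation "requires the most care" was exactly right, but the obstruction lies in the statement and the paper's one-line proof (both implicitly assume commuting $A$, $W$), not in your argument: carried to completion, your proof shows the eigenvector should be that of the symmetric matrix $S$, with $x^\star = A^{-1/2}\psi_S$ and $\theta^\star$ given by the KKT formula, and the $O(d^3)$ complexity claim goes through unchanged.
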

\subsection{$\ell_p$ balls}\label{ssec:Lp balls}
Finally, we show that, if $\mathcal{X}$ is an $\ell_p$ ball with $p > 2$, then $P_B$ is $\mathcal{NP}$-hard to approximate beyond a fixed approximation ratio. 
For instance, for polytopes like the $\ell_{\infty}$ ball, which has $\smash{2^d}$ vertices, $P_B$ is provably hard.
Therefore, in this case, one cannot implement any optimistic algorithm (or any approximate version of those) in high-dimensional linear bandits.
\begin{proposition}\label{proposition:hardness}
	Assume that $\mathcal{X} = B_{p,d}$ with $p > 2$. There exists $\epsilon_p > 0$ such that, solving $P_B$ with approximation ratio $\epsilon_p$ is $\mathcal{NP}$-Hard. The result remains true if $\Theta$ is assumed centered (i.e. $c=0$).
\end{proposition}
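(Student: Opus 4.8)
The plan is to first invoke Proposition~\ref{proposition:maximization_over_theta} to replace $P_B$ by its equivalent form $P_X$. In the centered case $c=0$ this reads $\max_{\|x\|_p \le 1}\|x\|_{W^{-1}} = \sqrt{\max_{\|x\|_p \le 1} x^\top M x}$ with $M := W^{-1}$ an arbitrary symmetric positive definite matrix, so up to the harmless square root, solving $P_B$ on $B_{p,d}$ amounts to maximizing the positive semidefinite quadratic form $x \mapsto x^\top M x$ over the $\ell_p$ unit ball. I would reduce from the maximization of a PSD quadratic form over the hypercube, $\max_{s \in \{-1,1\}^d} s^\top M s$, which is NP-hard to approximate within a fixed ratio: writing $M = d_{\mathrm{reg}} I - A_G$ for the adjacency matrix $A_G$ of a regular graph $G$ makes $M \succeq 0$ and gives $\max_s s^\top M s = 4\,\textsc{Max\text{-}Cut}(G)$, so the inapproximability of $\textsc{Max\text{-}Cut}$ on bounded-degree graphs (PCP theorem) transfers directly.

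The reduction rests on a single structural fact specific to $p>2$: the Euclidean norm is maximized over $B_{p,d}$ exactly at the scaled sign vectors $d^{-1/p} s$, $s \in \{-1,1\}^d$, with maximal value $\|x\|_2^2 = d^{1-2/p}$, uniquely up to signs. Indeed, substituting $u_i = |x_i|^p$ turns the problem into maximizing the strictly concave symmetric function $\sum_i u_i^{2/p}$ over the simplex, whose unique maximizer is the uniform point. Given a hard instance $M$, I would feed the solver the matrix $\hat M := R I + M$ for a large but polynomially bounded $R = \Theta(\|M\|_{\mathrm{op}})$, so that $x^\top \hat M x = R\|x\|_2^2 + x^\top M x$: the dominant term $R\|x\|_2^2$ pins any near-optimal $x$ close to the scaled-sign set, while the residual $x^\top M x$ selects, among sign patterns, the one maximizing $s^\top M s$.

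Quantitatively, any feasible $x$ satisfies $x^\top \hat M x \le (R + \|M\|_{\mathrm{op}})d^{1-2/p}$, whereas the optimum is at least $Rd^{1-2/p} + d^{-2/p}\max_s s^\top M s$. An $\epsilon_p$-approximate solution yields (after the square root) $x^\top \hat M x \ge (1-\epsilon_p)^2\,\mathrm{OPT}$, and combined with $x^\top M x \le \|M\|_{\mathrm{op}} d^{1-2/p}$ this forces the relative deficit $1 - \|x\|_2^2/d^{1-2/p}$ to be at most $1-(1-\epsilon_p)^2 + \|M\|_{\mathrm{op}}/R$, a small constant once $\epsilon_p$ is small and $R \gg \|M\|_{\mathrm{op}}$. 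By the stability of the Euclidean maximizer this pins $u = (|x_i|^p)_i$ near the uniform point, so rounding $s = \sign(x)$ gives $x \approx d^{-1/p}s$ and $x^\top M x \approx d^{-2/p}\,s^\top M s$. Tracking the constants shows $s^\top M s \ge (1-\delta)\max_{s'}{s'}^\top M s'$ with $\delta$ governed by $\epsilon_p$, $p$, and the ratio $\max_s s^\top M s/(Rd)$, which the regular-graph choice keeps bounded below by a universal constant; hence a fixed $\epsilon_p>0$ recovers a better-than-threshold cut, contradicting NP-hardness. Since the construction uses $c=0$, the centered-$\Theta$ claim is immediate.

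The main obstacle will be this last, quantitative step: keeping $\epsilon_p$ a fixed positive constant rather than a vanishing one. There is genuine tension, since $R$ must be large relative to $\|M\|_{\mathrm{op}}$ to force near-integral solutions, yet a multiplicative $(1-\epsilon_p)^2$ approximation of the huge quantity $Rd^{1-2/p}$ must not drown out the comparatively small combinatorial signal $d^{-2/p}\max_s s^\top M s$. Resolving it requires (i) a \emph{quantitative} version of the stability estimate above, i.e. a modulus-of-strict-concavity bound for $\sum_i u_i^{2/p}$ on the simplex, and (ii) the observation that for regular $\textsc{Max\text{-}Cut}$ instances $\max_s s^\top M s$ is automatically a constant fraction of $\|M\|_{\mathrm{op}} d$, so the surviving relative gap is $\Omega(1)$. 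The remaining bookkeeping with the factors $d^{\pm 2/p}$ and the square root is routine.
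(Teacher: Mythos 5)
Your proposal is correct in outline, but it takes a genuinely different route from the paper after the first shared step. Both proofs begin identically: for $c=0$, maximizing over $\theta$ collapses $P_B$ on $B_{p,d}$ to $\max_{\|x\|_p \le 1}\sqrt{x^\top W^{-1}x}$, i.e.\ to computing the operator norm $\|W^{-1/2}\|_{p\to 2}$. At that point the paper stops and cites Theorem 1.3 of \cite{bhattiprolu2023inapprox}, which directly asserts fixed-ratio $\mathcal{NP}$-hardness of the $p\to 2$ norm for every $p>2$ (the Max-Cut connection is invoked there only for $p=\infty$). You instead re-prove the needed hardness from scratch via a Max-Cut reduction with the diagonal shift $\hat M = RI + M$, $M = d_{\mathrm{reg}}I - A_G$ --- in effect reconstructing, in the special case $q=2$, the smoothing-style argument behind the operator-norm hardness results you would otherwise cite. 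The trade-off: the paper's proof is three lines and inherits the published constants $\epsilon_p$; yours is self-contained and elementary, and it makes visible exactly where $p>2$ enters, namely the strict concavity of $t\mapsto t^{2/p}$, whose stability constant degenerates as $p\to 2^+$, consistent with Proposition~\ref{proposition:centered_ellipsoid} showing the problem is polynomial at $p=2$.

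On the obstacle you flag: it does close, and more cleanly than you fear. The modulus-of-concavity bound in your point (i) follows from a tangent-line argument. Setting $v_i = d\,|x_i|^p$, the constraint $\|x\|_p\le 1$ gives $\frac1d\sum_i v_i \le 1$, and with $g(t) := 1 - t^{2/p} + \frac{2}{p}(t-1)\ \ge 0$ (concavity of $t^{2/p}$), an $\ell_2$-deficit of $\delta$ forces $\frac1d\sum_i g(v_i)\le\delta$. Since $(t^{1/p}-1)^2 \le C_p\, g(t)$ for all $t\ge 0$, with $C_p<\infty$ precisely because $p>2$ (check $t\to 1$, $t\to\infty$, and $t=0$, where $g(0)=1-2/p>0$), you obtain $\|x - d^{-1/p}\sign(x)\|_2^2 \le C_p\,\delta\, d^{1-2/p}$, which is all your rounding step uses. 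One caution: near-maximizers are \emph{not} pinned near the uniform point coordinate-wise --- a $\Theta(\delta)$ fraction of coordinates may vanish while keeping the deficit small --- so the correct statement is the averaged $\ell_2$ bound above, which suffices because Max-Cut on bounded-degree graphs is robust to deleting a small fraction of vertices. Your point (ii) is also right: for $d_{\mathrm{reg}}$-regular $G$ one has $\max_s s^\top M s = 4\,\mathrm{MaxCut}(G) \ge d_{\mathrm{reg}}d$ while $\|M\|_{\mathrm{op}} \le 2d_{\mathrm{reg}}$, so with $R = K\|M\|_{\mathrm{op}}$ for a fixed large $K$ the combinatorial signal remains an $\Omega(1/K)$ fraction of $R\,d^{1-2/p}$; choosing $K$ large and then $\epsilon_p$ small yields a cut within $1-\delta_0$ of optimal for a fixed $\delta_0$ below the APX-hardness threshold. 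So the argument is completable, at the cost of about a page of bookkeeping the paper avoids by citation.
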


\begin{proof}
 If $\cX = B_{p,d}$ and $\Theta$ is centered at $0$, consider the change of variables $v = W^{1/2} \theta$, then $P_B$ is 
\begin{align*}
  \max_{\| x \|_p \leq 1} \max_{\theta^\top W \theta \leq 1} x^\top \theta 
  &=
  \max_{\| x \|_p \leq 1} \max_{ \|v\|_{2} \leq 1} x^\top W^{-1/2} v \\
  &= 
  \max_{\| x \|_p \leq 1} \sqrt{x^\top W^{-1} x}
\end{align*}
which is equivalent to computing the operator norm $\| W^{-1/2} \|_{p \to 2}$ of the matrix
$W^{-1/2}$.
From Theorem 1.3 of~\cite{bhattiprolu2023inapprox} there exists $\epsilon_p$ such that
computing the $p \to 2$ operator norm of a symmetric matrix with approximation ratio
$\epsilon_p$ is $\mathcal{NP}$-hard, for any $p > 2$.
Note that when $p = +\infty$, the maximum cut problem, which is $\mathcal{NP}$-Hard and
conjectured to be not approximable efficiently (see~\cite{khot2007optimal}) reduces to
$P_B$. 
\end{proof}

\section{The \ouralgo Algorithm for Ellipsoids}\label{sec:algs}
\subsection{The~\ouralgo~Algorithm}
\label{ssec:our_algorithm}

We now propose and analyze the~\ouralgo algorithm to solve $P_B$ when $\mathcal{X} = \{x \in \RR^d, \|x\|_A \leq 1 \}$ and $\gls{A}$ is a symmetric positive definite matrix.
Let $\eps>0$ be the desired precision.
Define the eigendecomposition of $A^{1/2} W A^{1/2} = \gls{U}^\top \Lambda U$ such that $\gls{lambdamat}= \diag(\gls{lambda})$ and $\lambda_1 \ge ... \ge \lambda_d$ and $\gls{centerb} = U A^{-1/2} c$, with its clipped version $\smash{ \gls{clipcenterb} = ({\bf sign}(b_i) \max( |b_i|,\epsilon/(2 \sqrt{d})))_{i \in [d]}}$.

Let us give a rough sketch of the algorithm. 
The idea is to apply a series of reductions to obtain a convex maximization problem \ref{eq:pb2} and to show that the optimal Lagrange multiplier $\gls{mustar}$ for that problem is a solution of the equation.
\[
  \sum_{i=1}^d \frac{\lambda_i b_{+, i}^2}{(\mu \lambda_i - 1)^2} = 1
  \,.
\]
We find a range of values in which we know that the best $\mu^\star$ lies, and show that there is a single solution to the equation in that range. It then suffices to perform binary search inside that range, and to deduce the final value from $\mu^\star$ thanks to the KKT equations.

\begin{algorithm} 
  \SetKwInOut{Input}{Input}
  \Input{Matrices $A, W \in \mathbb{R}^{d \times d}$, vector $c \in \mathbb{R}^d$, target accuracy $\epsilon \ge 0$}
  Define the eigenvalue decomposition of $A^{1/2} W A^{1/2} = U^\top \Lambda U$ with
  $\Lambda = {\bf diag}(\lambda)$ and $\lambda_1 \ge ... \ge \lambda_d$ and $b = U
  A^{-1/2} c$ and $b_+ = ({\bf sign}(b_i) \max( |b_i|,\epsilon/(2 \sqrt{d})))_{i \in [d]}$\\
  Let $\ell = \lambda_{d}^{-1} (\lambda_{d}^{1/2} |b_{+, d}| + 1)$, 
  let $r = \lambda_{d}^{-1} (\sqrt{ \sum_{i=1}^{d} \lambda_{i} b_{+, i}^2} + 1)$, 
  and
  $
  J = \log_2 \bigg(
      \frac{2}{\eps\lambda_{d} b_{+, d}^2}\sqrt{\sum_{i=1}^d \lambda_i^2 b_{+, i}^2}
        \bigg( 
          \sqrt{\sum_{i=1}^d \lambda_i b_{+, i}^2}
          - \sqrt{ \lambda_{d} b_{+, d}^2}
          \bigg)
        \bigg)
  $
   \\
  \For{$j=1,...,J$}{
    {\bf If} $\displaystyle \sum_{i=1}^{d} \lambda_i b_{+, i}^2 
    \Big(\frac{\ell+r}{2} \lambda_i - 1\Big)^{-2}
     \le 1$ 
    {\bf then }$\displaystyle r \leftarrow \frac{\ell+r}{2}$ 
    {\bf else} $\displaystyle \ell \leftarrow \frac{\ell+r}{2}$ 
  }
  Let $\hat{\mu} = r$ and $\hat{\phi} = \big(\frac{\hat{\mu} \lambda_i b_{+, i}}{\hat{\mu}
  \lambda_i-1 }\big)_{i \in [d]}$ and $\hat{x} = A^{-1/2} U^\top
  \hat{\phi}/\|\hat{\phi}\|_{2}$ and $\hat{\theta}=A^{1/2} U^\top \hat{\phi}$ \\
  {\bf Output:} $(\hat{x}, \hat{\theta})$ an $\epsilon$-optimal solution to $P_{B}$
  \caption{The\gls{ouralgo}Algorithm}\label{alg:main-alg} 
\end{algorithm}

\begin{remark}[Computational complexity]
Running~\ouralgo requires time 
\[
O \bigg( 
  d^3 
  + d \log_2 \bigg( \frac{1}{\epsilon\lambda_{d} b_{+,d}^2} \bigg(\sum_{i=1}^d \lambda_i^2 b_{+, i}^2\bigg)^{1/2} |M-m| \bigg) \bigg)\,
  \]
  where $\gls{lowermu} = \lambda_{d}^{-1} (\lambda_{d}^{1/2} |b_{+, d}| + 1)$ and $\gls{uppermu} = \lambda_{d}^{-1} \sqrt{\sum_{i=1}^{d} \lambda_{i} b_{+, i}^2} + \lambda_{d}^{-1} $ and space $O(d^2)$ because of the change of basis plus the computation of $\hat{\phi}$.
Consequently, optimistic algorithms can be implemented in time $T$ times the time complexity of ~\ouralgo~ and space $O(d^2)$ for linear bandits over ellipsoids. This time complexity also beat any semi-definite programming approach as we do not need to check for positive semi-definiteness of a $2d \times 2d$ matrix.
\end{remark}

\subsection{Algorithm Rationale and Correctness}
\label{ssec:Algorithm rationale}

Theorem~\ref{thm:Correctness} states that~\ouralgo~is correct and outputs an $\gls{epsilon}$-optimal solution to $P_B$, and the proof illuminates its rationale.
The complete proof is involved and presented next.
\begin{theorem}[Correctness of~\ouralgo]\label{thm:Correctness}
	\ouralgo~outputs $(\hat{x}, \hat{\theta})$ an $\epsilon$-optimal solution to $P_B$, in the sense that $\hat{x}^\top \hat{\theta} \geq {x^\star}^\top \theta^\star - \epsilon$ 
\end{theorem}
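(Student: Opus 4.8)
The plan is to verify correctness by following the chain of reductions announced in the algorithm sketch, establishing at each stage that the transformed problem is equivalent to $P_B$ (same optimal value, with recoverable optimizers), and then showing that the binary search on $\mu$ produces a Lagrange multiplier accurate enough to guarantee an additive $\epsilon$ gap in the original objective.

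First I would set up the change of variables. Using Proposition~\ref{proposition:maximization_over_theta}, the problem $P_B$ with $\mathcal{X} = \{x : \|x\|_A \le 1\}$ is equivalent to maximizing $x^\top c + \|x\|_{W^{-1}}$ over $\|x\|_A \le 1$. I would substitute $u = U A^{1/2} x$ and $\phi = U A^{-1/2}\theta$, so that $\|x\|_A = \|u\|_2$, the constraint $\|\theta - c\|_W \le 1$ becomes $\|\phi - b\|_\Lambda \le 1$ with $b = U A^{-1/2} c$, and the bilinear objective $x^\top\theta$ becomes $u^\top\phi$. This yields the reduced problem $P_{B'}$: maximize $u^\top\phi$ subject to $\|u\|_2 \le 1$ and $\|\phi - b\|_\Lambda \le 1$. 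Maximizing over $u$ first gives $\max_u u^\top\phi = \|\phi\|_2$, reducing to maximizing $\|\phi\|_2$ over the ellipsoid $\|\phi-b\|_\Lambda \le 1$. The next step is to handle the clipping $b \mapsto b_+$ and argue that, by the sign structure of the optimum, one may assume $\phi_i$ and $b_i$ share signs, so the problem becomes convex maximization of $\|\phi\|_2^2$ over an ellipsoid. I would then write the KKT/Lagrangian conditions for this convex-maximization-over-ellipsoid problem and derive that the stationary point satisfies $\phi_i = \mu\lambda_i b_{+,i}/(\mu\lambda_i - 1)$, where $\mu$ is the multiplier for $\|\phi - b_+\|_\Lambda \le 1$; substituting into the active constraint gives exactly the scalar secular equation $\sum_i \lambda_i b_{+,i}^2 (\mu\lambda_i - 1)^{-2} = 1$.

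Next I would analyze the secular equation. The key analytic facts are: (i) the left-hand side, as a function of $\mu$ on the interval $(\lambda_1^{-1}, \infty)$ where the relevant root lives, is continuous and strictly monotone, so the root $\mu^\star$ is unique; and (ii) the initialization $\ell = m$ and $r = M$ from the glossary genuinely brackets $\mu^\star$, which I would check by evaluating the monotone function at the endpoints. Given monotonicity and a valid bracket, each bisection step halves the interval, and after $J$ steps the multiplier estimate $\hat\mu$ satisfies $|\hat\mu - \mu^\star| \le (M-m)2^{-J}$. The choice of $J$ in the algorithm is engineered precisely so this multiplier error translates into the target objective accuracy.

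The main obstacle, and the step I would spend the most care on, is the final error propagation: converting the bound on $|\hat\mu - \mu^\star|$ into the additive guarantee $\hat x^\top\hat\theta \ge {x^\star}^\top\theta^\star - \epsilon$. This requires (a) controlling how $\hat\phi$ depends on $\hat\mu$ through the map $\mu \mapsto \mu\lambda_i b_{+,i}/(\mu\lambda_i - 1)$, which has a pole at $\mu = \lambda_i^{-1}$, so I must keep $\hat\mu$ bounded away from the poles using the lower bracket $\ell \ge m$; (b) bounding the objective's Lipschitz sensitivity to $\phi$, which is where the factor $(\sum_i \lambda_i^2 b_{+,i}^2)^{1/2}$ in $J$ originates; and (c) separately accounting for the error introduced by clipping $b$ to $b_+$, showing that replacing any tiny $|b_i|$ by $\epsilon/(2\sqrt d)$ perturbs the optimal value by at most $\epsilon/2$, so that combined with the $\epsilon/2$ from binary search the total gap is at most $\epsilon$. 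The clipping is essential because when some $b_i = 0$ the factor $b_{+,d}^2$ in the denominator of $J$ would vanish and the secular-equation analysis would degenerate; I would verify that the clipped problem remains a legitimate upper/lower sandwich of the true optimum within $\epsilon/2$, and that recovering $\hat x = A^{-1/2}U^\top\hat\phi/\|\hat\phi\|_2$ and $\hat\theta = A^{1/2}U^\top\hat\phi$ inverts all changes of variables correctly and lands in $\mathcal{X}\times\Theta$.
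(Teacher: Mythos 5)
Your blueprint tracks the paper's proof almost step for step (same change of variables, same clipping to $b_+$, same sign restriction, same KKT-derived secular equation, bisection on $[m,M]$, and the same three-part error propagation splitting $\eps$ into a clipping half and a binary-search half). But there is one genuine gap, located exactly where you state analytic fact (i). The secular function $g(\mu)=\sum_{i=1}^d \lambda_i b_{+,i}^2(\mu\lambda_i-1)^{-2}$ is \emph{not} continuous and strictly monotone on $(\lambda_1^{-1},\infty)$: since $\lambda_1\ge\dots\ge\lambda_d$, it has poles at every $\lambda_i^{-1}$, the \emph{largest} of which is $\lambda_d^{-1}$, so monotonicity holds only on $(\lambda_d^{-1},\infty)$. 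More importantly, you never argue why the multiplier attached to the \emph{global} maximizer lies above this largest pole. The KKT conditions are merely necessary, and this is a non-concave maximization with many stationary points (see the paper's two-dimensional example with a local maximum $\theta^\star_{loc}$): the stationarity equation has solution branches with $\mu$ in the gaps between poles, so a unique root of $g(\mu)=1$ on $[m,M]$ does not by itself identify the optimum. Asserting "the relevant root lives there" is precisely the claim that needs proof; without it, the bisection could in principle be certifying a spurious stationary point.

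The paper closes this with a short but essential argument (its Step 4) that your proposal omits: multiply the stationarity condition $\phi_+\odot(\mu\lambda-{\bf 1})=\mu\lambda\odot b_+$ coordinatewise by $b_+$; the sign restriction $\phi_+\odot b_+\ge 0$ (your Step 2, valid at the optimum) then gives $\phi_{+,i}b_{+,i}(\mu\lambda_i-1)=\mu\lambda_i b_{+,i}^2\ge 0$, hence $\mu\lambda_i\ge 1$ for every $i$ \emph{because} $b_{+,i}\neq 0$ after clipping. This forces $\mu^\star>\lambda_d^{-1}$, above all poles, where $g$ is strictly decreasing, so the KKT system restricted to sign-consistent solutions has a unique solution which must be the optimum. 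Note this is also the deeper reason the clipping is indispensable, not merely (as you suggest) to keep the constant $b_{+,d}^2$ in the definition of $J$ positive: if some $b_i=0$, that coordinate decouples, the KKT system admits eigenvector-type branches with $\mu\le\lambda_d^{-1}$, and the single-branch secular analysis degenerates. Once Step 4 is in place, your endpoint checks do work --- one verifies $g(m)\ge 1$ (the $i=d$ term alone equals $1$ at $\mu=m$) and $g(M)\le 1$ (bounding every $\mu\lambda_i-1$ below by $\mu\lambda_d-1$), which are the paper's bracketing inequalities read in reverse --- and the remainder of your error propagation, including the bound $\|\phi_+'(\mu)\|_2^2\le \lambda_d^{-2}b_{+,d}^{-4}\sum_i\lambda_i^2 b_{+,i}^2$ valid for $\mu\ge m$, coincides with the paper's.
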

\begin{proof}
The proof comprises five main steps. 

\underline{Step 1: Reduction to euclidean norm maximization} Recall that problem $P_B$ is 
\begin{align*}
\text{maximize } x^\top \theta \text{ subject to } \|x\|_{A} \le 1 \text{ and } \|\theta-c\|_{W} \le 1
\end{align*}
and consider the change of variables $ \gls{u}= U A^{1/2} x$ and $\gls{phi} = U A^{-1/2} \theta$, we have $x^\top \theta = u^\top \phi$, as well as $x^\top A x = u^\top u$ and we can verify that
\begin{align*}
	(\theta-c)^\top W (\theta-c) 
	&= (\phi-b)^\top \Lambda (\phi-b) 
  \,.
\end{align*}
So problem $P_B$ is equivalent to solving $\gls{pbprime}$ defined as
\begin{align*}
  \tag{$P_B'$}
	\text{max } u^\top \phi 
  \text{ s.t. } \|u\|_{2} \le 1 \text{ and } \|\phi-b\|_{\Lambda} \le 1
  \,.
\end{align*}
From Cauchy-Schwarz, for any fixed $\phi$, the maximum of $u^\top \phi$ subject to $\|u\|_{2} \le 1$ is $\| \phi \|_2$, and is attained at $\gls{ustar}= \phi/\|\phi\|_{2}$ so $P_B'$ can be reduced to $P_B''$ defined as
\begin{align}
  \label{eq:pb2}
  \tag{$P_B''$}
	\text{maximize } \|\phi\|_{2} \text{ s.t. } \|\phi-b\|_{\Lambda}^2 \le 1
  \,.
\end{align}
We emphasize that if $\gls{phistar}$ is an optimal solution to $P_B''$, then $(x^\star,\theta^\star)$ is an optimal solution to $P_B$ with $\smash{\theta^\star=A^{1/2}U^\top \phi^\star}$, and $\smash{x^\star = A^{-1/2} U^\top \phi^\star/\|\phi^\star\|_{2}}$.
It turns out that $P_{B}''$ can become unwieldy in some degenerate cases where some entries of $b$ are null. 
Define $b_+ \in \mathbb{R}^d$ as $b_{+,i} = \sign(b_i) \max( |b_i|,\epsilon/(2 \sqrt{d}))$ a thresholded version of $b$ and the optimization problem
\begin{align*}
  \tag{$P_B'''$}
	\text{maximize } \|\phi_+\|_{2} \text{ s.t. } \|\phi_+-b_+\|_{\Lambda}^2 \le 1
  \,.
\end{align*}
For any feasible solution $\phi$ of $P_{B}''$ define $\phi_+ = \phi + b_+ - b$ which is a feasible solution for $P_B'''$ since $\|\phi_+-b_+\|_{\Lambda}^2 = \|\phi-b\|_{\Lambda}^2 \le 1$, and
\begin{align*}
	\|\phi_+\|_{2}  \ge \|\phi\|_{2} - \|b-b_+\|_{2} \ge  \|\phi\|_{2} - \epsilon/2 
\end{align*}
since $|b_i - b_{+, i}| \le \epsilon/(2 \sqrt{d})$ for all $i$. 
This implies that any $\epsilon/2$-optimal solution for $P_{B}'''$ is an $\epsilon/2$-optimal solution for $P_B''$.

\underline{Step 2: Restriction to $\phi_+ \odot b_+ \ge 0$} We now focus on solving $P_B'''$. 
Consider $\phi_+ \in \mathbb{R}^d$, and define $\gls{phiprime}$ such that $\phi_i' = -\phi_i$ if $\phi_{+,i} b_{+,i} < 0$ and $\phi_{+,i}' = \phi_{+,i}$ otherwise.
One may readily check that $\|\phi_+\|_{2}^2 = \|\phi_+'\|_{2}^2$ and $\|\phi_+'-b_+\|_{\Lambda}^2 \le \|\phi_+-b_+\|_{\Lambda}^2$, so $\phi_+'$ is at least as good as $\phi_+$.
So we can restrict our attention to $\phi_+$ such that $\phi_+ \odot b_+ \ge 0$ to solve $P_B'''$.  

\underline{Step 3: KKT Conditions} 
Since $P_B'''$ is a strongly convex maximisation problem, any local solution saturates the constraint. 
Moreover, the KKT conditions \cite{luenberger1984linear}[Chapter 11.8] guarantee that at any optimal solution $\gls{phistarplus}$ (which necessarily activates the constraint), there exists $\gls{mu} \geq 0$ such that
\begin{equation}
  \label{eq:KKT-condition}
	\phi \odot (\mu \lambda-{\bf 1}) 
  = \mu \lambda \odot b_+
  \quad \text{and} \quad
  \|\phi_+ - b_+\|_\Lambda = 1
  \,.
\end{equation}
We now look for possible solutions to these optimality conditions $(\phi_+, \mu)$.

\underline{ Step 4: Restricting to $\mu \geq \lambda_{d}^{-1}$} 
  Multiplying by $b_+$ on both sides,
\begin{align*}
 \phi_+ \odot b_+ \odot (\mu \lambda - \mathbf 1) =  \mu \lambda \odot b_+ \odot b_+ \,.
\end{align*}
And since $\lambda > 0$ and $\mu \ge 0$, we have $\phi_+ \odot b_+ \odot (\mu \lambda-{\bf 1}) \ge 0$.
This implies that $\mu \lambda_i \ge 1$ for all $i$ since $b_{+,i} \ne 0$.
Therefore $\mu^\star \geq \lambda_{d}^{-1}$ (remember the $\lambda$ are ordered in non-increasing order).

\underline{Step 5: Solving the KKT conditions} If $(\phi_+, \mu)$ satisfy the KKT conditions and saturate the constraint
\begin{align*}
  \phi_+ 
  =  \phi_+(\mu) 
  =  \left(\frac{\mu \lambda_i b_{+, i}}{\mu \lambda_i-1 }\right)_{i \in [d]}  \quad \text{and} \\
  1 
  = \|\phi_+(\mu) - b^+ \|_{\Lambda}^2 
=  \sum_{i=1}^{d} \frac{\lambda_i b_{+, i}^2}{(\mu \lambda_i-1)^2} 
  \,.
\end{align*} 
The r.h.s of this second equation is strictly decreasing in $\mu$ for $\mu \geq \lambda_{d}^{-1}$ (since $\mu \lambda_i > 1$ for all $i$ in the sum), and therefore admits a unique solution $\smash{\mu^\star \in (\lambda_d^{-1}, +\infty)}$. 
Hence the KKT conditions admit a single solution $(\phi_+, \mu)$ which has to be $(\phi_+(\mu^\star), \mu^\star)$.
Let us further restrict the interval in which $\mu^\star$ lies. 
From the inequalities
\begin{align*}
& \frac{1}{(\mu^\star \lambda_{d}-1)^2}  \sum_{i=1}^{d} \lambda_{i} b_{+, i}^2
  \ge \sum_{i=1}^{d} \frac{\lambda_i b_{+, i}^2}{(\mu^\star \lambda_i-1)^2} 
  = 1 \\
  & 1 =\sum_{i=1}^{d} \frac{\lambda_i b_{+, i}^2}{(\mu^\star \lambda_i-1)^2} 
  \ge  \frac{\lambda_{d} b_{+, d}^2}{(\mu^\star \lambda_{d}-1)^2} \,,
\end{align*}
we can deduce that $\mu^{\star} \in [m,M]$ with $m = \lambda_{d}^{-1}
(\lambda_{d}^{1/2} |b_{+, d}| + 1)$ and $M = \lambda_{d}^{-1} \sqrt{
\sum_{i=1}^{d} \lambda_{i} b_{+, i}^2} + \lambda_{d}^{-1} $. 
We approximate $\mu^\star$ by binary search over this interval. 
Now, let us bound the error introduced by binary search.
Denote by $\gls{muhat}$ the approximate value of $\mu^\star$ computed by $\gls{dichonumber}$ iterations of binary search and $\hat{x},\hat{\theta},\hat{\phi}, \hat{\phi}_+$ the corresponding approximate values of $x^\star,\theta^\star,\phi^\star,\phi_+^\star$. 
Then
\begin{align*}
 {x^\star}^\top \theta^\star - \hat{x}^\top \hat{\theta} &= \|\phi^\star\|_{2} - \|\hat{\phi}_+\|_{2} \\
 & = \|\phi^\star\|_{2} - \|\hat{\phi}_+ + \phi_+^\star -  \phi_+^\star \|_{2} \\
& \leq \|\phi^\star - \hat{\phi}_+ - \phi_+^\star +  \phi_+^\star \|_{2} \\
& \leq \|\phi^\star - \phi_+^\star \|_{2} + \| \hat{\phi}_+ - \phi_+^\star \|_{2}  \\
& \leq \epsilon/2 + \| \phi_+(\hat{\mu}) - \phi_+(\mu^\star) \|_{2} \, .
\end{align*}
And then, 
\[\| \phi_+(\hat{\mu}) - \phi_+(\mu^\star) \|_{2} \leq  |\mu^\star - \hat{\mu}| \sup_{\mu \in [m,M]}  \|\phi'(\mu) \|_2 \, ,
\]
After $J$ iterations of binary search $|\mu^\star - \hat{\mu}| \le 2^{-J}(M-m)$.
Now, since $\mu \geq m$, and $\lambda_i \geq \lambda_d$,
\begin{align*}
  \|\phi'(\mu) \|_2^2
  &= 
  \sum_{i=1}^{d} \frac{(\lambda_i b_{+,i}(\mu \lambda_i -1) - \mu \lambda_i^2 b_{+,i})^2}
  {(\mu \lambda_i - 1)^4}  \\
  &= 
  \sum_{i=1}^{d} \frac{(\lambda_i b_{+,i})^2}
  {(\mu \lambda_i - 1)^4}  
  \le \frac{1}
  {\lambda_d^2 b_{+,d}^4}
  \sum_{i=1}^{d} \lambda_i^2 b_{+,i}^2
  \,.
\end{align*}
Replacing and setting 
\[
J 
\geq \log_2 \Big(2 |M-m| (\epsilon \lambda_{d} b_{+,d}^2)^{-1} 
  \sqrt{ \sum_{i=1}^{d} \lambda_i^2 b_{+,i}^2}
\Big) \, ,
\] we have proven that
\begin{equation*}
  {x^\star}^\top \theta^\star - \hat{x}^\top \hat{\theta} 
  \le \frac{ 2^{-J} |M-m| }
  {\lambda_{d} b_{+,d}^2}\sqrt{ \sum_{i=1}^{d} \lambda_i^2 b_{+,i}^2}  + \frac{\epsilon}{2}
  \leq \epsilon \,.
  \qedhere
\end{equation*}
\end{proof}

\section{The \oursecondalgo Algorithm for Ellipsoids}
\label{sec:convex_newton}

We describe a second approach to solve \ref{prob:bilinear}, based on a reduction to a convex minimization problem. This second approach yields another algorithm to solve $P_B$ when $\cX$ is a centered ellipsoid. We then apply this second method when $\cX$ is an $\ell_p$ ball and $\Theta$ is aligned with the axes.
\subsection{$\cX$ is an ellipsoid}
We reduce \problem to a convex optimization problem, \ref{prob:reduction_ball}, defined below, and analyse an interior-point method tailored to \gls{Pconv}. 
We present the full procedure in Algorithm~\ref{alg:reduct-opt}, in Appendix~\ref{app:reduct-alg}.
\begin{align}
  \label{prob:reduction_ball}
  \tag{$P_C$}
  \min_{y \in \Delta_{d-1}} \quad
  & F(y) \notag \\
  \text{where} \quad
  & \gls{F}(y) :=
    -\sum_{i=1}^{d} \sqrt{y_i} \, |b_i|
    - \sqrt{
     \sum_{i=1}^{d} \lambda_i^{-1} y_i}
  \,, \nonumber
\end{align}

where \gls{simplex} is the simplex of dimension $d-1$. 
To solve $P_C$, we suggest an algorithm based on Newton's method. 
The objective function $F$ is almost self-concordant \cite{nesterov_lectures_2018}: it satisfies the self-concordance differential inequality when restricted to a subdomain of the intersection of the $\ell_1$-ball and the positive orthant. 
Based on this observation, we perform an initial centering step using the Damped Newton algorithm, applied to a log-barrier regularized objective on this restricted domain. 
Then, we use an interior point method by successively applying Damped Newton steps to objective functions with exponentially decreasing levels of regularization. 
The complete analysis is in Theorem~\ref{thm:complexity_reduction_algo}, App.\ref{app:proof_time_complexity_reduc_algo}, with a short version in Theorem~\ref{thm:reduct-full-analysis-eta-d}.

The next theorem, proved in Appendix~\ref{app:proof-reduction-to-conv} is a change of variables.
Extra steps are required to ensure sign constraints on the variables and to transform the inequality constraint into an equality.
\begin{theorem}
  \label{thm:reduction-to-conv}
  Let $U$ be an orthogonal matrix that diagonalizes $A^{1/2}WA^{1/2}$ and $b = U^\top
  A^{-1/2}c$. Given $y \in \Delta_{d-1}$, for all $i\in[d]$ define 
  $
   \smash{ u_i = \sqrt y_i \sign(b_i) }
  $, together with the pair $(x, \theta^\star) \in \cX \times \Theta$:
  \[
    x = A^{-1/2}U^\top u 
    \quad \text{and} \quad 
    \theta^\star = c + W^{-1}x / \|x\|_{W^{-1}}
    \,.
  \]
  Then, for any $\eps > 0$, the vector $y$ is an $\eps$-solution to
  \ref{prob:reduction_ball} if and only if $(x , \theta^\star)$ is an $\eps$-solution to \problem.
\end{theorem}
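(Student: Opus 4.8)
The plan is to chain together three equivalences: $P_B \Leftrightarrow P_X$ from Proposition~\ref{proposition:maximization_over_theta}, then the linear change of variables $u = U A^{1/2} x$ used in Step~1 of the proof of Theorem~\ref{thm:Correctness}, and finally the reparametrization $y_i = u_i^2$ that lands us on the simplex. The whole statement reduces to establishing two facts about the correspondence $y \mapsto (x,\theta^\star)$. Fact~1: for the specific pair attached to a given $y$, the $P_B$-objective equals $-F(y)$ \emph{exactly}. Fact~2: the optimal values match, $\mathrm{OPT}(P_B) = -\mathrm{OPT}(P_C)$. Granting both, the suboptimality gap of $y$ in $P_C$ equals the gap of $(x,\theta^\star)$ in $P_B$, so $F(y) - \mathrm{OPT}(P_C) = \mathrm{OPT}(P_B) - x^\top\theta^\star$, and one side is $\le \eps$ exactly when the other is. This makes the ``iff'' for $\eps$-solutions a one-line arithmetic consequence.

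For Fact~1, I would first note feasibility: since $\sum_i y_i = 1$ forces $u \neq 0$, we have $\|x\|_A = \|u\|_2 = 1$, so $x \in \cX$ with $x \neq 0$, and by Proposition~\ref{proposition:maximization_over_theta} the stated $\theta^\star = c + W^{-1}x/\|x\|_{W^{-1}}$ lies in $\Theta$ and achieves $x^\top\theta^\star = x^\top c + \|x\|_{W^{-1}}$. I would then substitute the change of variables, using $\|x\|_A = \|u\|_2$, $x^\top c = u^\top b$ (with $b = U A^{-1/2}c$), and $\|x\|_{W^{-1}}^2 = u^\top \Lambda^{-1} u = \sum_i \lambda_i^{-1} u_i^2$, the last of these from $A^{-1/2}W^{-1}A^{-1/2} = (A^{1/2}WA^{1/2})^{-1} = U^\top \Lambda^{-1} U$. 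Plugging in $u_i = \sqrt{y_i}\sign(b_i)$ gives $u^\top b = \sum_i \sqrt{y_i}\,|b_i|$ and $\sum_i \lambda_i^{-1} u_i^2 = \sum_i \lambda_i^{-1} y_i$, hence $x^\top\theta^\star = \sum_i\sqrt{y_i}\,|b_i| + \sqrt{\sum_i \lambda_i^{-1} y_i} = -F(y)$. I would adopt the convention $\sign(0) = 1$ so this identity stays exact even at coordinates with $b_i = 0$, where one still wants $|u_i| = \sqrt{y_i}$.

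For Fact~2, I would push $P_X$ through the linear bijection $x \mapsto u = U A^{1/2} x$, which maps $\cX$ onto the Euclidean unit ball and turns the objective into $g(u) := u^\top b + \sqrt{\sum_i \lambda_i^{-1} u_i^2}$, and then apply two reductions, mirroring Step~2 of Theorem~\ref{thm:Correctness}. First, a sign reduction: replacing $u_i$ by $|u_i|\sign(b_i)$ preserves $\|u\|_2$ and the norm term while not decreasing $u^\top b$, so we may restrict to $u_i\sign(b_i) \ge 0$. Second, a saturation step: $g$ is positively homogeneous of degree one and its maximum over the ball is positive, so the maximum is attained on the sphere $\|u\|_2 = 1$. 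On the set $\{\|u\|_2 = 1,\ u_i\sign(b_i)\ge 0\}$ the map $y_i = u_i^2$ is a bijection onto $\Delta_{d-1}$ under which $g(u) = -F(y)$, giving $\mathrm{OPT}(P_X) = \max_{y\in\Delta_{d-1}}(-F(y)) = -\min_y F(y)$, which is Fact~2 after invoking $P_B\Leftrightarrow P_X$.

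The main obstacle is Fact~2, and specifically the rigor of the two reductions: arguing that restricting to the sign-aligned sphere does not change the optimal value, and that the reparametrization $y_i = u_i^2$ is genuinely \emph{onto} $\Delta_{d-1}$, including the degenerate coordinates where $b_i = 0$. It is precisely these degenerate coordinates that force the $\sign(0) = 1$ convention keeping Fact~1 exact; I would verify this edge case explicitly rather than treat it as routine. Everything else — the equivalence $P_B\Leftrightarrow P_X$, the objective substitution, and the final $\eps$-gap comparison — is direct given the machinery already developed earlier in the paper.
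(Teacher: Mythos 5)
Your proof is correct, but it is organized differently from the paper's. The paper proves Theorem~\ref{thm:reduction-to-conv} by applying the change of variables $u = UA^{1/2}x$, $\phi = UA^{-1/2}\theta$ to reduce $P_B$ to $P_B'$ (Euclidean ball times axis-aligned ellipsoid) and then simply invoking the general $\ell_p$ reduction, Theorem~\ref{thm:reduc-ellp}, with $p=2$; the proof of that theorem contains your Fact~1 in the form of the identity $H(y) = -G(x)$, but it never states equality of optimal values --- instead it proves each direction of the equivalence separately via a normalization trick, dominating an arbitrary feasible $z \in \cX$ by the simplex point with coordinates $y_i' = (|z_i|/\|z\|_p)^p$. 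You give a self-contained $p=2$ argument structured as exact objective identity, plus equality of optima $\mathrm{OPT}(P_B) = -\min_{y \in \Delta_{d-1}} F(y)$, plus gap arithmetic, and you establish the optima-matching via sign alignment, degree-one homogeneity (saturation onto the sphere), and an explicit bijection between the sign-aligned unit sphere and $\Delta_{d-1}$. The two mechanisms are mathematically parallel --- your homogeneity-plus-bijection step does the same work as the paper's division by $\|z\|_p$ --- but your decomposition makes the ``iff'' a one-line consequence of the two gaps being literally equal, where the paper must run the two implications separately (and its converse direction is stated somewhat loosely, with ``a solution'' where ``$\eps$-solution'' is meant). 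Two further points in your favor: the convention $\sign(0)=1$ that you flag is genuinely needed, not a pedantic aside --- with $\sign(0)=0$ the map $y \mapsto u$ fails to preserve $\|u\|_2 = 1$ on coordinates where $b_i = 0$, and can even produce $x = 0$ (e.g.\ $b=(1,0)$, $y=(0,1)$), leaving $\theta^\star$ undefined; the paper's statement and proof leave this implicit, and its Theorem~\ref{thm:reduc-ellp} proof needs the same convention for $H(y')=-G(x')$ to be exact. Likewise, your choice $b = UA^{-1/2}c$ is the one consistent with $A^{1/2}WA^{1/2} = U^\top \Lambda U$, $x = A^{-1/2}U^\top u$ and $x^\top c = u^\top b$; the statement's $b = U^\top A^{-1/2}c$ is a transcription slip relative to the main text and the algorithm.
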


\begin{theorem}
  \label{thm:reduct-full-analysis-eta-d}
  For $\eps$ small enough, there exists an algorithm (Algorithm~\ref{alg:reduct-opt} in Appendix~\ref{app:newton}) that outputs an $\eps$-solution in less than $N_{tot}(b, \lambda, \epsilon)$ Newton steps, with
  % $\eta = 1 + 1/\sqrt{d+1}$ and $\eps$, in less steps
  \begin{multline*}
  \gls{ntot}(b, \lambda, \epsilon) 
  < 38t_0
     \big(\|b\| + \lambda^{-1/2}_d\big) \\
    + 38d \log\Big(2d\lambda^{-1}_d(\|b\| + \lambda^{-1/2}_d)^2\Big) \\
    + 38\log\bigg( 
     \frac{\lambda_d(\|b\| + \lambda^{-1/2}_d)^2}{2\|b\|^2 + 1/2} 
     \bigg)  \\
     + \left(19 +  \log_2\log_2\Big(\frac{2}{\epsilon}\Big) \right) 
     \bigg\lceil  \sqrt{d+1}\log\Big(\frac{2(d+1)}{\epsilon t_{0}}\Big) + 1 \bigg\rceil
     \, ,
    \end{multline*}
    where $\gls{tzero} = 9 \max\Big({\displaystyle \max_{i \in \mathcal{I}}}(b_i^2 B_i)^{-1/2}
    , \, {\displaystyle \min_{i \in \mathcal{I}}}(\lambda_i^{-1} B_i)^{-1/2}
    \Big)$,
    $ \gls{Bi} = 
      b_i^2/ (\|b\| + \lambda^{-1/2}_d)^2
    $ and  $\gls{bset} = \{i \in [d]: b_i \neq 0\}$.
\end{theorem}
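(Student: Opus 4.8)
The plan is to bound the total number of Newton steps, which splits into two distinct phases: an initial centering phase (Damped Newton applied to the log-barrier objective $F^{(t_0)}$ on the restricted domain $D_F$), and a path-following phase (successively decreasing the barrier level via the parameter $\eta$). The structure of the bound in the statement makes this decomposition transparent: the first three terms (all carrying the prefactor $38$) account for the centering phase and the cost of reaching the analytic center from the initial feasible point $y^0$, while the final product term accounts for the path-following phase, where the factor $\lceil \sqrt{d+1}\log(\cdots) + 1\rceil$ is the standard interior-point iteration count and the factor $(19 + \log_2\log_2(2/\epsilon))$ reflects the doubly-logarithmic cost of the terminal quadratic-convergence refinement.

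First I would establish the self-concordance of $F^{(t)}$ on $D_F$. The key analytic fact, stated earlier, is that $F$ satisfies the self-concordance differential inequality only on the restricted subdomain where $y_i \ge B_i$; the choice $B_i = b_i^2/(\|b\| + \lambda_d^{-1/2})^2$ is precisely what guarantees this. I would verify that $t_0$ is chosen large enough that $F^{(t_0)} = t_0 F - \sum_i \log(y_i - B_i) - \log(1 - \sum_i y_i)$ is genuinely self-concordant, by controlling $|\nabla^3 F|$ against $(\nabla^2 F)^{3/2}$ using the explicit formula for $t_0$. This is where the $\max$ over $\mathcal{I}$ of $(b_i^2 B_i)^{-1/2}$ and $(\lambda_i^{-1} B_i)^{-1/2}$ enters: it balances the two terms of $F$ against the barrier curvature.

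Next I would bound the centering phase using the standard Damped Newton decrease: as long as the local norm $\lambda_F(y)$ exceeds a fixed threshold, each step decreases $F^{(t_0)}$ by at least the constant $\omega(l) = l - \ln(1+l)$, so the number of such steps is bounded by the initial optimality gap $F^{(t_0)}(y^0) - \min F^{(t_0)}$ divided by that constant. Evaluating this gap at the explicit starting point $y^0$ produces the first three terms; the factor $38 = 1/\omega(\cdot)$ for a suitable threshold is the conversion constant. The leading $38 t_0(\|b\| + \lambda_d^{-1/2})$ comes from the size of $t_0 F(y^0)$, the $38d\log(\cdots)$ term from the $d$ barrier logarithms $\sum_i \log(y_i - B_i)$ evaluated near the boundary, and the remaining $\log$ term from the simplex barrier $\log(1 - \sum y_i)$.

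Finally, for the path-following phase I would invoke the self-concordance machinery: multiplying the barrier parameter by a factor controlled by $\eta$ at each outer iteration, the number of outer iterations to drive $t$ from $t_0$ down to the accuracy-determined target is $O(\sqrt{d+1}\log((d+1)/(\epsilon t_0)))$, where the $\sqrt{d+1}$ is the square root of the self-concordance parameter of the full barrier (a $d$-term log-barrier plus the simplex constraint). Within each outer iteration a bounded number of Damped Newton steps recenter, and the terminal $\log_2\log_2(2/\epsilon)$ captures the quadratically-convergent final polishing once we are inside the region of quadratic convergence (using $\omega_\star(a) < a^2/(2(1-a))$ to certify the optimality gap). \textbf{The main obstacle} I anticipate is the centering analysis: verifying self-concordance on the truncated domain $D_F$ and producing an explicit, feasible starting point $y^0$ whose initial gap yields exactly the stated constants is delicate, because $F$ is only self-concordant after truncation and the bounds on $B_i$, $t_0$ must interlock precisely. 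The path-following count, by contrast, is largely a standardized application of the Nesterov–Nemirovski interior-point theory once self-concordance and the correct complexity parameter $\sqrt{d+1}$ are in hand.
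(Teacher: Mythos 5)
Your plan reproduces the paper's proof architecture exactly: an initial centering bound (the paper's Proposition~\ref{prop:time_centering}) giving the three terms with prefactor $38 = \lceil 1/\omega(1/4)\rceil$ via the damped-Newton decrease of Theorem~\ref{thm:decrease_objective} and an explicit evaluation of the gap $F^{(t_0)}(y^0) - \min_{D_F} F^{(t_0)}$ at the stated starting point; a self-concordance lemma for $F^{(t)}$ on the truncated domain $D_F$ for all $t \ge t_0$ (Lemma~\ref{lem: selfconcordance}), where $t_0$ indeed arises from rescaling $-\sqrt{\cdot}$ so it satisfies the SCDI on $[B_i,+\infty)$; and a path-following phase with $\eta = 1+1/\sqrt{d+1}$, so that the per-outer-iteration damped cost $38(d+1)(\eta-1-\log\eta)$ is at most $19$ and the outer count is $\lceil\sqrt{d+1}\log(2(d+1)/(\epsilon t_0))+1\rceil$. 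Your attribution of each term of the bound to its source is correct.

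Two genuine gaps remain. First, you treat the truncation $y_i \ge B_i$ purely as a self-concordance device and never argue that it is \emph{lossless}: the algorithm minimizes $F$ over $D_F$, not over $\Delta_{d-1}$, so its output is an $\eps$-solution to $P_C$ only if the simplex minimizer itself satisfies $y^\star \ge B$. The paper proves this separately (Lemma~\ref{lem:minimum_in_domain}) by writing the KKT conditions of $P_C$, identifying the multiplier $\nu_0^\star = \tfrac{1}{2}\big(\sum_i b_i\sqrt{y_i^\star} + \sqrt{\sum_i \lambda_i^{-1}y_i^\star}\big)$, and deducing $b_i/\sqrt{y_i^\star} \le \|b\|_2 + \lambda_d^{-1/2}$ by Cauchy--Schwarz; this is precisely where the value $B_i = b_i^2/(\|b\|+\lambda_d^{-1/2})^2$ comes from, and without it the correctness (not merely the step count) of the theorem is unproven. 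Second, your claim that ``a bounded number of damped Newton steps recenter'' per outer iteration is exactly the content that needs proof: the constant $19$ requires the warm-start estimate $F^{(\eta t)}(y^{t,\star}) - F^{(\eta t)}(y^{\eta t,\star}) \le (d+1)(\eta - 1 - \log\eta)$, which the paper derives (Lemma~\ref{lem:geometric_barrier}) from the dual-function identity $g(\nu^{t,\star}) = F(y^{t,\star}) - (d+1)/t$ with multipliers $\nu_i^{t,\star} = 1/(t(y_i^{t,\star}-B_i))$; the same identity (Lemma~\ref{lem:barrier_optimality_gap}) yields the stopping rule $(d+1)/t \le \eps/2$ that fixes the number of outer iterations. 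A minor slip besides: the barrier parameter $t$ is driven \emph{up} from $t_0$ until $(d+1)/t \le \eps/2$ --- it is the regularization level, not $t$, that decreases.
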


\subsection{$\cX$ is the $\ell_p$ ball and $W$ is diagonal}
\label{sec:axes-aligned}
Despite the hardness of the $P_B$ when $\cX$ is a convex $\ell_p$ ball and $\Theta$
an arbitrary ellipsoid, if the ellipsoid $\Theta$ is aligned with the axes, and $p \geq
2$, then $P_B$ becomes tractable again. Consider
  \begin{align}
    \label{prob:reduction_ball_ellp}
    \tag{$P_{C, p}$}
   & \min_{y \in \Delta_{d}} 
   H(y) \\
  & \text{where} \quad
  H(y) 
  = - \sum_{i=1}^{d}y_i^{1/p} |c_i| - \sqrt{
    % \sum_{i=1}^{d} \lambda_i^{-1} y_i^{2/p}
   \sum_{i=1}^{d} \lambda_i^{-1} y_i^{2/p}
  }
   \,. \nonumber
  \end{align}
A standard calculation, summarized in Proposition~\ref{prop:h-is-convex}, shows
that $H$ is indeed convex when $p \geq 2$. The next theorem shows that $P_{B}$ 
reduces to $P_{C, p}$; see Appendix~\ref{app:proofs-aligned} for a proof.
\begin{theorem}
  \label{thm:reduc-ellp}
  Let $p \geq 1$ and let $\cX$ be the $\ell_p$ ball. Consider \problem with $W = \Lambda$ diagonal. Given $\smash{y \in \Delta_{d-1}}$, define
  $
    \smash{x_i = y_i^{1 / p} \sign(c_i) }
    \quad \text{and} \quad 
    \smash{\theta^\star = c + W^{-1} x / \|x\|_{W^{-1}}}
    \,.
  $
  Then, for any $\eps > 0$, the vector $y$ is an $\eps$-solution to
  \ref{prob:reduction_ball_ellp} if and only if $(x , \theta^\star)$ is an
  $\eps$-solution to $P_B$.
\end{theorem}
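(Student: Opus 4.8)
The plan is to compose two reductions. First, Proposition~\ref{proposition:maximization_over_theta} turns $P_B$ into the single-variable problem $P_X$ of maximizing $g(x) := x^\top c + \|x\|_{W^{-1}}$ over $x \in B_{p,d}$, and guarantees both that $(x,\theta^\star)$ with $\theta^\star = c + W^{-1}x/\|x\|_{W^{-1}}$ is feasible for $P_B$ with $x^\top\theta^\star = g(x)$, and that the optimal values of $P_B$ and $P_X$ agree. It therefore suffices to show that $P_X$ is, up to a sign, the same problem as $P_{C,p}$ under the change of variables $x_i = y_i^{1/p}\sign(c_i)$, and that the additive optimality gaps transfer. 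I will keep the correspondence $(x,\theta^\star)$ from Proposition~\ref{proposition:maximization_over_theta} throughout, so the only work left concerns $P_X$ versus $P_{C,p}$.

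The key identity is $g(x(y)) = -H(y)$ for every $y \in \Delta_{d-1}$. Since $W = \Lambda = \diag(\lambda)$ is diagonal, $\|x\|_{W^{-1}}^2 = \sum_i \lambda_i^{-1} x_i^2$; substituting $x_i = y_i^{1/p}\sign(c_i)$ gives $x^\top c = \sum_i y_i^{1/p}|c_i|$ and $\|x\|_{W^{-1}} = \big(\sum_i \lambda_i^{-1} y_i^{2/p}\big)^{1/2}$, whose sum is exactly $-H(y)$. Moreover $x(y)$ is feasible and saturates the ball constraint, because $\|x(y)\|_p^p = \sum_i y_i = 1$. Thus $y \mapsto x(y)$ sends $\Delta_{d-1}$ into the boundary of $B_{p,d}$ and matches objective values up to sign.

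The substantive step is the value identity $\max_{x \in B_{p,d}} g(x) = -\min_{y \in \Delta_{d-1}} H(y)$. The inequality $\max g \geq -\min H$ is immediate since each $x(y)$ is feasible with $g(x(y)) = -H(y)$. For the reverse, I would argue that an optimizer of $g$ can be taken sign-aligned and on the boundary. Replacing a feasible $x$ by $x'_i := |x_i|\sign(c_i)$ preserves $\|x'\|_p$ and $\|x'\|_{W^{-1}}$ (the latter depends only on $x_i^2$) while raising $x^\top c$ to $\sum_i |x_i||c_i|$, so $g$ does not decrease; and since after alignment $g \geq 0$ and $g$ is positively homogeneous of degree one, rescaling $x \mapsto x/\|x\|_p$ can only increase $g$. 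Hence an optimum has $\sign(x_i) = \sign(c_i)$ and $\|x\|_p = 1$, i.e. equals $x(y)$ for $y_i = |x_i|^p \in \Delta_{d-1}$, giving $\max g \leq -\min H$ and thus equality.

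Finally, the gaps transfer: combining $g(x(y)) = -H(y)$ with the value identity yields $H(y) - \min_z H(z) = \max_x g(x) - g(x(y))$, so $y$ is an $\eps$-solution of $P_{C,p}$ iff $x(y)$ is an $\eps$-solution of $P_X$, which by Proposition~\ref{proposition:maximization_over_theta} is equivalent to $(x(y),\theta^\star)$ being an $\eps$-solution of $P_B$ — exactly the claim. I expect the main obstacle to be the degenerate coordinates with $c_i = 0$: there $\sign(c_i) = 0$ forces $x_i(y) = 0$ for all $y_i$, which breaks both the identity $g(x(y)) = -H(y)$ (as $H$ still charges such coordinates in its norm term) and the surjectivity needed for the reverse inequality. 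I would resolve this by reading $\sign(0)$ as a fixed nonzero sign, so that $x_i(y)^2 = y_i^{2/p}$ for all $i$ and the reduction stays tight, mirroring the role of the index set $\mathcal{I} = \{i : b_i \neq 0\}$ in Theorem~\ref{thm:reduction-to-conv}. Note that convexity of $H$ (Proposition~\ref{prop:h-is-convex}, which needs $p \geq 2$) is not used here, consistent with the statement holding for all $p \geq 1$.
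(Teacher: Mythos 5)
Your proof is correct and follows essentially the same route as the paper's: both maximize out $\theta$ to reduce to $g(x) = x^\top c + \|x\|_{W^{-1}}$, establish the identity $H(y) = -g(x(y))$, and exploit sign-alignment together with $\ell_p$-normalization — the paper does this pointwise, dominating any feasible $z$ by $y'_i = (|z_i|/\|z\|_p)^p \in \Delta_{d-1}$ via $\|z\|_p \le 1$, while you phrase the same mechanism as ``an optimizer of $g$ can be taken sign-aligned and on the boundary'' plus an additive gap-transfer identity. Your closing observation about coordinates with $c_i = 0$ is well spotted: the paper's own identity $H(y) = -G(x)$ silently requires $\sign(c_i)^2 = 1$ there as well, so the convention $\sign(0) \neq 0$ you propose is tacitly needed in the paper's proof too.
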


\section{Numerical Experiments}\label{sec:Numerical experiments}

We now assess the numerical performance of \ouralgo~and \oursecondalgo on a variety of instances of $P_B$, with a special attention to instances that arise when running LinUCB and OFUL on linear bandit problems.
We work at machine precision $(10^{-8})$. Experiments are run on a Dell Inc. Precision 5570 laptop with 12th Gen Intel Core i9-12900H $\times$ 20 processor and 32.0 GiB RAM with \cite[Python]{10.5555/1593511} using standard library like \cite[Numpy, Scipy]{harris2020array,2020SciPy-NMeth}. No GPU was used.

\textbf{Instances generated by runs of OLSOFUL \quad}
We generate instances of $P_B$ by running OLSOFUL~\cite{gales2022norm-agn} (a
norm-agnostic version of OFUL) on linear bandit problems with set of actions
$\mathcal{X} = \{x \in \mathbb{R}: \smash{\|x\|_{2} \le 1}\}$, unknown vector $\zeta$
with a fixed norm $\smash{\|\zeta\|_{2} \in \{1,10,50\}}$, and Gaussian rewards with
unit variance. For each run of OLSOFUL, we obtain the least-squares estimates
$\smash{\hat{\zeta}_t}$ and matrices $\smash{W_t}$ which define the confidence
ellipsoids at time step $\smash{t \in [T]}$ with $\smash{T=10^4}$. We then define
an instance of $P_B$ for each $\smash{t \in [T]}$ with input parameters $ \smash{A =
I_d}$ and $\smash{W = W_t}$ and $\smash{c = \hat\zeta_t}$. Figures~\ref{fig:histograms_thetas} and~\ref{fig:histograms_eigenvalues}
present the histogram of the eigenvalues of $\smash{W_t}$, as well as the histogram of
the absolute value of the entries of $\smash{\hat\zeta_{t}}$ expressed in the basis of
the eigenvectors of $\smash{W_t}$. As $\smash{t}$ grows, one of the eigenvalues of
$\smash{W_t}$ dominates all others, all but one entries of $\smash{\hat\zeta_{t}}$
vanish and~$\smash{\hat\zeta_t}$ aligns perfectly with the eigenvector of
$\smash{W_t}$ associated with its largest eigenvalue. This is expected: OLSOFUL is
optimistic and always selects actions in a direction close to $\smash{\hat{\zeta}_t \to
\zeta}$ as $\smash{t \to \infty}$.

\begin{figure}
    \centering
    \begin{tabular}{@{}c@{}c@{}c@{}c@{}}
        \diagbox[width=1.cm, height=0.7cm]{$\boldsymbol{d}$}{$\boldsymbol{t}$}  &  100  & 5000 & 9999 \\
        \multirow{-3}{*}{\centering 5}   &  \includegraphics[width = 0.15\textwidth]{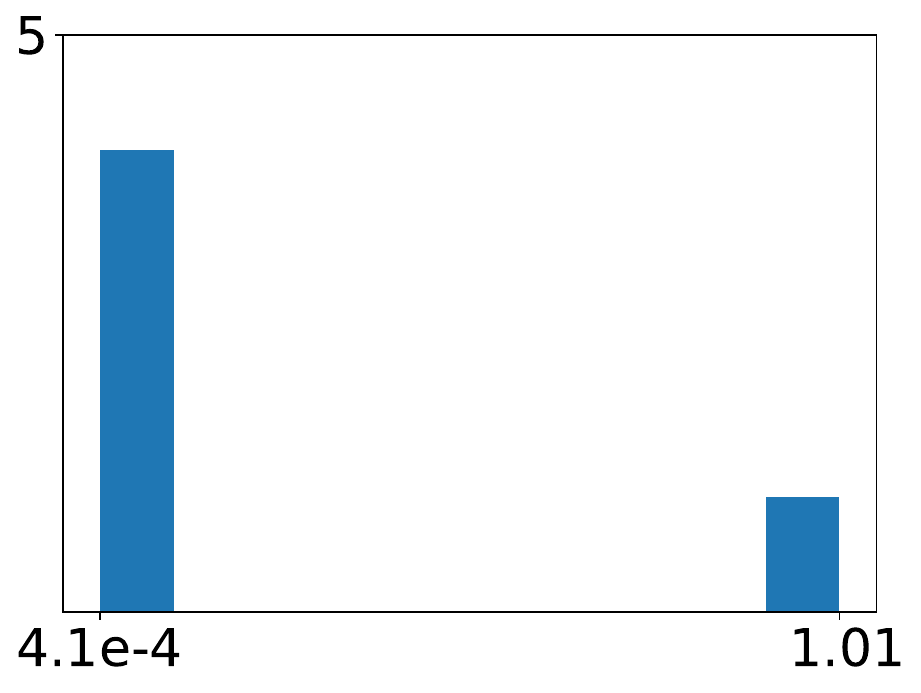} &  \includegraphics[width = 0.15\textwidth]{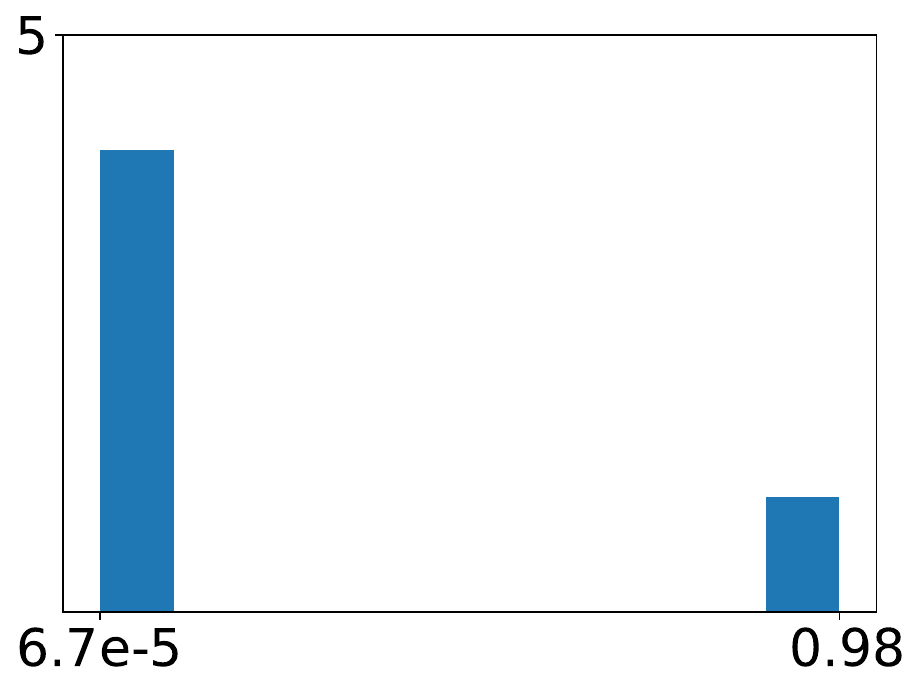} &   \includegraphics[width = 0.15\textwidth]{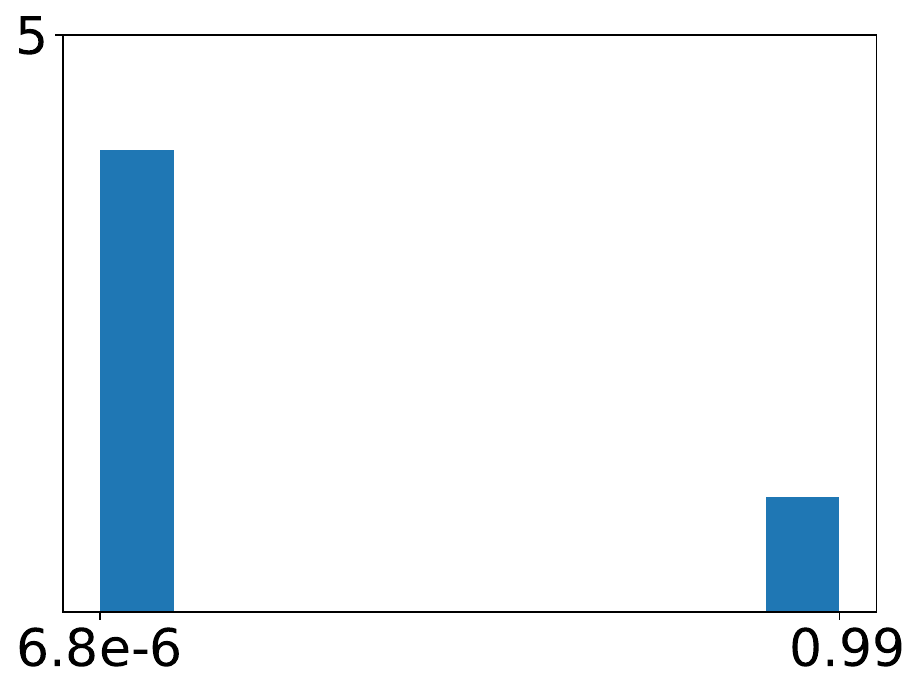} \\
        \multirow{-3}{*}{\centering 30}  &  \includegraphics[width = 0.15\textwidth]{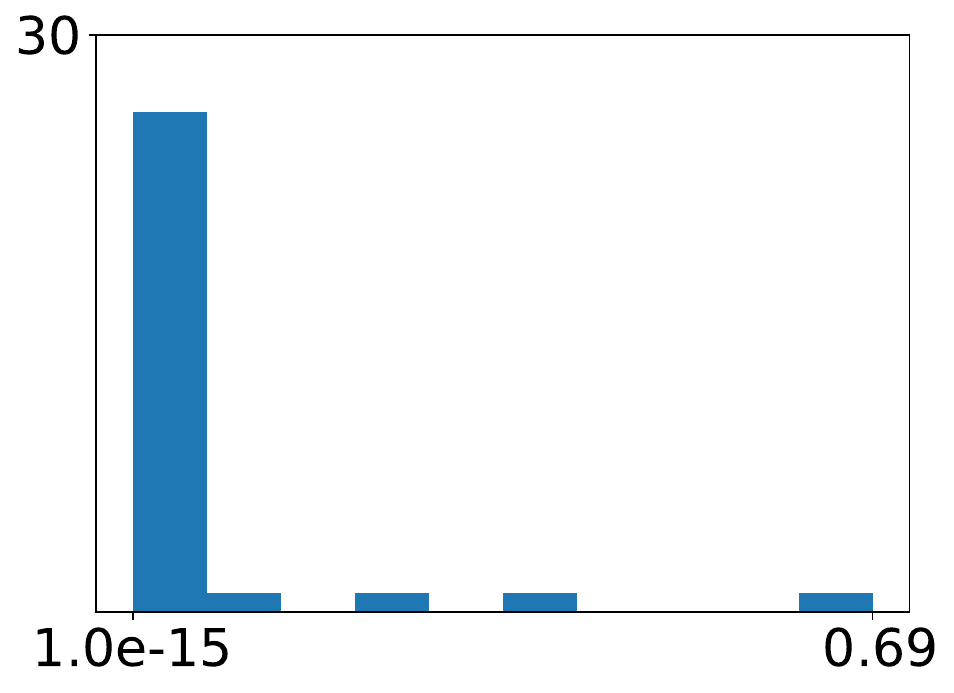} &  \includegraphics[width = 0.15\textwidth]{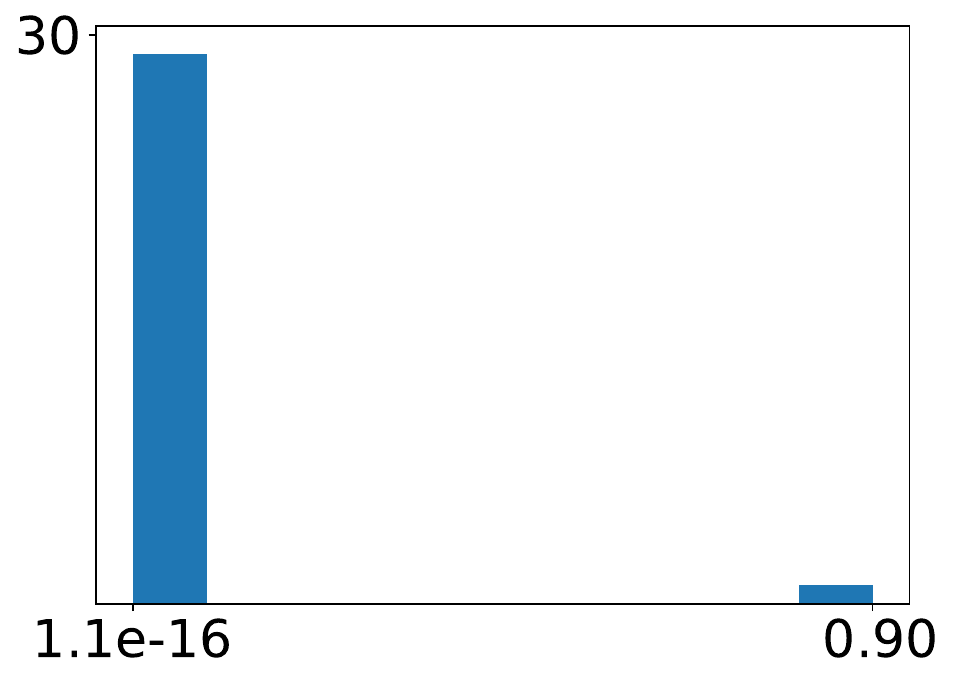} &   \includegraphics[width = 0.15\textwidth]{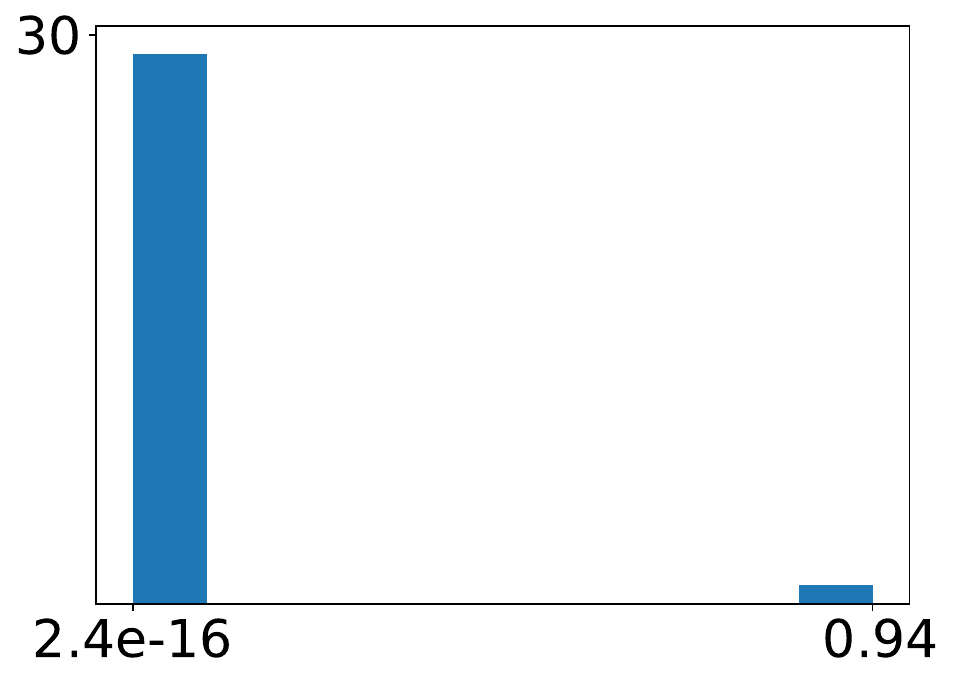} \\
        \multirow{-3}{*}{\centering 200} &  & \includegraphics[width = 0.15\textwidth]{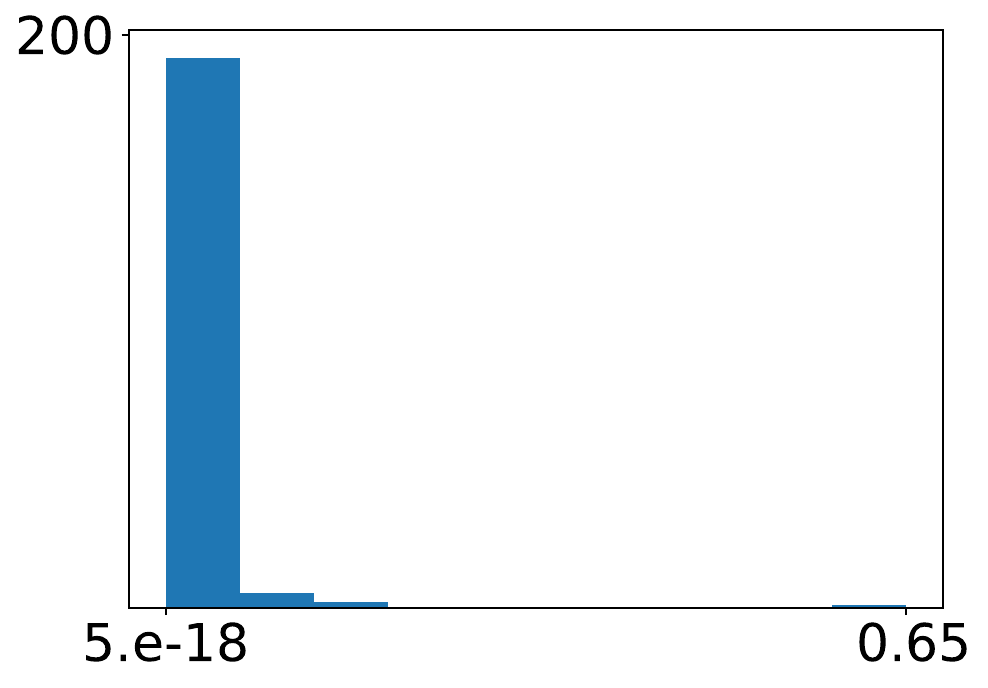} &   \includegraphics[width = 0.15\textwidth]{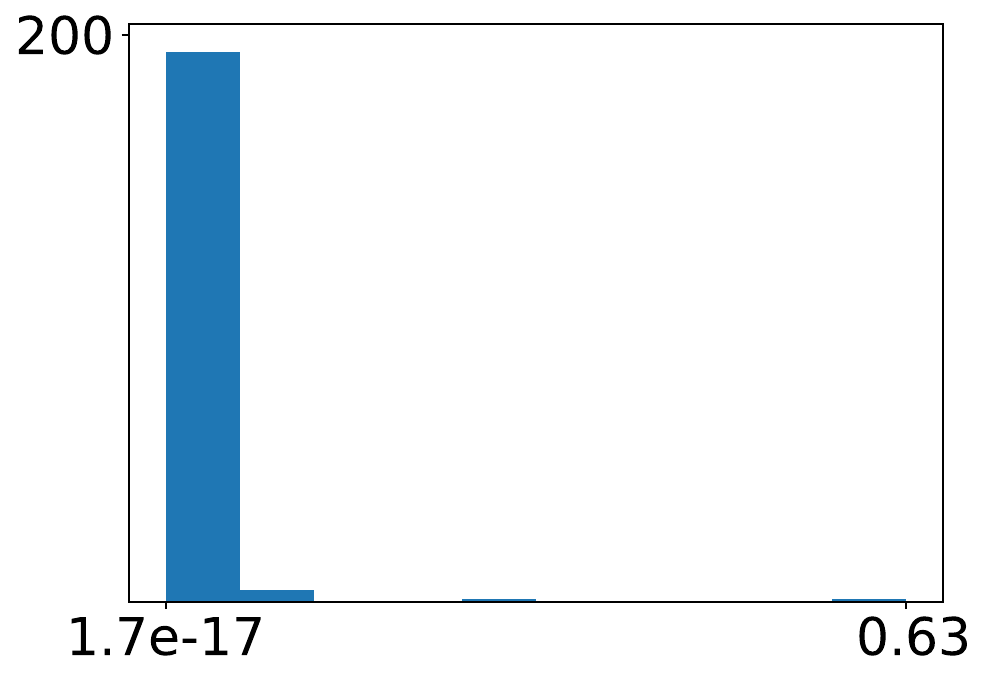} \\
    \end{tabular}
    \caption{Histogram of the entries of the centers $b$ for different dimensions and time.}
    \label{fig:histograms_thetas}
\end{figure}
\begin{figure}
    \centering
    \begin{tabular}{@{}c@{}c@{}c@{}c@{}}
        \diagbox[width=1.cm, height=0.7cm]{$\boldsymbol{d}$}{$\boldsymbol{t}$}  &  100  & 5000 & 9999 \\
        \multirow{-3}{*}{\centering 5}   &  \includegraphics[width = 0.15\textwidth]{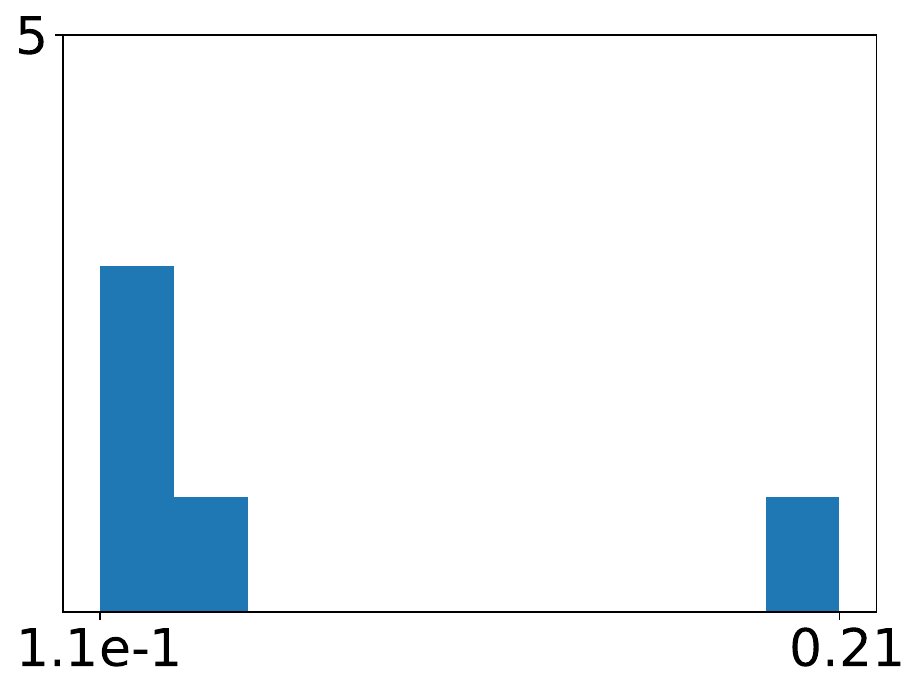} &  \includegraphics[width = 0.15\textwidth]{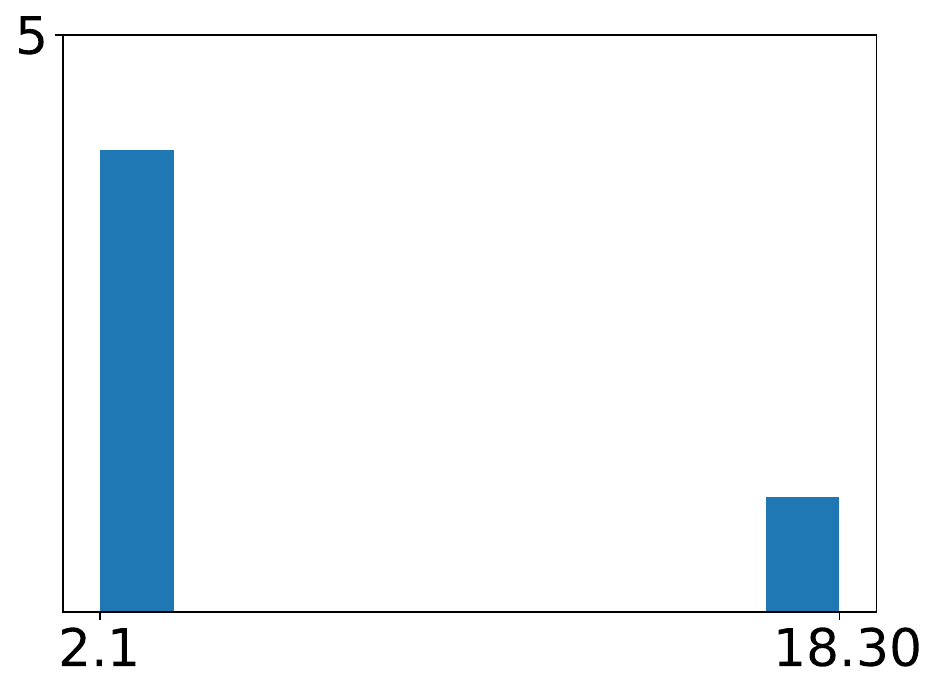} &   \includegraphics[width = 0.15\textwidth]{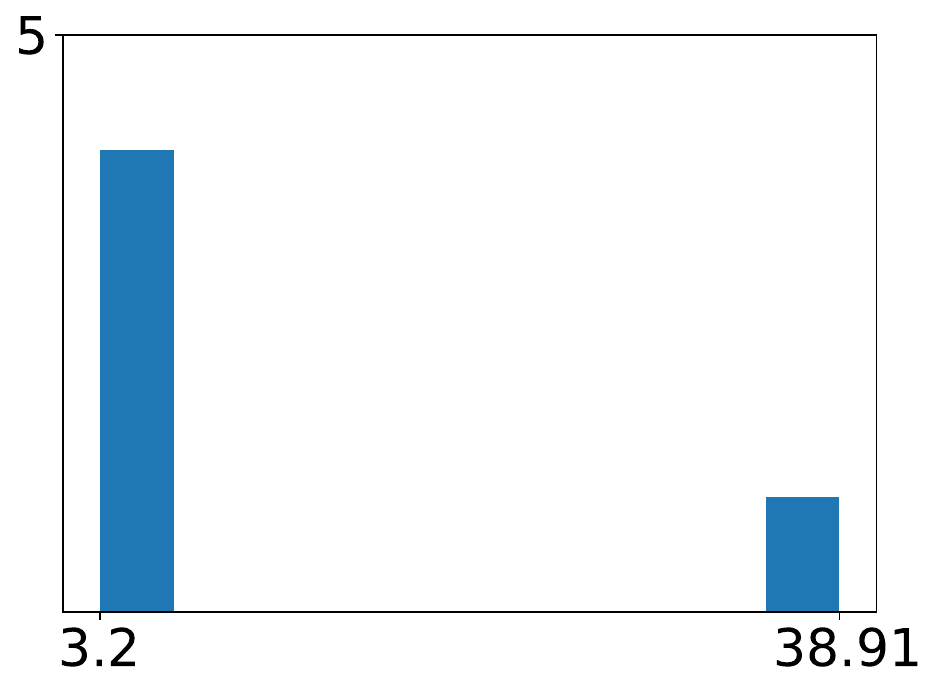} \\
        \multirow{-3}{*}{\centering 30}  &  \includegraphics[width = 0.15\textwidth]{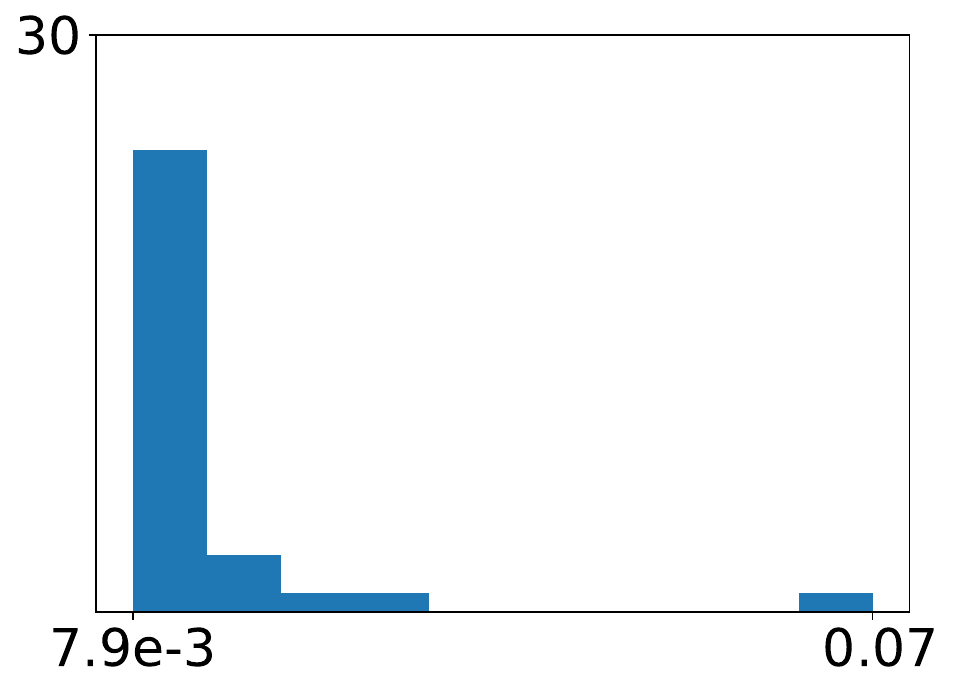} &  \includegraphics[width = 0.15\textwidth]{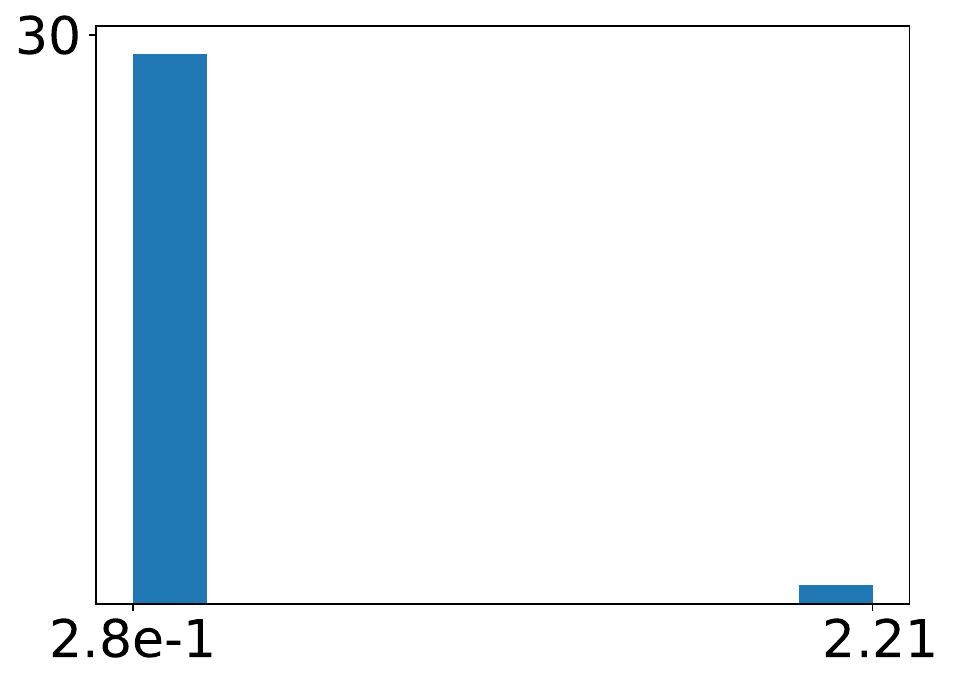} &   \includegraphics[width = 0.15\textwidth]{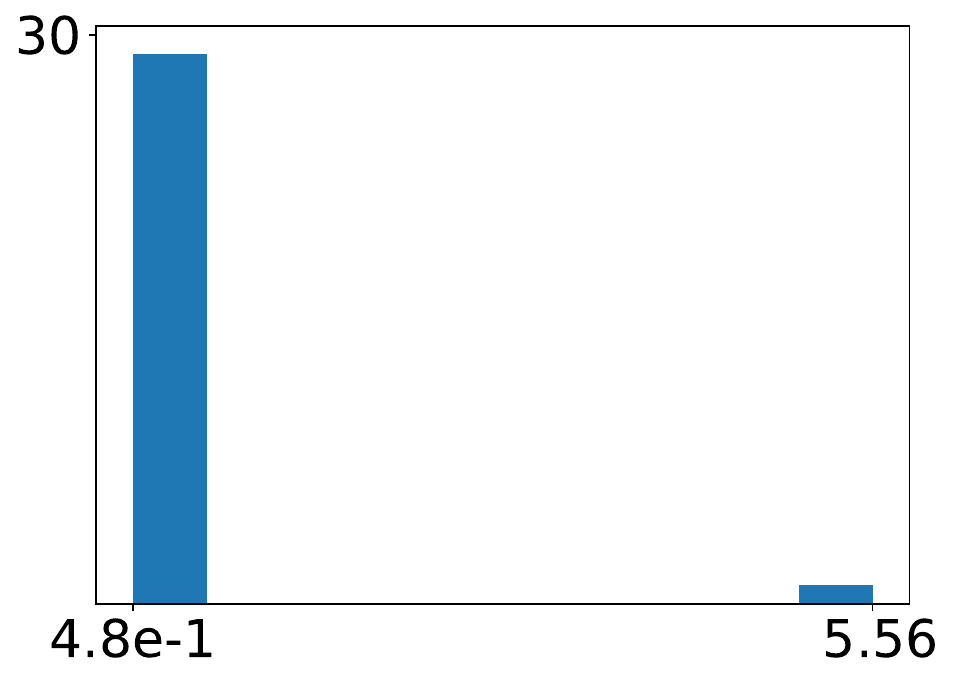} \\
        \multirow{-3}{*}{\centering 200} &  & \includegraphics[width = 0.15\textwidth]{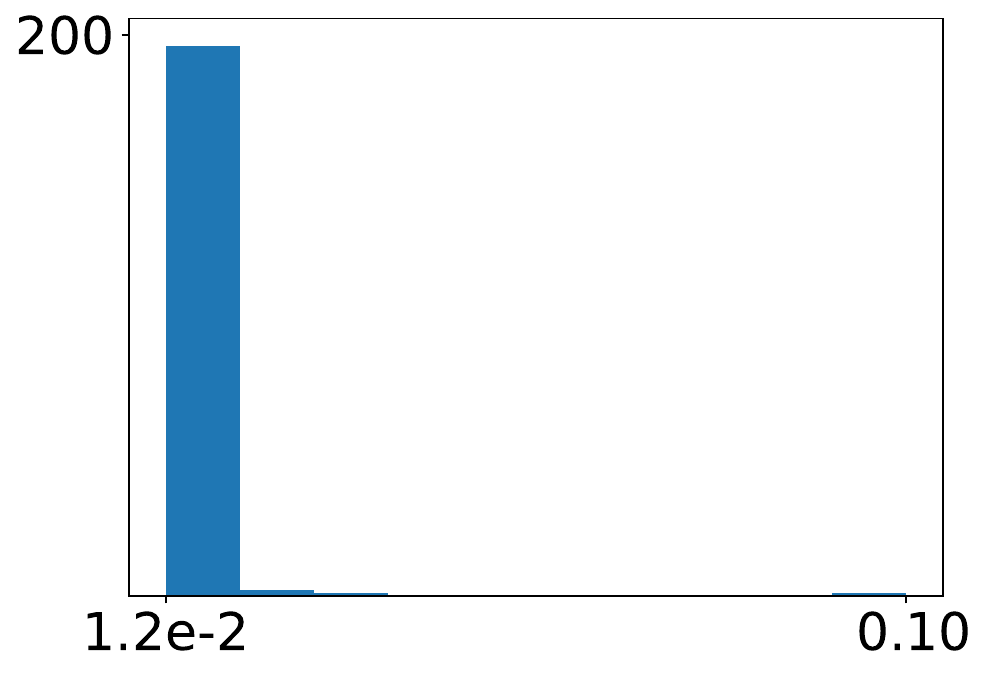} &   \includegraphics[width = 0.15\textwidth]{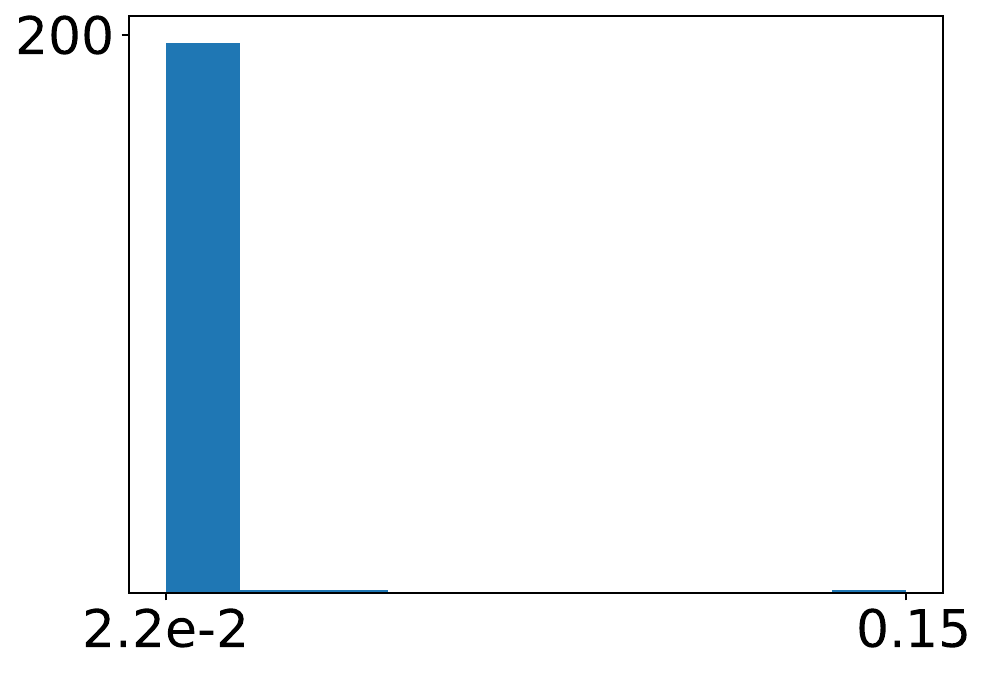} \\
    \end{tabular}
    \caption{Histogram of the eigenvalues for different dimensions and time.}
    \label{fig:histograms_eigenvalues}
\end{figure}

On Figure \ref{fig:objective_value_time}, we consider the instance of $\smash{P_{B}}$
corresponding to time step $\smash{t \in T}$, i.e., $\smash{P_B}$ with input parameters
$\smash{A = I_d}$, $\smash{W = W_T}$ and $\smash{c = \hat\zeta_T}$, and display the
performance of both \ouralgo and \oursecondalgo\!. To understand the relationship
between performance (objective value) and computation (CPU time), given time
$\smash{\tau \in \mathbb{R}^+}$ in x-axis, we represent on the y-axis the value of the
objective function of $P_B$ obtained by each algorithm after running for a duration of
time $\tau$. \ouralgo is an order of magnitude faster than \oursecondalgo\!, and both
algorithms manage to solve very high-dimensional instances with thousands of variables
$d \ge 10^3$. In contrast, off-the-shelf solvers for quadratically constrained bilinear
maximization like \cite[GUROBI]{gurobi}, do not find good solutions even in dimension
$d=3$. We believe this is due to the non-convexity of $P_B$ in its natural form.

%In fig \ref{fig:objective_value_time}, we show the objective value for each iteration as a function of time. We take as problems the design matrix $V_t$ and the estimate of $\hat{\theta}_t$ of the last iteration of OLSOFUL. We display the time when MN and \oursecondalgo terminate. We have that MN is order of magnitude faster than \oursecondalgo method especially for large dimension. Both method seems to be converging much faster than that out of the shelf solver for quadratically constrained bilinear maximization like \cite[GUROBI]{gurobi} which struggle even for small dimension $d = 3$. See \cite{zhang_linear_2025} for a discussion. We also see that for large dimension ($d =1600$) the optimal value of the objective function is much bigger that the norm of the real paramater. This indicates that the norm of the true parameter is not well estimated and more exploration is needed.

% $\boldsymbol{\|\xi\|_2}$

\begin{figure}[h]
    \begin{tabular}{c@{}c@{}c@{}c@{}c@{}}
        \diagbox[width=1cm, height=0.7cm, innerleftsep=.2cm,innerrightsep=-0.3cm]{$\boldsymbol{d}$}{$\smash{\|\xi\|_2}$} & 1 &  10 &  50 \\ 
        \multirow{-3}{*}{\centering 5}    & \includegraphics[width = 0.15\textwidth]{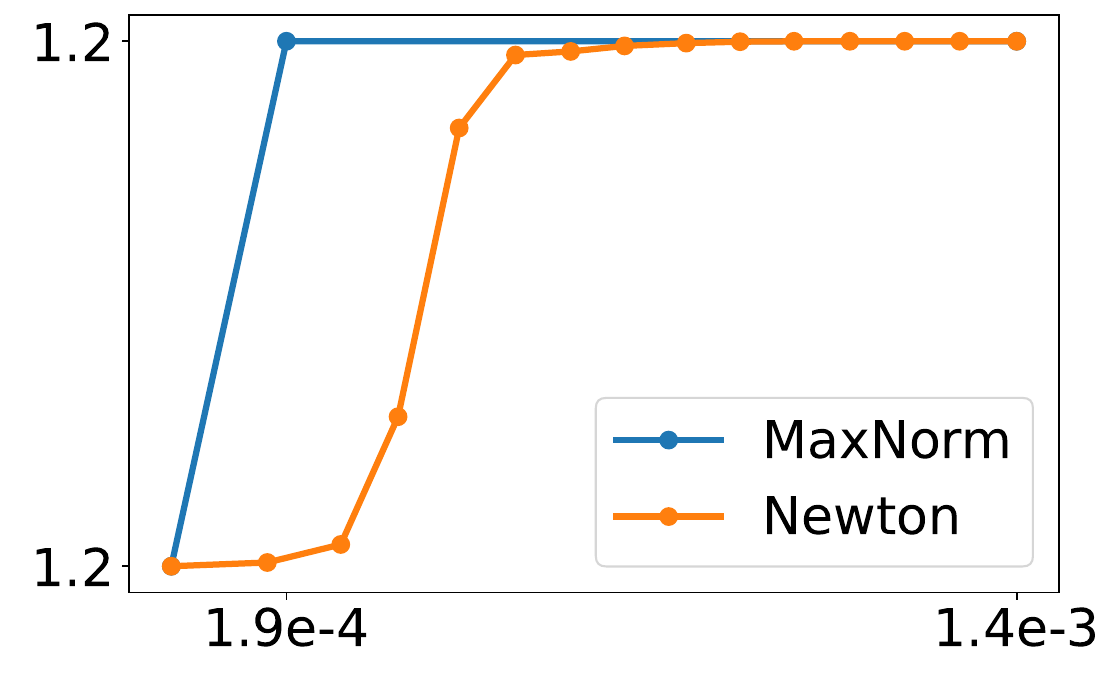} & \includegraphics[width = 0.15\textwidth]{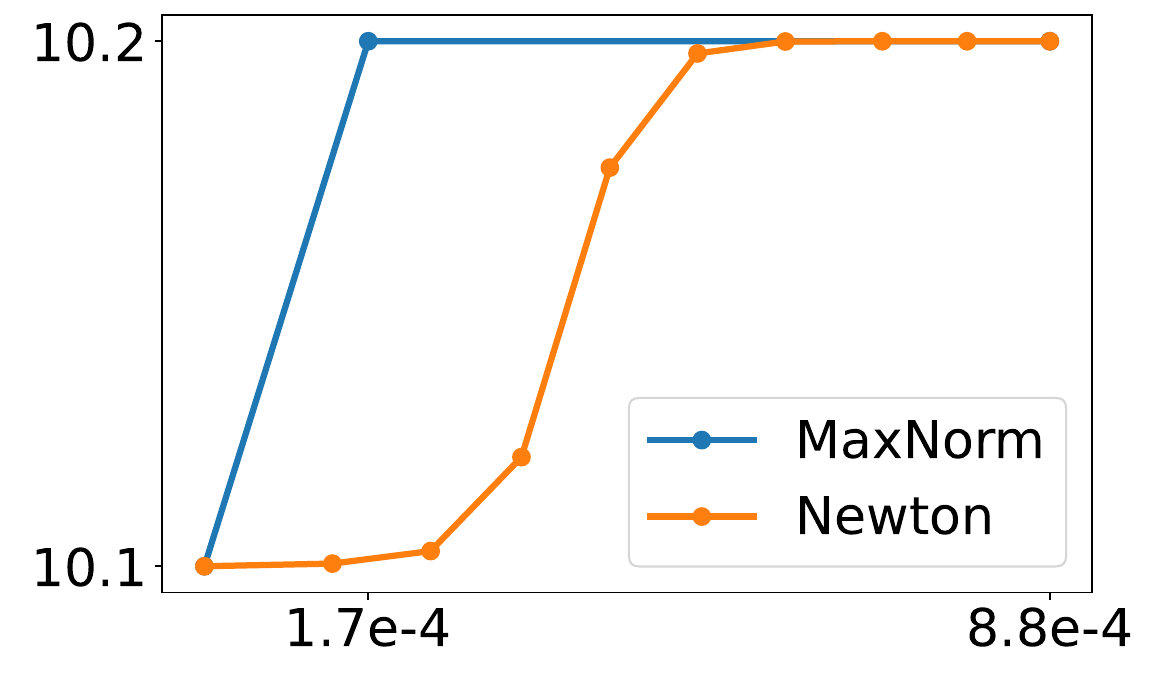} & \includegraphics[width = 0.15\textwidth]{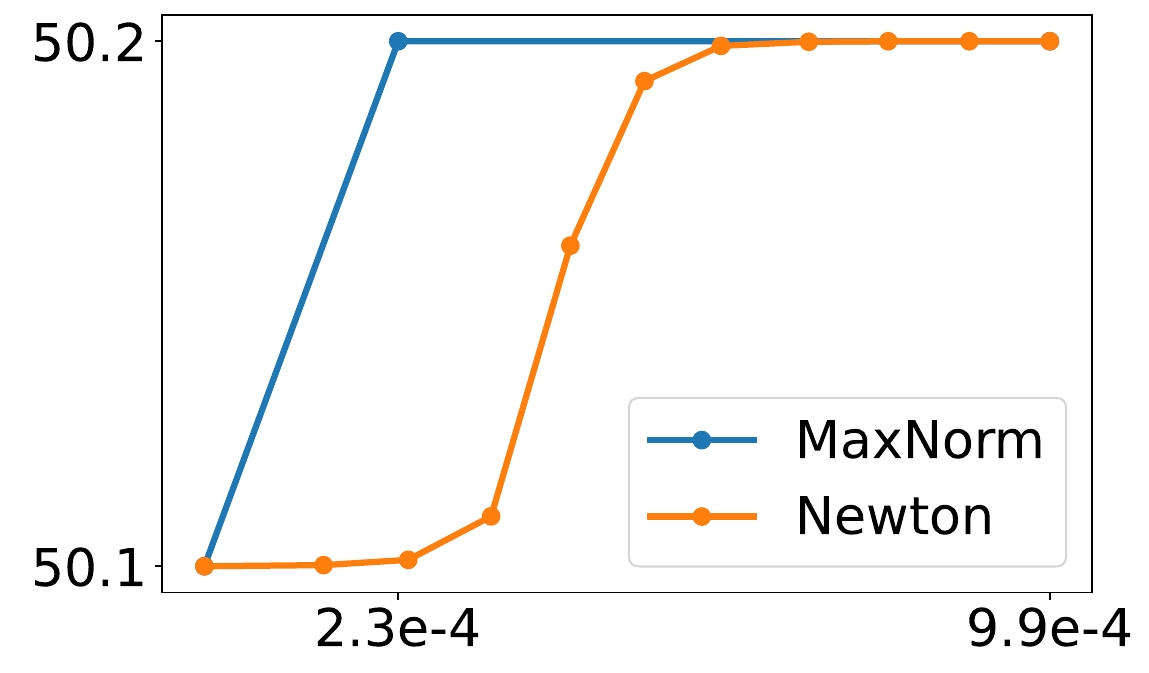} \\
        \multirow{-3}{*}{\centering 30}   & \includegraphics[width = 0.15\textwidth]{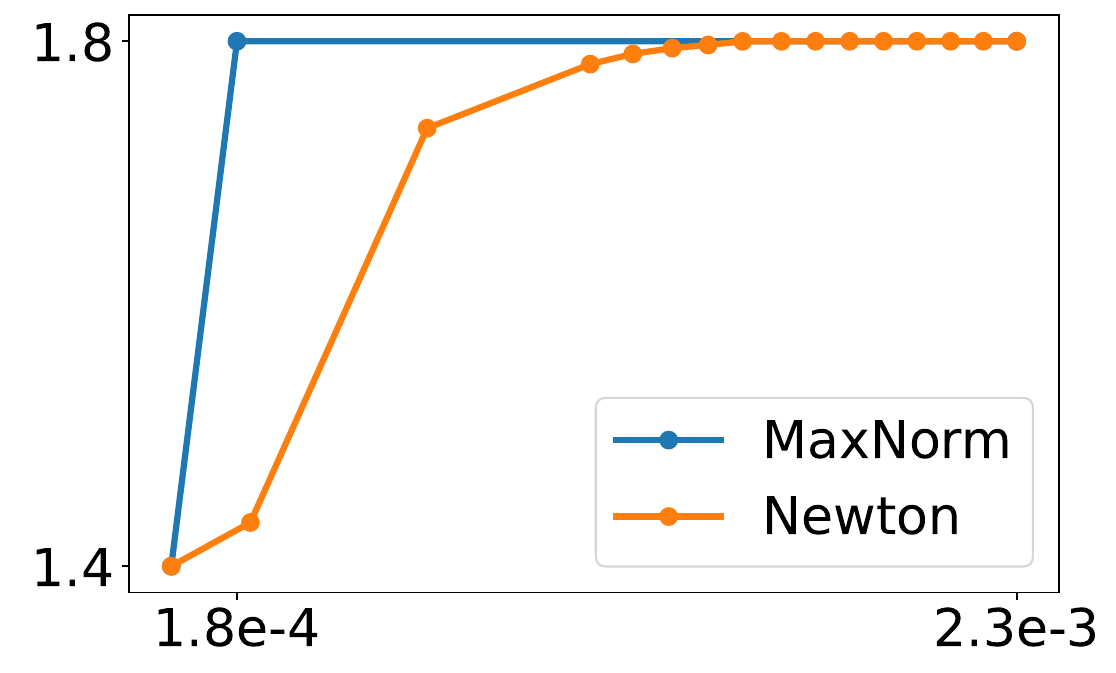} & \includegraphics[width = 0.15\textwidth]{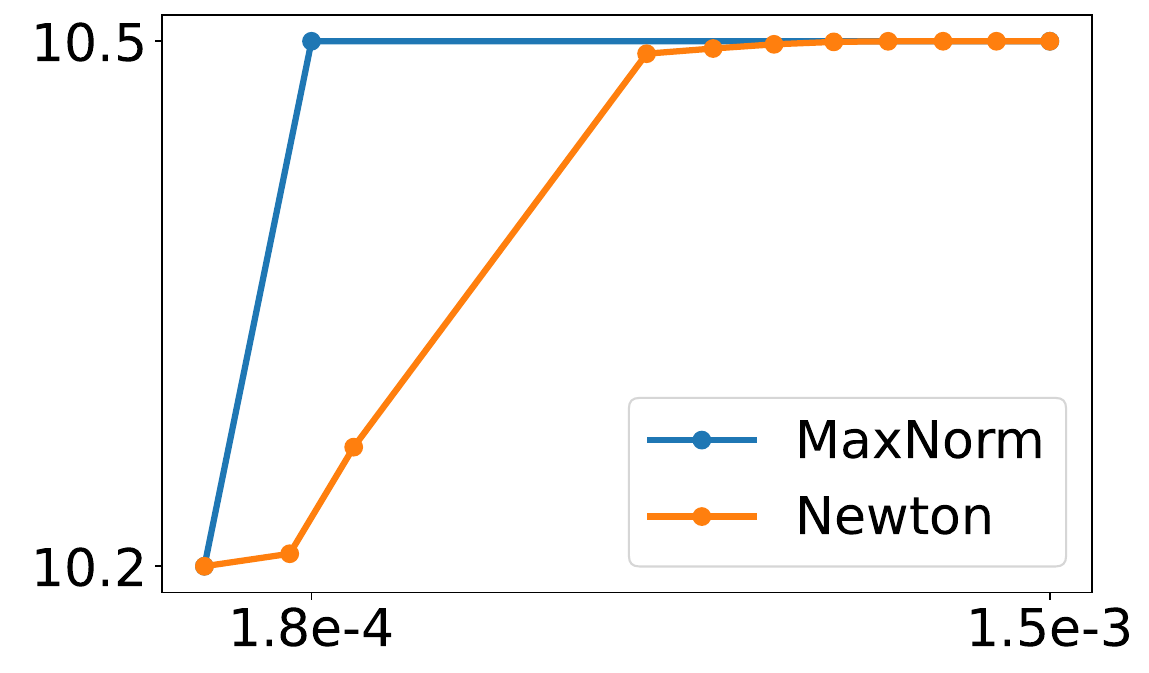} & \includegraphics[width = 0.15\textwidth]{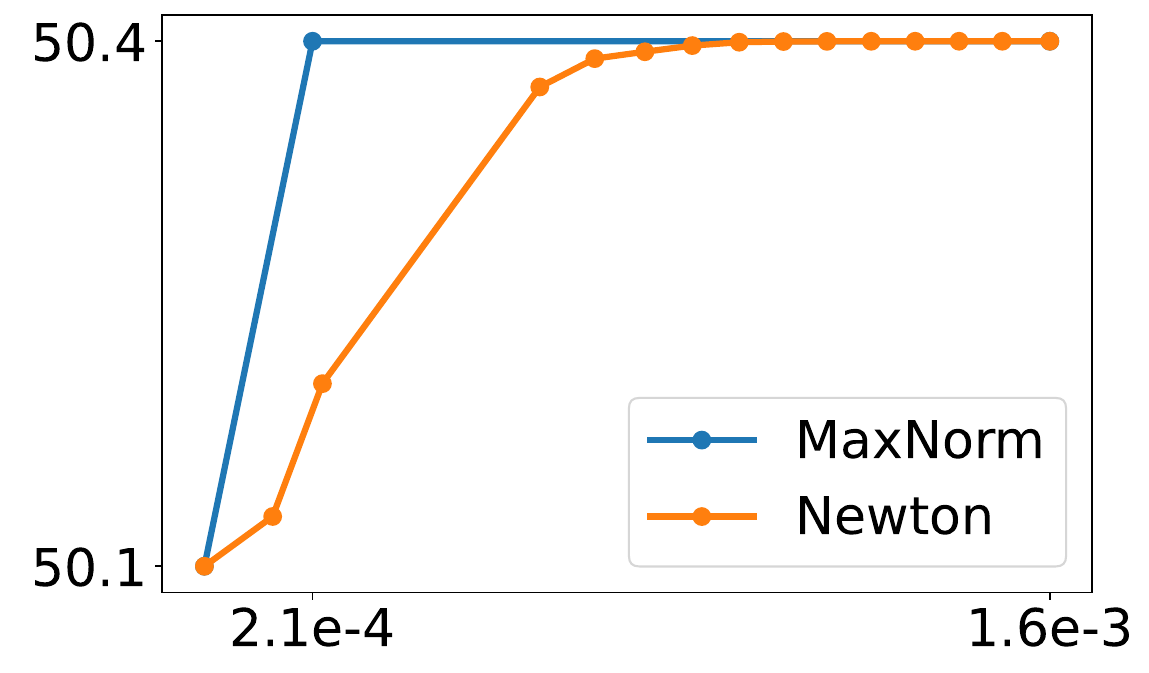} \\
        \multirow{-3}{*}{\centering 1600}  &  \includegraphics[width = 0.15\textwidth]{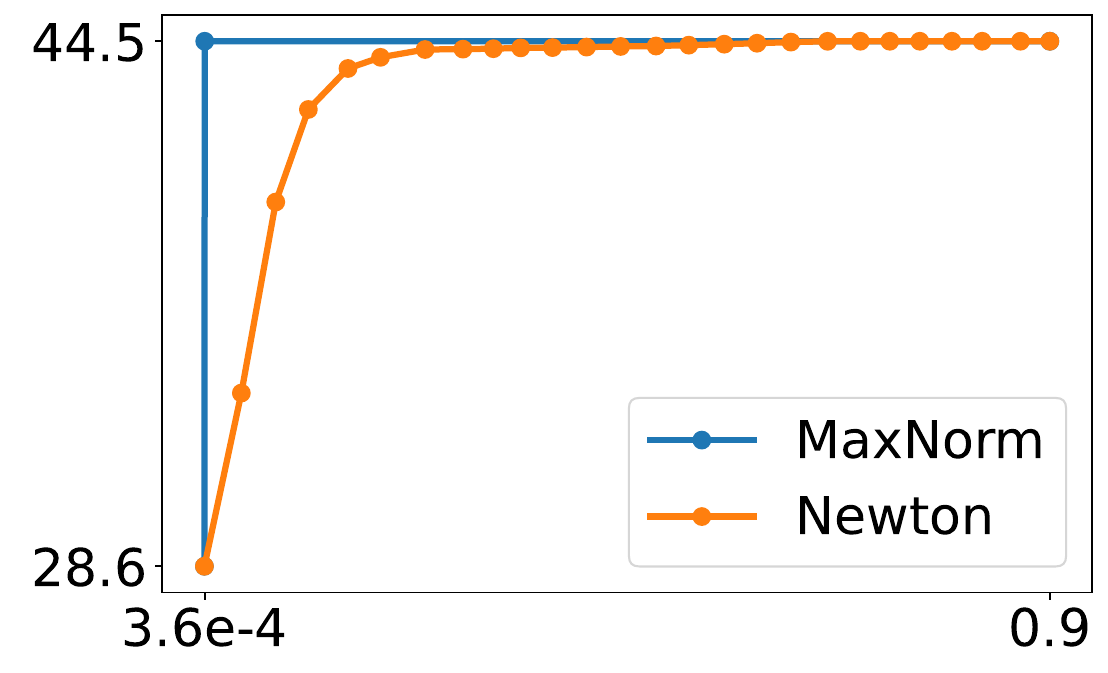} & \includegraphics[width = 0.15\textwidth]{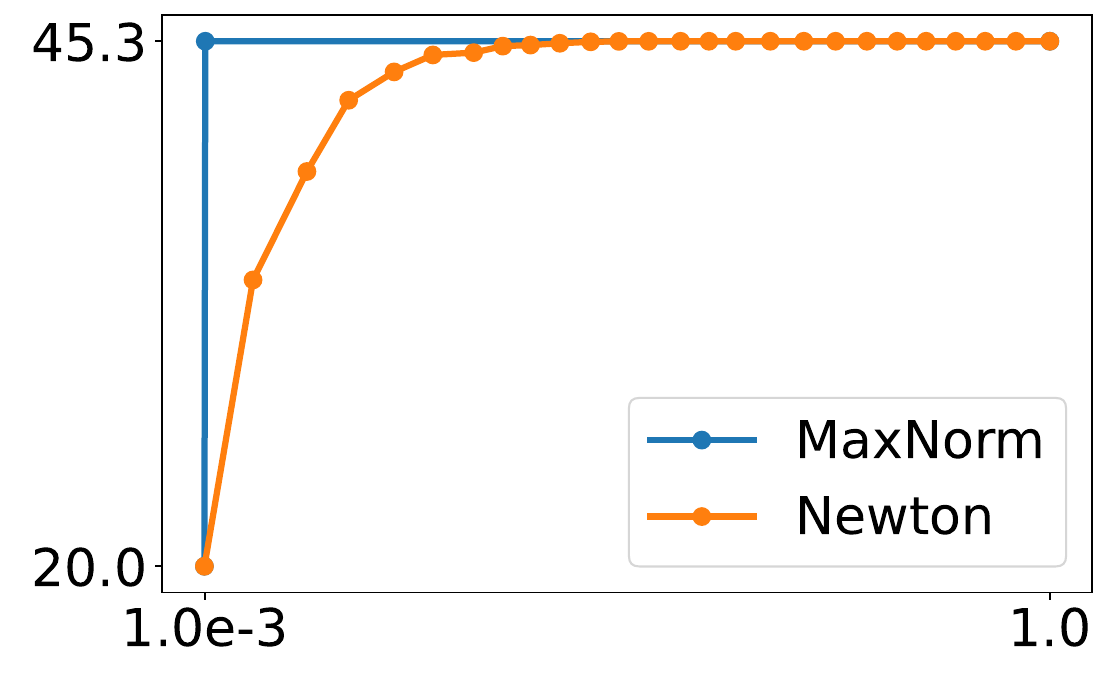} & \includegraphics[width = 0.15\textwidth]{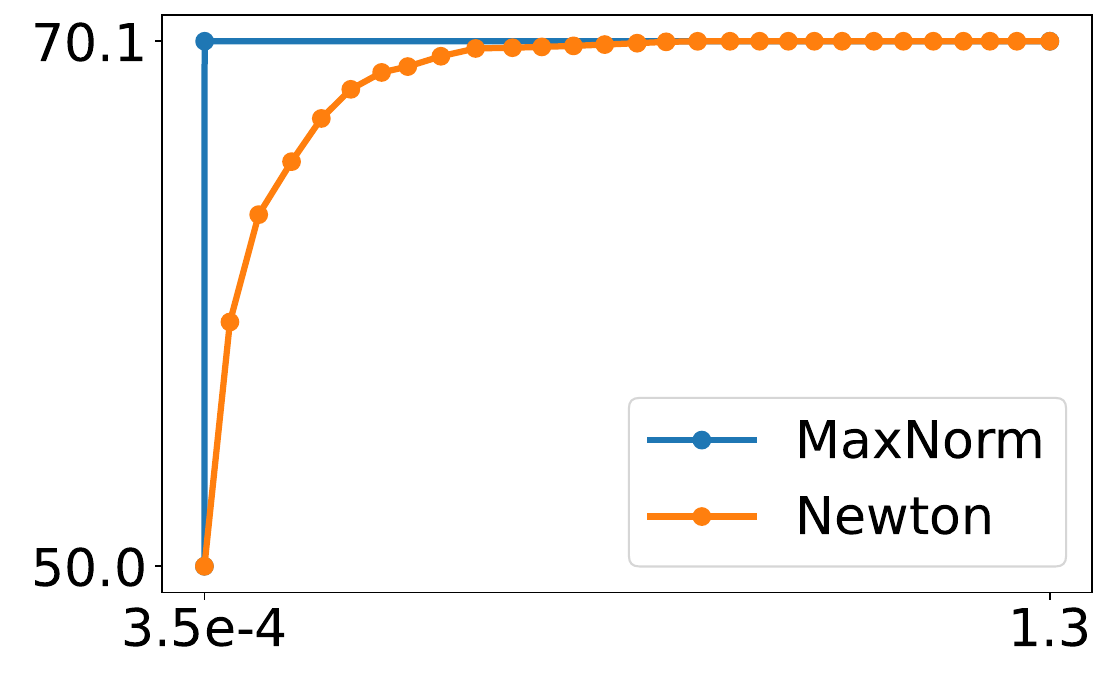} \\
    \end{tabular}
    \caption{
        Value vs. time for instances generated from runs of OLSUCB, as a
        function of $d$ and $\|\zeta\|_{2}$.}
    \label{fig:objective_value_time}
\end{figure}
\begin{remark}
% \textbf{Remark.}
We assume that \ouralgo and \oursecondalgo receive as an input the matrix $A^{1/2} W
A^{1/2}$ already in diagonalized form $U^\top \Lambda U$.
The timing of experiments does not include the diagonalization, which allows to flesh out
the differences between \ouralgo and \oursecondalgo more precisely. Diagonalization is
fast: even in very high dimensions ($d=1600$) it only takes about $1.5$ seconds.
\end{remark}
\textbf{Simulated Eigenvalues and Centers \quad}
To study how the input parameters of $P_B$ influence algorithm performance, we generate
synthetic data by sampling $(b, \lambda)$ from well-chosen distributions. We focus on
the impact of the condition number $\smash{\gls{kappa} := \lambda_{1}/\lambda_{d}}$.
Three cases are considered.
1) Stacked distribution: $\lambda_i = a \kappa$ if $i=1$ and $\lambda_i= a$ otherwise.
2) Random Stacked distribution: $\lambda_1 = a \kappa$ and $\smash{(\lambda_i)_{i \in
[2,d]} = a {\bf sort}(U)}$ (sorted in decreasing order), with $U$
uniformly distributed on $[0,1]^{d-1}$.
3) Ordered Exponential distribution: $\smash{\lambda = \kappa {\bf sort}(E)/2}$ with $E$
a vector of i.i.d. exponentially distributed variables with mean~$1$.
In all cases we draw $b$ from a random stacked distribution: $b_1 = 1 $ and
$\smash{(b_i)_{i \in [2,d]} = 0.1 \,U}$.
For each plotted value, we average over $100$ independent problem instances of $P_B$. 
The median and $\smash{90\%}$ quantiles of the computation time are presented. 
The code provided as supplementary material allows for even more possible distributions, allowing the reader to generate results beyond those presented here. 

In Figure~\ref{fig:performance_as_kappa} we present the performance of algorithms in dimension $d = 500$ as a function of the conditioning number $\kappa$. 
Surprisingly the computation time does not increase with the conditioning number and even decreases for the Exponential distribution (case 3).
The Random Stacked distribution (case 2) is $10$ times more difficult to solve than the Stacked distribution, indicating that small perturbations of the smallest eigenvalues can have a big impact on the speed of \oursecondalgo.
However, this does not seem to affect \ouralgo.
\begin{figure}
	\centering
    \begin{tabular}{@{}c@{}c@{}c@{}}
        $S(0.1, \kappa, 500)$ &  $RS(1, \kappa,500)$ & $Exp_\sigma(\frac{\kappa}{2}, 500)$ \\
	\includegraphics[width=\figuresize]{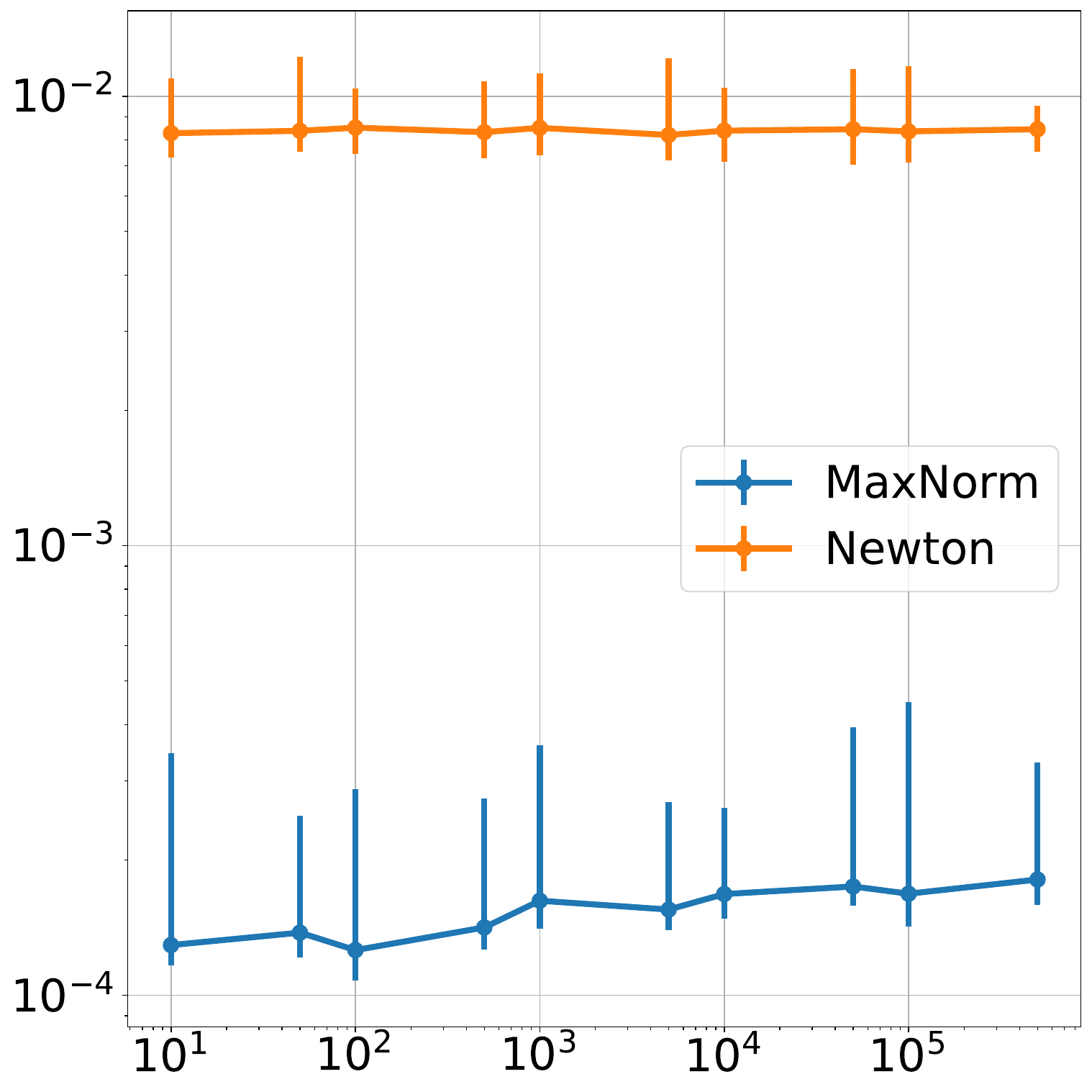} & \includegraphics[width=\figuresize]{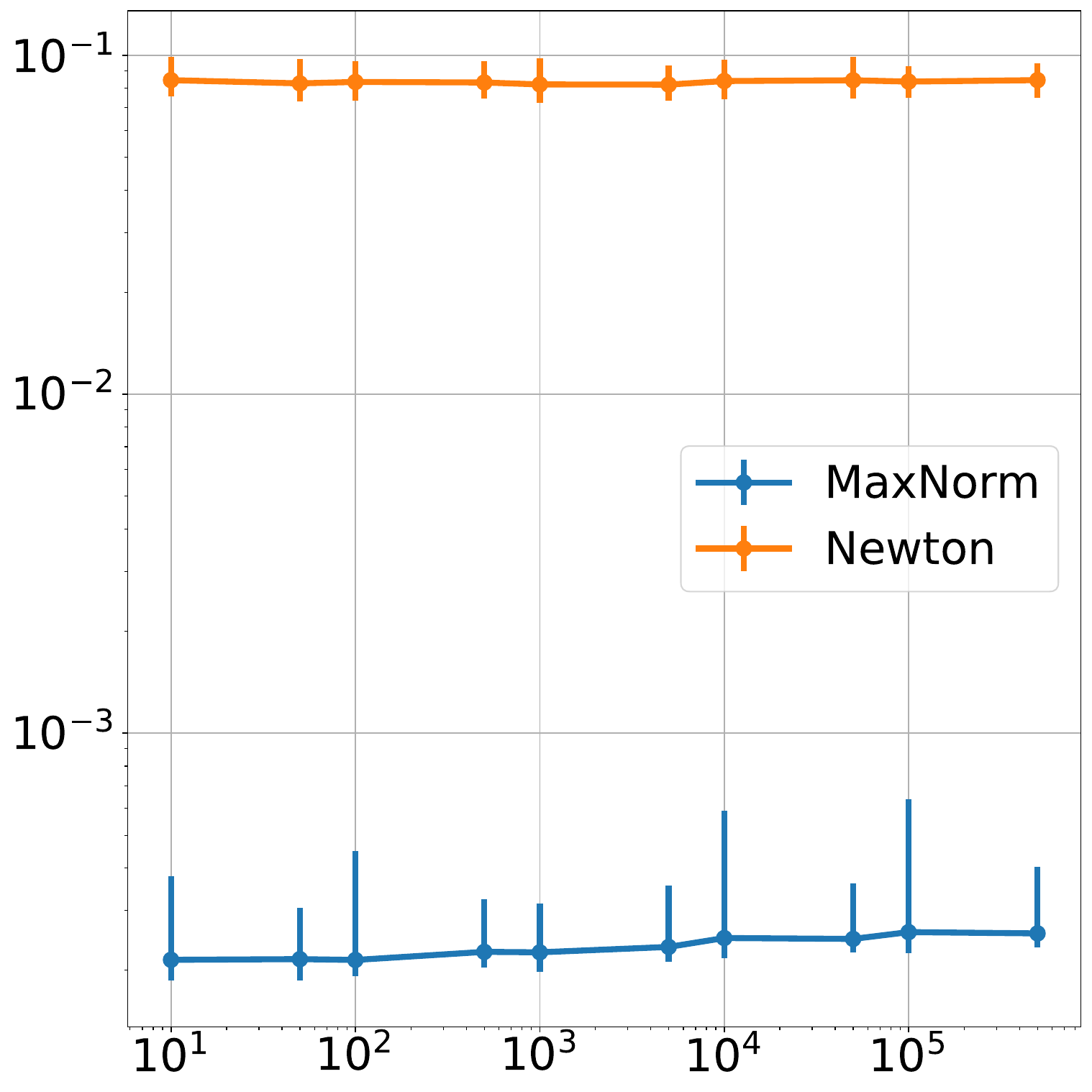} &\includegraphics[width=\figuresize]{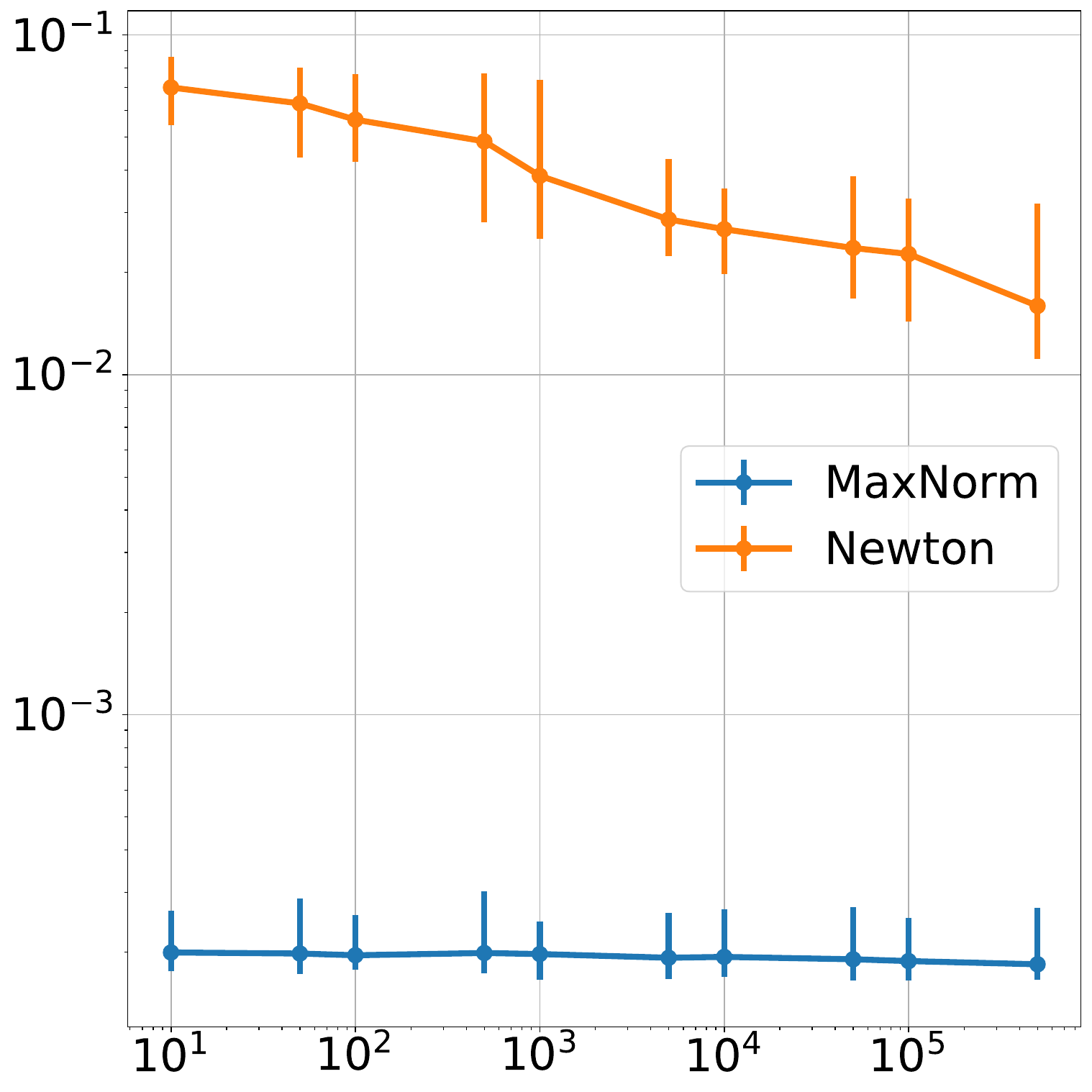} \\
    \end{tabular}
    \caption{Running time of \ouralgo and \oursecondalgo as a function of $\kappa$ for different distributions.}
    \label{fig:performance_as_kappa}
\end{figure}   

In Figure~\ref{fig:performance_as_dim}, we present the running time of \ouralgo and \oursecondalgo as a function of dimension $d$ for a fixed, large condition number $\kappa = 10^5$.
Running times seems to grow quadratically in $d$ for \oursecondalgo and linearly in $d$ for \ouralgo which performs best.

\begin{figure}
	\centering
    \begin{tabular}{@{}c@{}c@{}c@{}}
        Stacked & Random Stacked & Random Exp \\
	\includegraphics[width=\figuresize]{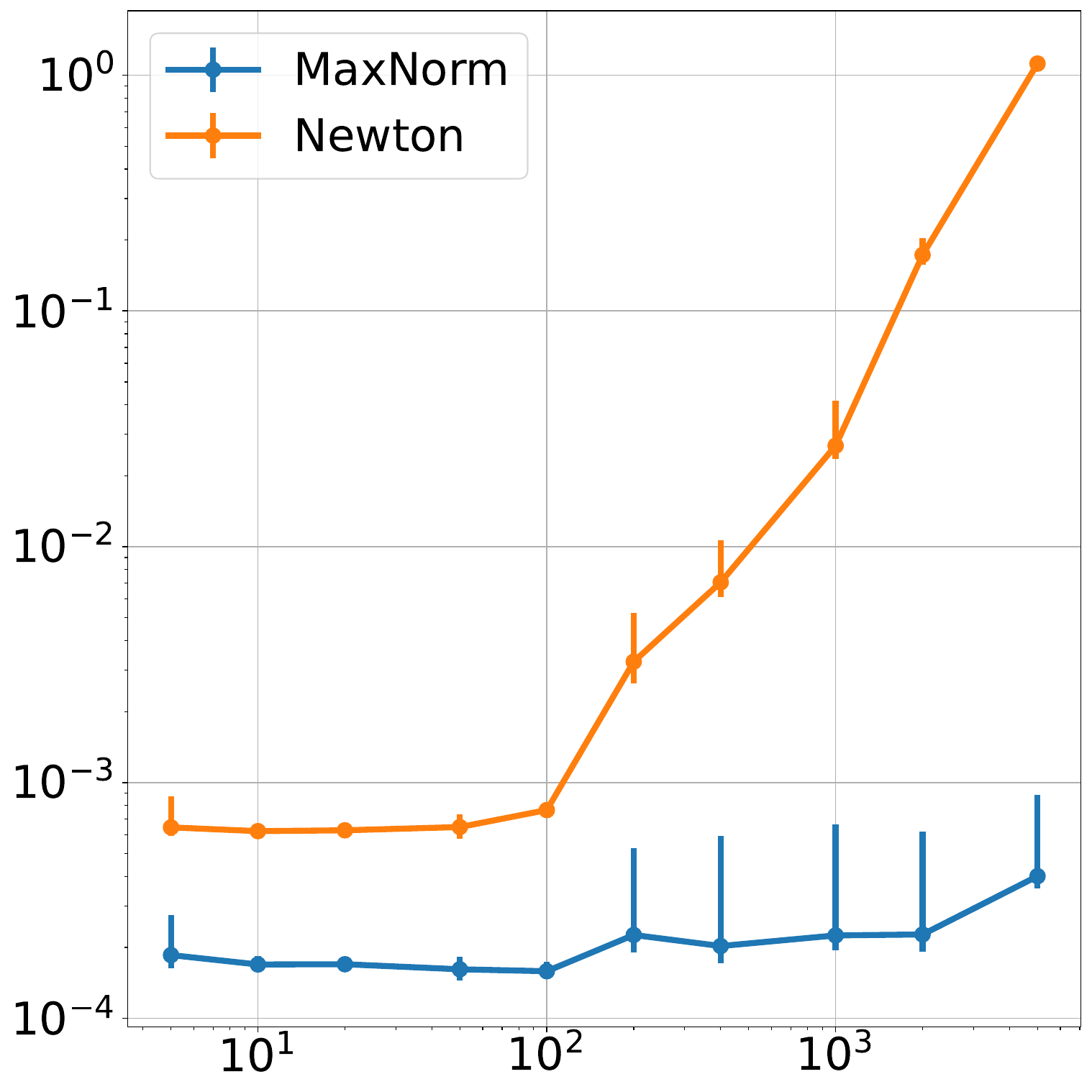} & \includegraphics[width=\figuresize]{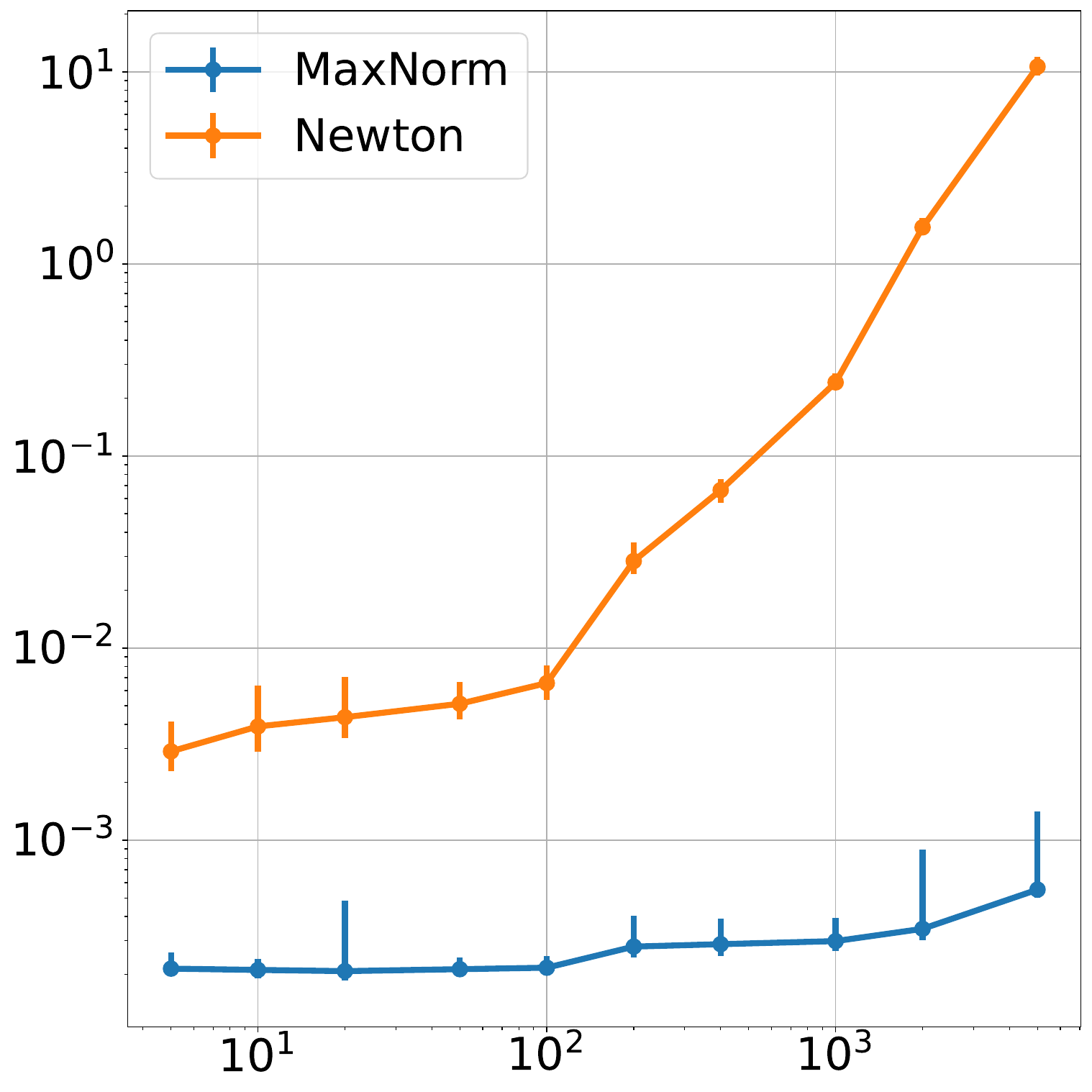} &\includegraphics[width=\figuresize]{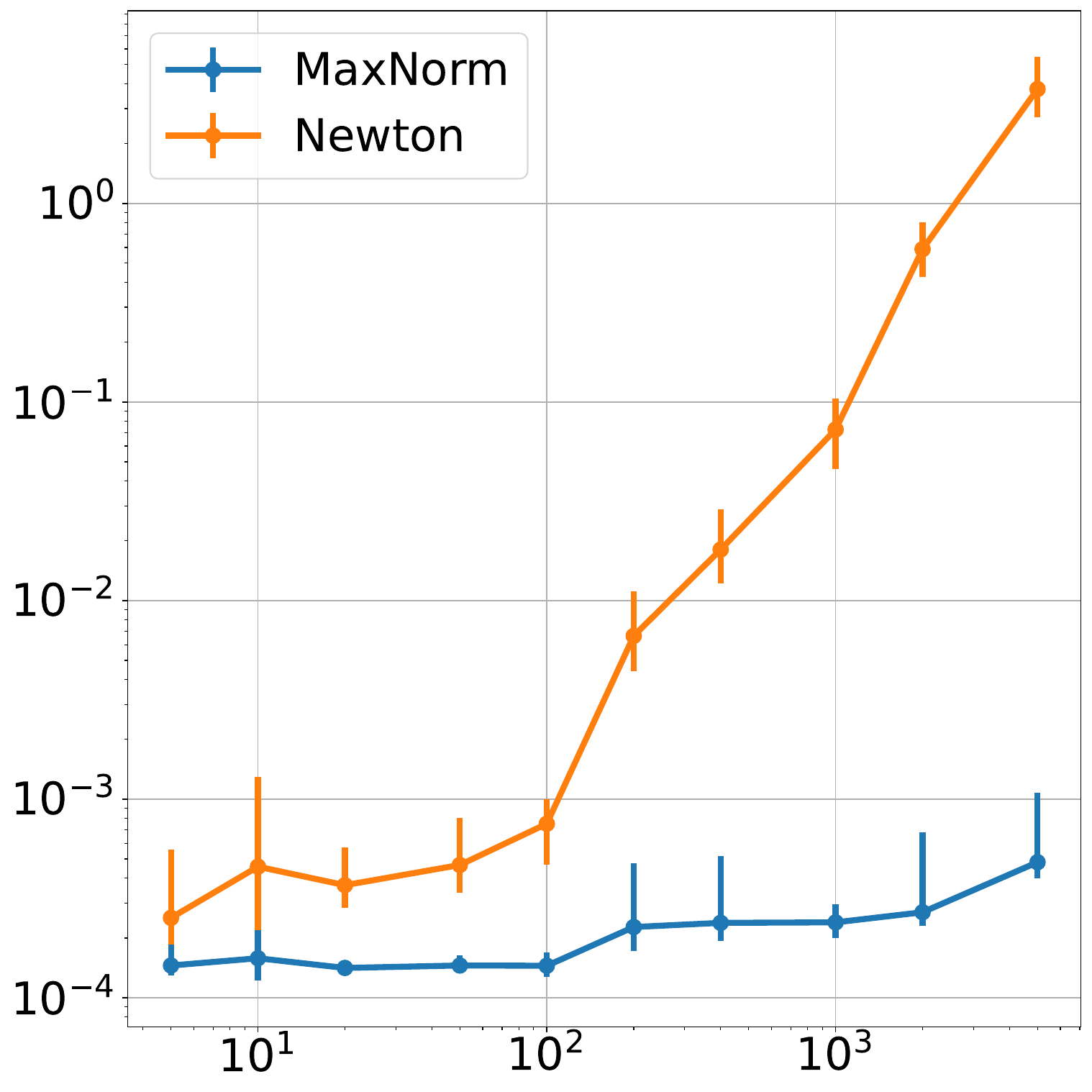} \\
    \end{tabular}
    \caption{Running time of \ouralgo and \oursecondalgo as a function of $d$ for different distributions.}
    \label{fig:performance_as_dim}
\end{figure}

\textbf{Performance of OLSOFUL as a function of the dimension \quad}
We now compare the regret of OFUL~\cite{abbasi2011improved} and
OLSOFUL~\cite{gales2022norm-agn} using \ouralgo and \oursecondalgo as subroutines in
order to solve $P_B$ with E2TC~\cite{zhang_linear_2025} and the non inflated version of
TS~\cite{abeille_when_2025} for linear bandits with smooth actions set.
We consider the unknown vector~$\zeta$ with norm $\|\zeta\|_{2} = 10$ and average the
performance of each algorithm over $10$ independent runs of length $T = 10^4$. In Figure~\ref{fig:olsucb_vs_ts_vs_e2tc} we plot the average final regret $R_T(\zeta)$
as a function of the dimension $d$ as well as $95\%$ confidence intervals.
Using \ouralgo over \oursecondalgo seems to slightly improve the regret of the algorithm while being much faster this may due to numerical imprecision as \oursecondalgo requires more intensive computation. See Appendix~\ref{app:numerical_experiments} for the time taken for each algorithm to perform a run. 
% with horizon $T = 10^4$. 
\begin{figure}
    \centering
    \includegraphics[width=0.45\textwidth]{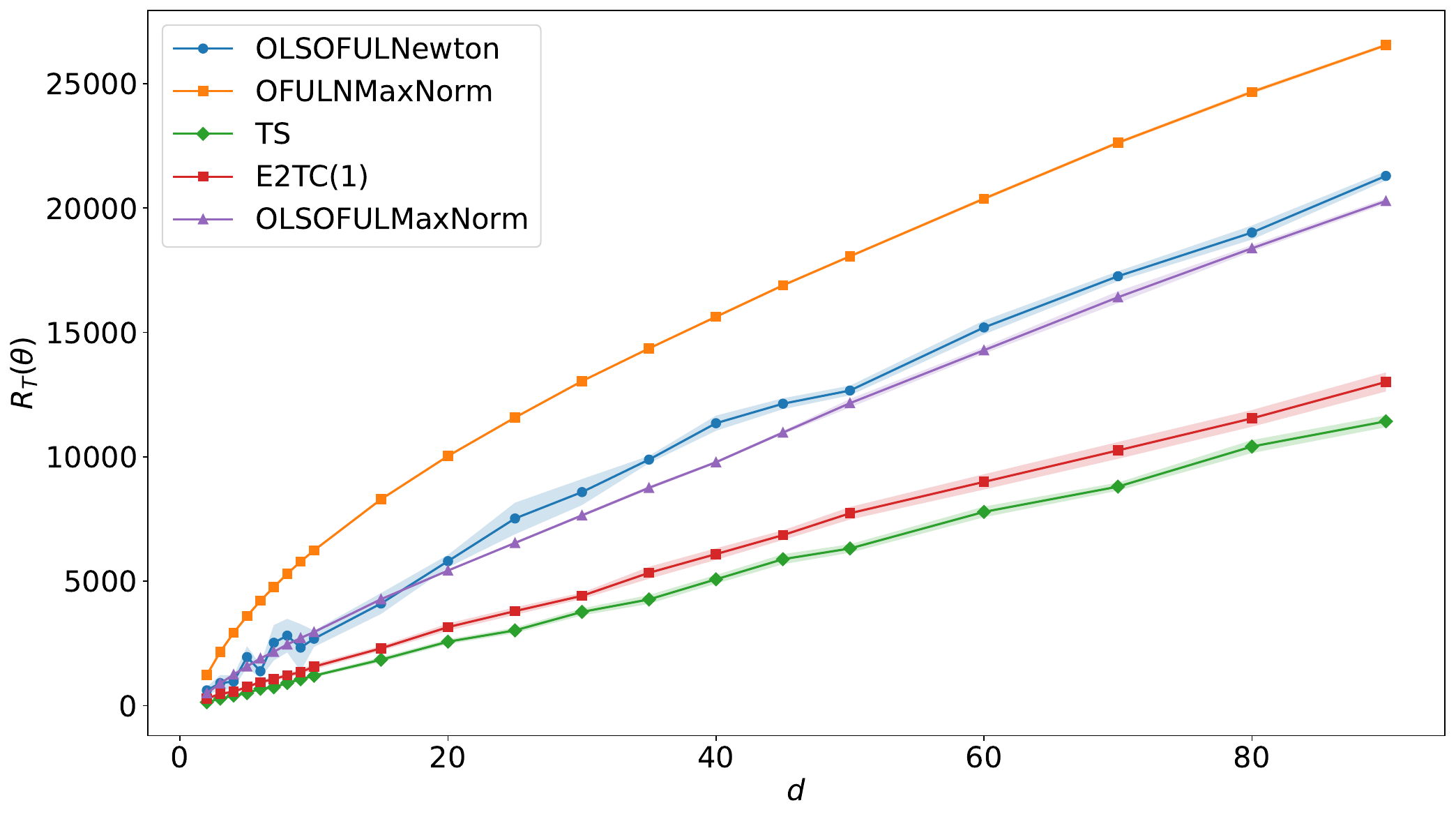}
    \caption{Regret of linear bandit algorithms vs. dimension $d$.}
    \label{fig:olsucb_vs_ts_vs_e2tc}
\end{figure}

% \vspace{-1em}
\section{Future Work}\label{sec:Conclusion}
% \vspace{-1em}
% We have considered bilinear maximization over products of convex sets motivated by the problem of implementing optimistic algorithms for linear bandits with sub-gaussian rewards.
% We have shown that, in some cases, e.g. when $\mathcal{X}$ is the $\ell_p$ ball with $p>2$, no efficient algorithms exist unless $\mathcal{P} = \mathcal{NP}$.
% We have also proposed efficient algorithms when $\mathcal{X}$ is a centered ellipsoid,
% and this efficiency is backed by both theory, as well as extensive numerical
% experiments. 

Our work opens three questions: i) whether there exists a systematic
way to solve bilinear maximization for more families of sets $\mathcal{X}$, ii) 
whether bilinear maximization is tractable if $\Theta$ is not an ellipsoid, which would
enable solving linear bandits with more general (e.g., heavy-tailed) noise. iii) How to use our method in the framework of robust optimization
with ellipsoidal uncertainty \cite{ben-tal_robust_1998, henrion2001lmi}.

\newpage

\printbibliography
\section*{Checklist}

\begin{enumerate}

  \item For all models and algorithms presented, check if you include:
  \begin{enumerate}
    \item A clear description of the mathematical setting, assumptions, algorithm, and/or model. [Yes]
    \item An analysis of the properties and complexity (time, space, sample size) of any algorithm. [Yes]
    \item (Optional) Anonymized source code, with specification of all dependencies, including external libraries. [Yes]
  \end{enumerate}

  \item For any theoretical claim, check if you include:
  \begin{enumerate}
    \item Statements of the full set of assumptions of all theoretical results. [Yes]
    \item Complete proofs of all theoretical results. [Yes]
    \item Clear explanations of any assumptions. [Yes]     
  \end{enumerate}

  \item For all figures and tables that present empirical results, check if you include:
  \begin{enumerate}
    \item The code, data, and instructions needed to reproduce the main experimental results (either in the supplemental material or as a URL). [Yes]
    \item All the training details (e.g., data splits, hyperparameters, how they were chosen). [Not Applicable]
    \item A clear definition of the specific measure or statistics and error bars (e.g., with respect to the random seed after running experiments multiple times). [Yes]
    \item A description of the computing infrastructure used. (e.g., type of GPUs, internal cluster, or cloud provider). [Yes]
  \end{enumerate}

  \item If you are using existing assets (e.g., code, data, models) or curating/releasing new assets, check if you include:
  \begin{enumerate}
    \item Citations of the creator If your work uses existing assets. [Yes]
    \item The license information of the assets, if applicable. [Yes]
    \item New assets either in the supplemental material or as a URL, if applicable. [Yes]
    \item Information about consent from data providers/curators. [Not Applicable]
    \item Discussion of sensible content if applicable, e.g., personally identifiable information or offensive content. [Not Applicable]
  \end{enumerate}

  \item If you used crowdsourcing or conducted research with human subjects, check if you include:
  \begin{enumerate}
    \item The full text of instructions given to participants and screenshots. [Not Applicable]
    \item Descriptions of potential participant risks, with links to Institutional Review Board (IRB) approvals if applicable. [Not Applicable]
    \item The estimated hourly wage paid to participants and the total amount spent on participant compensation. [Not Applicable]
  \end{enumerate}

\end{enumerate}

%%%%%%%%%%%%%%%%%%%%%%%%%%%%%%%%%%%%%%%%%%%%%%%%%%%%%%%%%%%%

\clearpage
\appendix
\thispagestyle{empty}
% Supplementary material: To improve readability, you must use a single-column format for the supplementary material.
\onecolumn

\section{Proofs for Section~\ref{sec:Model}}

\noindent \emph{Proof of Proposition~\ref{proposition:LinUCB_approx}.}

For both cases, the reward and observation at time $t$ is given by
\[ y_t = x_t + z_t \, ,\]
with $(z_t)_t$ is i.i.d subgaussian with variance proxy $\sigma^2= 1$. 
We consider the estimator and the confidence ellipsoid defined by OFUL~\cite{abbasi2011improved}, so that 
\[
\begin{aligned}
	V_t := 1 + \sum_{s=1}^{t-1} x_s^2\, , \quad f(t) := 1 + \sqrt{\log(T + tT)}\, , \quad \hat{\zeta}_t : = \frac{1}{V_t}\sum_{s=1}^{t-1} y_sx_s \, .
\end{aligned}
\]
This construction ensures that with high probability and uniformly in time, the true parameter belongs to the confidence ellipsoid
\begin{equation*}
	\PP( \forall t \in [T], \zeta \in C_t ) \geq 1-\frac{1}{T},
\end{equation*}
where
\begin{equation*}
	C_t  := \left\{ \theta \in \RR, | \theta-\hat{\zeta}_t |  \leq \frac{f(t)}{\sqrt{V_t}} \right\} \, .
\end{equation*}
\textbf{The case $\cX = \{1-\epsilon, 1\}$.}

Let us take $\cX = \{x_{\epsilon}, x_1\}$ with $x_\epsilon = 1-\epsilon$ and $x_1 = 1$ and the unknown parameter $\zeta = 1$. 
Assume that the high probability event $G_T := \{ \forall t \in [T], \zeta \in C_t \}$ happens. 

\underline{Step 1: Show that after a certain amount of time $t_{\epsilon}$ the confidence ellipsoid is a subset of $\RR+$.}
%  we need to show this for the whole ellipsoid, not only for the max if we want to generalize to any action set containing $\{a_\epsilon, a_1\}$
Under event $G_T$, we have that $\forall t \in [T], |\hat{\zeta}_t - 1| \leq \frac{f(t)}{\sqrt{V_t}}$. This implies that $\forall t \in [T], 1 - 2 \frac{f(t)}{\sqrt{V_t}} < \min_{\theta \in C_t} \theta$. 
However, because we have two actions $V_t \geq 1 + (1-\epsilon)^2 t$ thus:
\[
\begin{aligned}
1 - 2 \frac{f(t)}{\sqrt{V_t}} \geq 1 - 2 \frac{1+ \sqrt{\log(T+tT)}}{1 + (1-\epsilon) \sqrt{t}} \geq 1 - 2 \frac{1+ \sqrt{\log(T+T^2)}}{(1-\epsilon) \sqrt{t}}
\end{aligned}
\]

So when $t > t_{\epsilon} := 4 \frac{1+ \sqrt{\log(T+T^2)}}{(1-\epsilon)^2}$ we have that $\min_{\theta \in C_t} > 0$,  therefore $\forall t \geq t_{\epsilon},  C_t \subset \RR_+$

 \underline{Step 2: Show that when $C_t \subset \RR_+$ the approximate $\epsilon-$\algoname{LinUCB} algorithm can always choose the suboptimal action $1-\epsilon$.}

 When $C_t \subset \RR_+$ we have that:
\[ (1-\epsilon) \times \max_{\theta \in C_t} \theta = (1-\epsilon) \times 1 \times \max_{\theta \in C_t} \theta  \geq (1-\epsilon) \max_{x \in \cX, \theta \in C_t} x \theta. 
\]
 
Therefore $x_\epsilon, \max_{\theta \in C_t}$ is a solution with approximation ratio $\epsilon$ and the algorithm that plays $x_\epsilon$ after $t_\epsilon$ is indeed an $\epsilon-$\algoname{LinUCB} algorithm.

\underline{Step 3: Lower bound on the regret.}

Playing $x_\epsilon$ instead of $x_1$ induces a regret of $\epsilon$ at each time step. Therefore, the regret of the $\epsilon-$\algoname{LinUCB} algorithm that always plays $x_\epsilon$ after $t_\epsilon$ is lower bounded by:
\[
 R_T(\epsilon) \geq \PP( \forall t \in [T], \zeta \in C_t )(T - t_\epsilon)\epsilon \geq (1-\frac{1}{T})(T - 4\frac{1+ \sqrt{\log(T+T^2)}}{(1-\epsilon)^2})\epsilon.
\]
And finally :
\[
\lim_{T \rightarrow +\infty} \frac{R_T(\zeta)}{T} \geq \epsilon
\]

\newpage
\textbf{The case $\cX = [a,b]$ with $a < 0 < b$.}

The confidence set at time $t$ is 
\begin{align*}
	C_t  &= \left\{ \theta \in \RR, | \theta-\hat{\zeta}_t |  \leq \frac{f(t)}{\sqrt{V_t}} \right\} \, .\\
	     &= \left[ \hat\zeta_t - { f(t) \over \sqrt{V_t}}, \hat\zeta_t + { f(t) \over \sqrt{V_t}}  \right]
\end{align*}
The optimization problem is
\begin{align*}
	\max_{(x,\theta) \in \mathcal{X} \times C_t} x^\top \theta  
	&= \max\left( \max_{x \ge 0,x \in \mathcal{X}} x^\top \left(\hat\zeta_t + { f(t) \over \sqrt{V_t}} \right) ,  \max_{x \le 0,x \in \mathcal{X}} x^\top \left(\hat\zeta_t - { f(t) \over \sqrt{V_t}} \right) \right) \\
	&= \max\left( a \left(\hat\zeta_t + { f(t) \over \sqrt{V_t}} \right) , b \left(\hat\zeta_t - { f(t) \over \sqrt{V_t}} \right) \right)
\end{align*}
The exact OFUL algorithm selects the optimal solution 
\begin{align*}
	x_t^{e} &= \begin{cases}
		a  \text{ if } a \left(\hat\zeta_t + \frac{f(t)}{\sqrt{V_t}}   \right) \ge b \left(\hat\zeta_t - \frac{f(t)}{\sqrt{V_t}}   \right), \\
			b  \text{ if } a \left(\hat\zeta_t + \frac{f(t)}{\sqrt{V_t}}   \right) <  b \left(\hat\zeta_t - \frac{f(t)}{\sqrt{V_t}}  \right),  
		\end{cases}
\end{align*}
Now consider an algorithm that selects 
\begin{align*}
	x_t^{a} &= (1-\epsilon) x_t^e  
\end{align*}
one may readily check that this algorithm is indeed an approximate OFUL algorithm with approximation ratio $\epsilon$ in the sense that 
\begin{align*}
	\max_{\theta \in C_t} (x_t^{a})^\top \theta \ge (1-\epsilon) \max_{(x,\theta) \in \mathcal{X} \times C_t} x^\top \theta
\end{align*}

Since $x_t^a \in [(1-\epsilon) a, (1-\epsilon) b]$ we have that 
\begin{align*}
	\max_{x \in \mathcal{X}} x^\top \zeta - (x_t^a)^\top \zeta \ge 
	\max_{x \in [a,b]} x^\top \zeta - \max_{x \in [(1-\epsilon) a, (1-\epsilon) b]} x^\top \zeta
	=
	\begin{cases}
		\epsilon b \zeta &\text{ if } \zeta \ge 0	\\
		\epsilon a \zeta &\text{ if } \zeta \le 0	
	\end{cases}
\end{align*}
In both cases one may check that 
\begin{align*}
	\max_{x \in \mathcal{X}} x^\top \zeta - (x_t^a)^\top \zeta \ge \epsilon \max_{x \in \mathcal{X}} x^\top \zeta 
\end{align*}
Hence the regret of this algorithm is lower bounded by 
\begin{align*}
	R_T(\zeta) \ge T \epsilon \max_{x \in \mathcal{X}} x^\top \zeta 
\end{align*}
\qed

\label{app:proofs_model}

\noindent {\emph{Proof of Proposition~\ref{proposition:maximization_over_theta}.}

% \begin{proof}
Consider $x$ fixed so that we are maximizing a linear function $x^\top \theta$ over a
convex set $\Theta$. From the KKT conditions, the maximizer of $\theta \mapsto x^\top
\theta$ subject to $\|\theta-c\|_{W} \le 1$ is $\theta = c + W^{-1} x/\|x\|_{W^{-1}}$,
with value $x^\top \theta = x^\top c + \|x\|_{W^{-1}}$, and replacing in the definiton
of $P_B$ proves the result. 
\qed
% \end{proof}

\noindent \emph{Proof of Proposition~\ref{proposition:polynomially_many_vertices}.}

% \begin{proof}
The objective function $x \mapsto x^\top c + \|x\|_{W^{-1}}$ is convex by
convexity of norms, so its maximum over $x \in \mathcal{X}$ which is a polytope must be
attained at one of the vertices $\mathcal{V}$. The result then follows from applying
Proposition~\ref{proposition:maximization_over_theta}, so that solving $P_X$ is
equivalent to solving $P_B$. 
\qed
% \end{proof}

\noindent \emph{Proof of Proposition~\ref{proposition:centered_ellipsoid}.} 

Consider the change of variables $u = A^{1/2} x$, $v = W^{1/2} \theta$, problem $P_B$ is
\[
\max_{\|x\|_{A} \le 1 } \max_{\|\theta\|_{W} \leq 1 } 
x^\top \theta 
= 
\max_{\|u\|_{2} \le 1 } \max_{\|v\|_{2} \leq 1 } 
u^\top (W A)^{-1/2} v 
\]
which is maximized at $u = v = \psi$, as claimed.
\qed

\noindent \emph{Proof of Proposition~\ref{proposition:hardness}.}

 If $\cX = B_{p,d}$ and $\Theta$ is centered at $0$, consider the change of variables $v = W^{1/2} \theta$, then $P_B$ is 
\[
  \max_{\| x \|_p \leq 1} \max_{\theta^\top W \theta \leq 1} x^\top \theta 
  =
  \max_{\| x \|_p \leq 1} \max_{ \|v\|_{2} \leq 1} x^\top W^{-1/2} v 
  = 
  \max_{\| x \|_p \leq 1} \sqrt{x^\top W^{-1} x}
\]
which is equivalent to computing the operator norm $\| W^{-1/2} \|_{p \to 2}$ of the matrix
$W^{-1/2}$.
From Theorem 1.3 of~\cite{bhattiprolu2023inapprox} there exists $\epsilon_p$ such that
computing the $p \to 2$ operator norm of a symmetric matrix with approximation ratio
$\epsilon_p$ is $\mathcal{NP}$-hard, for any $p > 2$.
Note that when $p = +\infty$, the maximum cut problem, which is $\mathcal{NP}$-Hard and
conjectured to be not approximable efficiently (see~\cite{khot2007optimal}) reduces to
$P_B$. \qed

\section{Proofs of Section~\ref{sec:convex_newton}}
\label{app:proof-section-convex-newton}

\subsection{Full Newton algorithm for Bilinear maximization on ellipsoids}
\label{app:reduct-alg}

The main idea of the proof is to carefully transform the optimization problem into another problem which involves optimizing a self-concordant convex function.
We then follow the analysis of Newton's method for self-concordant convex functions of~\cite{nesterov_lectures_2018}, where some arguments must be adapted to fit our setting.
\begin{algorithm} 
  \SetKwInOut{Input}{Input}
  \SetKwInOut{Parameters}{Parameters}
  \Input{Matrices $A, W \in \mathbb{R}^{d \times d}$, 
  vector $c \in \mathbb{R}^d$,
  precision $\epsilon>0$}
  \Parameters{
  $\gls{eta} > 1$ barrier increase parameter,
  }
  \% Preliminary computations \\
  Compute the eigenvalue decomposition of $A^{1/2} W A^{1/2} = U^\top \Lambda U$ 
  with $\Lambda = {\bf diag}(\lambda)$ and $\lambda_d^{-1} \geq ... \geq \lambda_d$ and $b = U^\top A^{1/2} c$. \\

  Compute 
  \begin{equation}
  \label{eq:domdefinition}
    B_i = \bigg( 
    \frac{b_i}{\|b\| + \lambda^{-1/2}_d}
    \bigg)^{2} 
    \quad  \text{and} \quad
    \gls{DF} = \Big\{
      y \in \R^{d} \mid
        y_i >  B_i \text{ for all } i\in [d] 
        \quad \text{and} \quad
        \sum_{i=1}^{d} y_i \leq 1
    \Big\} 
    \,.
  \end{equation} \\
  \tcp{Compute the initial point}
  \begin{equation}
  \label{eq:initial_point}
  \gls{y0}_i 
        = B_i + \frac{\lambda^{-1}_d / 2}{\big(\|b\| + \lambda^{-1/2}_d\big)^2} 
        \quad \text{for all } i \in [d]
  \end{equation} \\
  \tcp{Collect the non-null index of $b$}
  \[ \mathcal{I} = \{i \in [d]: b_i \neq 0\} \]\\
  \tcp{Compute the first penalization}
  \[  t_{0} =9 \max\bigg(\max_{i \in \mathcal{I}}(b_i^2 B_i)^{-1/2}
  , \, \min_{i \in \mathcal{I}} (\lambda^{-1}_i B_i)^{-1/2}
  \bigg) 
  \] \\
  \tcp{Set the barrier penalization}
  $t \leftarrow t_{0}$ \\
  \tcp{Define the barrier penalized objective}
  \begin{equation}
    \label{eq:barrier_penalized_objective}
    F^{(t)}(y) =
   t F(y) 
   - \sum_{i = 1}^{d} \log( y_i - B_i)
   -  \log\bigg( 1 - \sum_{i=1}^{d} y_i\bigg)
   \,.
  \end{equation}\\
  With 
  \begin{equation}
    F(y) := - \sum_{i=1}^d | b_i | \sqrt{y_i} - ( \sum_{i=1}^d \lambda_i^{-1} y_i )^{1/2}
  \end{equation}
\tcp{Centering  by applying the Newton method Alg.\ \ref{alg:Newton Method}}
  $y \leftarrow \mathrm{Newton}(F^{(t)}, y^0, \epsilon)$ \\ 
  \While{$(d+1) / t > \epsilon / 2$}{
    \tcp{Increase the barrier parameter}
    Set $t \leftarrow \eta t$\\
    \tcp{Define the new barrier penalized objective}
    \[
    \gls{Ft}(y) =
     t F(y) 
     - \sum_{i = 1}^{d} \log( y_i - B_i)
     -  \log\bigg( 1 - \sum_{i=1}^{d} y_i\bigg)
     \,.
    \]\\
  \tcp{Apply the Newton method Alg.\ \ref{alg:Newton Method} on the new regularized function starting from the output of the previous phase}
  $y \leftarrow \mathrm{Newton}(F^{(t)}, y, \epsilon/2)$
  }
 
  Get $\tilde{x}$ with coordinates $\tilde{x}_i = \sqrt y_i \sign(b_i)$ \\
  \tcp{Change to the original base}
  \[
  \hat x = A^{-1/2} U \tilde{x} 
  \quad \text{and} \quad
  \hat{\theta} = c + W^{-1}\hat x / \| \hat x \|_{W^{-1}}
  \]
  {\bf Output:} $(\hat{x}, \hat{\theta})$ approximate solution to $P_{B}$
  \caption{\gls{oursecondalgo} : Reduction to Convex Optimization}\label{alg:reduct-opt} 
\end{algorithm}

\subsection{Newton's Algorithm for Self-Concordant Convex Functions}
\label{app:newton}
As a preliminary, we recall some results on self-concordant convex functions and Newton's method from~\cite{nesterov_lectures_2018} and \cite{boyd_convex_2023}.
The main result we will use is Theorem~\ref{thm:total_steps_newton_optim} which provides an upper bound on the number of iterations required for Algorithm~\ref{alg:Newton Method} (a particular version of Newton's method adapted to self-concordant convex functions) in order to output an $\epsilon$-minimizer of a self-concordant convex function. 
Before stating the theorem we highlight each of the propositions which constitute its proof. 
\begin{definition}
  Function $F$ satisfies the self-concordance differential inequality (\gls{scdi}) on $D \subset \mathbb{R}^d$ if for any $z \in D$, for any $h \in \R^d$, 
   \[
      |\gls{dif3}(z)[h, h, h]| \leq 2 (\gls{dif2}(z)[h, h])^{3/2} \,.
   \]  
   Function $F$ defined on $D$ is self-concordant if it satisfies the SCDI on $D$ and $F(z) \to +\infty$ when $z \to \partial D$, where $\partial D$ denotes the boundary of $D$.
 \end{definition}

\begin{definition}
  Consider $F$ self-concordant on $D_F$ with strictly positive Hessian so that $\nabla^2 F(y) \succ 0$ for all $y \in D_F$. 
  For any $y \in D_F$, we define the corresponding local norm of the gradient $\gls{dif}$
  \begin{equation}
  \label{eq:def_localnorm}
  \gls{localnorm}(y) = [(\nabla F)(y)]^\top  [(\nabla^2 F)(y)]^{-1} [(\nabla F)(y)].
  \end{equation}
\end{definition}

\begin{algorithm} 
  \SetKwInOut{Input}{Input}
  % \SetKwInOut{Parameters}{Parameters}
  \Input{$F$ a self-concordant function convex function, $y_0$ initial point, $\epsilon$ Precision}
  \tcp{Set the initial point}
  $y= y^0$ \\
  Compute $\lambda_F(y)$ using \ref{eq:def_localnorm} \\
  \tcp{First Stage}
  \While{ $\lambda_F(y) \geq 1/4$}{
    \tcp{Damped Newton Step}
    \begin{equation}
      \label{eq:def_damped-newton-step}
      y = y -   \frac{1}{1 + \lambda_F(y)} [(\nabla^2 F)(y)]^{-1}  (\nabla F)(y)
    \end{equation}
    Compute $\lambda_F(y)$ using \ref{eq:def_localnorm}
    }
  \tcp{Quadratic Convergent stage}
  \While{ $\lambda^2_F(y) > \eps $}{
    \tcp{Intermediate Newton step}\
    $\xi  = \frac{\lambda^2_F(y)}{1+ \lambda_F(y)}$ 
  \begin{equation}
      \label{eq:def_intermediate-newton-step}
      y = y -   \frac{1}{1 + \xi} [(\nabla^2 F)(y)]^{-1}  (\nabla F)(y) 
  \end{equation}
  Compute $\lambda_F(y)$ using \ref{eq:def_localnorm}
  }
  {\bf Output:} $\hat{y}$ an $\eps$-minimizer of $F$
  \caption{Newton Method for self-concordant function}\label{alg:Newton Method} 
\end{algorithm}

%We recall the some results on the convergence of the Newton method on self-concordant functions. %The decreasing of the objective function during the first stage is given by \cite{nesterov_lectures_2018}, which we recall here:
Theorem~\ref{thm:decrease_objective} shows that during the first stage of Algorithm~\ref{alg:Newton Method} (when $\lambda_F(y) \ge 1/4$) the function value decreases at least by a constant amount each time a damped Newton step is performed.
\begin{theorem}[Theorem 5.1.15 in \cite{nesterov_lectures_2018}]
  \label{thm:decrease_objective}
  Let $F$ be a self-concordant function on $D_F$.
  Let $y \in D_F$ with $\lambda_F(y) \ge 1/4$.
  Let $y^{+}$ the outcome of a Damped Newton step starting at $y$:
 \begin{align*}
  \gls{yplus} = y -   \frac{1}{1 + \lambda_F(y)} [(\nabla^2 F)(y)]^{-1}  (\nabla F)(y)
 \end{align*} 
  Then
  \[
    F(y^{+}) \leq F(y) - \omega(\lambda_F(y)) \le F(y) - \omega(1/4) \le F(y) - 1/38
  \]
  where $\gls{omega}(l) := l-\ln(1+l)$, since $l \mapsto \omega(l)$ is non-decreasing on $\mathbb{R}^+$.
\end{theorem}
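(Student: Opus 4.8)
The plan is to reduce the multivariate statement to a one-dimensional estimate along the Newton direction, invoking self-concordance only once, so as to obtain an upper bound on $F$ after the damped step. First I would record the standard consequence of the SCDI dual to the inequality defining $\omega$: for a self-concordant $F$ and any $z, w \in D_F$ whose local-norm distance $\|w-z\|_z := \big((w-z)^\top (\nabla^2 F)(z)\,(w-z)\big)^{1/2}$ is strictly less than $1$, one has
\[
F(w) \le F(z) + (\nabla F)(z)^\top (w-z) + \omega_\star\big(\|w-z\|_z\big),
\]
where $\omega_\star(a) = -a - \ln(1-a)$. This inequality is the technical heart of the proof and the only place self-concordance is used. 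I would derive it by setting $\psi(\tau) = F(z + \tau(w-z))$, using the SCDI to bound $|\psi'''|$ by $2(\psi'')^{3/2}$, integrating this scalar differential inequality once to control $\psi''(\tau)$ on $[0,1]$, and integrating twice more to bound $\psi(1)-\psi(0)$. I expect this to be the main obstacle, although it is classical and mirrors the lower bound that produces $\omega$.

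Next I would specialize to $z = y$, $w = y^{+}$. Here it is convenient to read $\lambda_F(y)$ as the local norm of the gradient, so that $\lambda_F(y)^2 = (\nabla F)(y)^\top [(\nabla^2 F)(y)]^{-1}(\nabla F)(y)$. Writing $h = -[(\nabla^2 F)(y)]^{-1}(\nabla F)(y)$ for the Newton direction, the damped step reads $y^{+} - y = \frac{1}{1+\lambda_F(y)}\,h$, and two short computations feed the bound. On one hand, using $h^\top (\nabla^2 F)(y)\,h = \lambda_F(y)^2$,
\[
\|y^{+} - y\|_y^2 = \frac{h^\top (\nabla^2 F)(y)\,h}{(1+\lambda_F(y))^2} = \frac{\lambda_F(y)^2}{(1+\lambda_F(y))^2},
\]
so $\|y^{+}-y\|_y = \lambda_F(y)/(1+\lambda_F(y)) < 1$ and the inequality of the first step applies. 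On the other hand, the linear term is $(\nabla F)(y)^\top (y^{+}-y) = -\lambda_F(y)^2/(1+\lambda_F(y))$.

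Finally I would substitute these into the upper bound and simplify. Using $\omega_\star\!\big(\lambda_F(y)/(1+\lambda_F(y))\big) = -\lambda_F(y)/(1+\lambda_F(y)) + \ln(1+\lambda_F(y))$, the two rational terms combine to $-\lambda_F(y)$, so the right-hand side collapses to
\[
F(y^{+}) \le F(y) - \big(\lambda_F(y) - \ln(1+\lambda_F(y))\big) = F(y) - \omega(\lambda_F(y)).
\]
Since $\omega'(l) = l/(1+l) \ge 0$ on $\mathbb{R}^{+}$, the map $\omega$ is non-decreasing, whence $\lambda_F(y) \ge 1/4$ gives $\omega(\lambda_F(y)) \ge \omega(1/4) = \tfrac14 - \ln\tfrac54$; a direct numerical check shows $\omega(1/4) > 1/38$, which closes the asserted chain of inequalities. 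All steps after the first are elementary algebra and a monotonicity/numeric check, so the only substantive work is establishing the self-concordant upper bound.
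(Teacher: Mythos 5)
Your proof is correct and is essentially the argument the paper relies on: the paper states this result purely by citation to Nesterov's Theorem 5.1.15, and your derivation—the self-concordant upper bound $F(w) \le F(z) + (\nabla F)(z)^\top(w-z) + \omega_\star\big(\|w-z\|_z\big)$ specialized to the damped step, with the step norm $\lambda_F(y)/(1+\lambda_F(y))$, the linear term $-\lambda_F(y)^2/(1+\lambda_F(y))$, and the algebraic collapse to $-\omega(\lambda_F(y))$—is exactly the canonical proof in that source. One minor point in your favor: you correctly read $\lambda_F(y)$ as the local \emph{norm} of the gradient (i.e., with a square root), which is the convention under which the stated bound holds, even though the paper's displayed definition of $\lambda_F$ omits the square root.
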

Corollary~\ref{cor:duration_first_stage} provides an upper bound on the number of damped Newton steps necessary for the first stage of Algorithm~\ref{alg:Newton Method} to end, as this stage lasts until $\lambda_F(y) < 1/4$ for the first time.
\begin{corollary}
  \label{cor:duration_first_stage}
  Let $F$ self-concordant on $D_F$.
  Let $y^\star$ the minimizer of $F$ over $D_F$. 
  Let $y^{+,N_1}$ the outcome of $N_1=38 (F(y_0) - F(y^\star))$ Damped Newton steps starting at $y_0 \in D_F$.
  Then $\lambda_F(y^{+,N_1}) < 1/4$.  
\end{corollary}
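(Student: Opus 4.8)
The plan is to obtain this as an immediate consequence of Theorem~\ref{thm:decrease_objective} via a telescoping (equivalently, a contradiction) argument over the first-stage iterates. Write the successive damped Newton iterates as $y^{(0)} = y_0, y^{(1)}, \dots, y^{(N_1)} = y^{+,N_1}$, where $y^{(j+1)}$ is the damped Newton step applied at $y^{(j)}$. The key quantitative input is that, as long as a point $y^{(j)}$ has $\lambda_F(y^{(j)}) \ge 1/4$, Theorem~\ref{thm:decrease_objective} guarantees $F(y^{(j+1)}) \le F(y^{(j)}) - \omega(1/4) \le F(y^{(j)}) - 1/38$, since $\omega$ is non-decreasing and $\omega(1/4) \ge 1/38$.

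First I would argue by contradiction: suppose the first stage has \emph{not} terminated after $N_1$ steps, i.e. $\lambda_F(y^{(j)}) \ge 1/4$ for every $j \in \{0,\dots,N_1-1\}$, so that all $N_1$ steps are genuine first-stage damped steps to which Theorem~\ref{thm:decrease_objective} applies. Summing the per-step decreases telescopes to
\[
  F(y^{(N_1)}) \;\le\; F(y_0) - \frac{N_1}{38} \;=\; F(y_0) - \big(F(y_0) - F(y^\star)\big) \;=\; F(y^\star),
\]
where I plugged in $N_1 = 38\,(F(y_0) - F(y^\star))$. Since $y^\star$ minimizes $F$ over $D_F$ we also have $F(y^{(N_1)}) \ge F(y^\star)$, forcing equality, so $y^{(N_1)}$ is itself a minimizer and hence $\nabla F(y^{(N_1)}) = 0$. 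By the definition of the local norm this gives $\lambda_F(y^{(N_1)}) = 0 < 1/4$, contradicting the assumption that the first stage is still running at step $N_1$. Therefore the first stage must stop within $N_1$ steps, which is exactly the claim $\lambda_F(y^{+,N_1}) < 1/4$.

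I do not expect a genuine obstacle here, as the statement is a direct corollary; the only points needing mild care are bookkeeping ones. One must ensure each iterate remains in the domain $D_F$ so that Theorem~\ref{thm:decrease_objective} is applicable at every step — this is built into the damped Newton scheme together with self-concordance (the barrier blows up as $y \to \partial D_F$, and the monotone decrease of $F$ confines the iterates to a sublevel set), and is precisely the regime for which the cited theorem is stated. The other subtlety is that reaching the optimal value $F(y^\star)$ forces the gradient to vanish; this uses only that a minimizer over the open domain is a stationary point, and the strict positivity of the Hessian assumed in the definition of $\lambda_F$ makes the conclusion $\lambda_F = 0$ immediate. If one prefers to treat $N_1$ as an integer count one may replace it by $\lceil 38\,(F(y_0)-F(y^\star))\rceil$ without affecting the argument.
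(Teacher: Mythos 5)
Your proof is correct and matches the paper's (implicit) argument: the paper states this corollary without a separate proof precisely because it follows from Theorem~\ref{thm:decrease_objective} by the telescoping you carry out, summing the per-step decrease of at least $\omega(1/4) \geq 1/38$ until the value gap $F(y_0) - F(y^\star)$ is exhausted. As a minor simplification, since $\omega(1/4) = 1/4 - \ln(5/4) > 1/38$ strictly, the telescoped bound $F(y^{(N_1)}) < F(y^\star)$ already contradicts optimality of $y^\star$ directly, letting you skip the equality-forces-stationarity detour (and note that your ``contradiction'' with the range $j \leq N_1 - 1$ is really a direct proof of the claim $\lambda_F(y^{(N_1)}) < 1/4$, which is fine).
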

Proposition~\ref{prop:quadratic_stage} shows that during the second stage of Algorithm~\ref{alg:Newton Method}, i.e. after event $\lambda_F(y) < 1/4$ occurs for the first time, the function value decreases rapidly. 
\begin{proposition}
  \label{prop:quadratic_stage}\cite[Page 358]{nesterov_lectures_2018} :
  Let $F$ be a self-concordant function on $D_F$.
  Let $y \in D_F$ such that $\lambda_F(y) < 1/4$. 
Let $y^{+}$ the output of an intermediate Newton step
\begin{align*}
      y^+ = y -   \frac{1}{1 + \xi} [(\nabla^2 F)(y)]^{-1}  (\nabla F)(y) 
      \text{ with }  \xi(y)  = \frac{\lambda^2_F(y)}{1+ \lambda_F(y)}
  \end{align*}
  Then 
  \[
    \lambda_F(y^{+}) \leq \lambda_F^{2}(y) (1+2 \lambda_F(y)) 
    \,.
  \]
\end{proposition}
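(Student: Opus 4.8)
The statement to prove is Proposition~\ref{prop:quadratic_stage}, the quadratic-convergence bound $\lambda_F(y^+) \leq \lambda_F^2(y)(1+2\lambda_F(y))$ for one intermediate Newton step on a self-concordant function.

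\textbf{Plan of attack.} The plan is to control how the local norm $\lambda_F$ transforms under a single Newton-type step, using the two fundamental analytic consequences of self-concordance: a Hessian-comparison inequality and a gradient-along-the-step estimate. Write the Newton direction as $n(y) = -[(\nabla^2 F)(y)]^{-1}(\nabla F)(y)$ and the step as $y^+ = y + \alpha\, n(y)$ with $\alpha = 1/(1+\xi)$, $\xi = \lambda_F^2(y)/(1+\lambda_F(y))$. Note $\lambda_F(y) = \|n(y)\|_{y}$, where $\|\cdot\|_y$ denotes the local norm induced by $\nabla^2 F(y)$, so $\|y^+ - y\|_y = \alpha\,\lambda_F(y) =: r$. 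The key point is that $r < 1$ so $y^+$ stays inside the Dikin ellipsoid where the machinery applies.

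\textbf{Key steps in order.} First I would record the self-concordance Hessian bound: for any $h$ with $\|h\|_y = r < 1$, one has $(1-r)^2 \nabla^2 F(y) \preceq \nabla^2 F(y+h) \preceq (1-r)^{-2}\nabla^2 F(y)$. Second, I would express $\lambda_F(y^+) = \|(\nabla F)(y^+)\|_{y^+,*}$ (the dual local norm at $y^+$) and use the lower Hessian bound to pass from the $y^+$-norm to the $y$-norm, incurring a factor $(1-r)^{-1}$: $\lambda_F(y^+) \leq (1-r)^{-1}\|(\nabla F)(y^+)\|_{y,*}$. Third, I would estimate the gradient at $y^+$ via the integral $ (\nabla F)(y^+) = (\nabla F)(y) + \int_0^1 (\nabla^2 F)(y+s\alpha n(y))\,\alpha n(y)\,ds$, and use the defining relation $(\nabla^2 F)(y)\,n(y) = -(\nabla F)(y)$ to write $(\nabla F)(y^+) = \int_0^1 [(\nabla^2 F)(y+s\alpha n) - (\nabla^2 F)(y)]\alpha n\, ds + (1-\alpha)(\nabla^2 F)(y)n$. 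Measuring this in the dual $y$-norm, the Hessian-comparison bound controls the bracketed term by $((1-sr)^{-2}-1)$ applied to $\|\alpha n\|_y$, and the leftover $(1-\alpha)$ term contributes $(1-\alpha)\lambda_F(y)$. Fourth, I would carry out the scalar integral $\int_0^1 ((1-sr)^{-2}-1)\,ds = r/(1-r)$ and assemble the pieces, then substitute $\alpha = 1/(1+\xi)$ and simplify. The targeted choice of $\xi$ is exactly what makes the residual $(1-\alpha)\lambda_F(y)$ cancel favorably and collapses the bound to the clean quadratic form $\lambda_F^2(y)(1+2\lambda_F(y))$.

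\textbf{Main obstacle.} The hard part will be the careful bookkeeping in the third and fourth steps: keeping the primal local norm $\|\cdot\|_y$ and its dual straight, correctly propagating the $(1-r)$ and $(1-sr)$ factors through the integral representation of the gradient, and verifying that the specific stepsize $\alpha = 1/(1+\xi)$ produces the stated coefficient $1+2\lambda_F(y)$ rather than some looser constant. Since this is precisely the computation in \cite[Page 358]{nesterov_lectures_2018}, the cleanest route is to follow that reference's argument, adapting notation to our $\lambda_F$ and confirming that all the self-concordance inequalities are applied on the Dikin ellipsoid of radius $r < 1$, which holds here because $\alpha\lambda_F(y) < 1$ whenever $\lambda_F(y) < 1/4$.
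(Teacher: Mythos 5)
First, a point of comparison: the paper does not prove this proposition at all — it is recalled verbatim from \cite[p.\ 358]{nesterov_lectures_2018} as background for Algorithm~\ref{alg:Newton Method}, so the paper's ``proof'' is the citation. Your toolkit (Dikin-ellipsoid Hessian comparison $(1-r)^2\nabla^2F(y)\preceq\nabla^2F(y+h)\preceq(1-r)^{-2}\nabla^2F(y)$, dual local norms, integral representation of $\nabla F(y^+)$) is the right family of arguments and is indeed the skeleton of Nesterov's analysis, and your fallback of following the reference coincides with what the paper does. Two small notes: your convention $\lambda_F(y)=\|n(y)\|_y$ (square root of the quadratic form) is the correct reading — the paper's Equation~\eqref{eq:def_localnorm} is missing the square root — and the residual term should carry a minus sign, $-(1-\alpha)\nabla^2F(y)\,n$, which is harmless since only its norm is used. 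Also $r=\alpha\lambda_F(y)<1$ holds for \emph{every} $\lambda_F(y)\ge 0$ with this $\xi$, not just below $1/4$.

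The genuine gap is in your steps three and four: the triangle-inequality split you propose does \emph{not} ``collapse'' to the stated constant. Carrying out your own computation with $\lambda=\lambda_F(y)$, $\alpha=1/(1+\xi)$, $r=\alpha\lambda$, one gets
\begin{equation*}
  \lambda_F(y^+)\;\le\;\frac{1}{1-r}\Big(\frac{r^2}{1-r}+(1-\alpha)\lambda\Big),
  \qquad
  1-r=\frac{1}{1+\lambda+\lambda^2},\quad \frac{r}{1-r}=\lambda(1+\lambda),\quad \frac{(1-\alpha)\lambda}{1-r}=\lambda^3,
\end{equation*}
which evaluates to $\lambda^2(1+\lambda)^2+\lambda^3=\lambda^2(1+3\lambda+\lambda^2)$ — strictly weaker than the claimed $\lambda^2(1+2\lambda)$, and in fact no better than the classical full-step bound $\big(\lambda/(1-\lambda)\big)^2$ on the whole range $\lambda\le 1/4$. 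So there is no favourable cancellation from this $\xi$ under your decomposition, and this split cannot be the route to the coefficient $1+2\lambda$: the sharp constant requires Nesterov's finer treatment, which keeps the combined operator $\nabla^2F(y+sp)-(1+\xi)\nabla^2F(y)$ inside a single integral and bounds its spectrum using \emph{both} sides of the Dikin comparison (for each $s$, the maximum of the two one-sided deviations rather than their sum). Mitigating remark: your weaker constant is actually sufficient for everything the paper uses this proposition for, since for $\lambda\le 1/4$ one has $1+3\lambda+\lambda^2\le 29/16<2$, and the induction $\lambda_F(y^{+,k})\le\tfrac12\,2^{-2^{k}}$ in Corollary~\ref{cor:duration_quadratic_stage} goes through verbatim with any constant at most $2$. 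So either follow the reference's sharper argument honestly, or prove the $\lambda^2(1+3\lambda+\lambda^2)$ bound you can actually obtain and note it suffices downstream — but do not assert that your split yields $1+2\lambda$; it does not.
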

Corollary~\ref{cor:duration_quadratic_stage} allows to upper bound the number of intermediate Newton steps necessary for the second stage of Algorithm~\ref{alg:Newton Method} to end, i.e. after event $\lambda_F(y) < 1/4$ occurs for the first time. We recall that, by definition, Algorithm~\ref{alg:Newton Method} has two only two stages, so that this allows to upper bound the number of iterations for the algorithm to terminate.
\begin{corollary}
  \label{cor:duration_quadratic_stage}
  Let $F$ self-concordant on $D_F$.
  Let $y^\star$ the minimizer of $F$ over $D_F$. 
  Let $y^{+,N_2}$ the outcome of $N_2= \log_2(\log_2(1/\epsilon))$ intermediate Newton steps starting at $y_1 \in D_F$ with $\lambda_F(y_1) < 1/4$.
  Then $\lambda_F(y^{+,N_2}) \le \epsilon$.  
\end{corollary}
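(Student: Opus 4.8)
The plan is to iterate the one-step contraction estimate of Proposition~\ref{prop:quadratic_stage} and track exactly how fast the local norm of the gradient shrinks. Write $l_k = \lambda_F(y_k)$ for the value after $k$ intermediate Newton steps, starting from $l_0 = \lambda_F(y_1) < 1/4$. Proposition~\ref{prop:quadratic_stage} supplies the recursion $l_{k+1} \le l_k^2(1 + 2 l_k)$, and the entire argument consists in turning this into clean doubly-exponential decay and matching it to the claimed count $N_2 = \log_2\log_2(1/\epsilon)$.

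First I would verify by induction that the iterates never leave the region $\{l < 1/4\}$, so that the hypothesis of Proposition~\ref{prop:quadratic_stage} is preserved at every step and the recursion may be applied throughout. Indeed, if $l_k < 1/4$ then $l_{k+1} \le l_k^2(1 + 2 l_k) < \tfrac{1}{16}\cdot\tfrac{3}{2} = \tfrac{3}{32} < \tfrac{1}{4}$. As a byproduct this gives the uniform bound $1 + 2 l_k < 3/2$ along the whole trajectory.

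The key step is a rescaling that linearizes the exponent. Using $1 + 2 l_k < 3/2$, the recursion weakens to $l_{k+1} \le \tfrac{3}{2} l_k^2$; setting $m_k := \tfrac{3}{2} l_k$ and multiplying by $3/2$ turns this into the pure quadratic recursion $m_{k+1} \le m_k^2$. Iterating yields $m_k \le m_0^{\,2^k}$, and since $m_0 = \tfrac{3}{2} l_0 < \tfrac{3}{8} < \tfrac{1}{2}$ we obtain $l_k = \tfrac{2}{3} m_k < 2^{-2^k}$.

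Finally I would substitute $N_2 = \log_2\log_2(1/\epsilon)$, so that $2^{N_2} = \log_2(1/\epsilon)$ and hence $2^{-2^{N_2}} = \epsilon$, which gives $\lambda_F(y^{+,N_2}) = l_{N_2} < \epsilon$ as required. I do not expect a real obstacle here: the one-step bound does all the work, and the only delicate point is bookkeeping the constant $3/2$ so that the rescaled recursion is \emph{exactly} quadratic rather than merely quadratic up to a growing factor. The slack between $\tfrac{2}{3}\,2^{-2^k}$ and $2^{-2^k}$ comfortably absorbs any rounding needed when $\log_2\log_2(1/\epsilon)$ is not an integer.
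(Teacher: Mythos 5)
Your proof is correct and follows essentially the same route as the paper: both iterate the one-step bound of Proposition~\ref{prop:quadratic_stage} to obtain doubly-exponential decay of $\lambda_F$, the paper by a direct induction on the claim $\lambda_F(y^{+,k}) \le \tfrac{1}{2}\,2^{-2^k}$, you by the cosmetically different rescaling $m_k = \tfrac{3}{2} l_k$ that turns the recursion into a pure square. Your preliminary check that the iterates remain in $\{l < 1/4\}$ is a welcome explicit step that the paper's induction handles only implicitly.
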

\begin{proof}
	Denote by $y^{+,k}$ for $k=0,...,N_2$ the outcome of $k$ intermediate Newton steps starting at $y_1 \in D_F$ with $\lambda_F(y_1) < 1/4$. 
	Let us show that $\lambda_F(y^{+,k}) \le (1/2) 2^{-2^{k}}$ for all $k$ by induction. 
	The inequality holds for $k = 0$, and if it holds for $k$, proposition~\ref{prop:quadratic_stage} gives
  \begin{align*}
	  \lambda_F(y^{+,k+1}) &\leq \lambda_F^{2}(y^{+,k}) (1+2 \lambda_F(y^{+,k}))\\
    & \leq   \frac{1}{4}  (1/2)^{2^{k-1}\times 2} \big(1+ (1/2)^{2^{k-1}}\big)\\
    & \leq   \frac{1}{2} (1/2)^{2^{k}} \,.
  \end{align*}
  So $\lambda_F(y^{+,k}) \le (1/2) 2^{-2^{k}}$ for all $k$, and $ \lambda_F(y^{+,N_2}) \le (1/2) 2^{-2^{N_2}} \le \epsilon$ as announced.
\end{proof}
 Proposition~\ref{prop:bound_localnorm} allows to control the difference in terms of function value $F(y)-F(y^\star)$ as a function of the local norm of the gradient $\lambda_F(y)$.
\begin{proposition}
\cite[Theorem 5.1.13, page 347, Lemma 5.1.5 page 345]{nesterov_lectures_2018}:
  \label{prop:bound_localnorm}
  Let $F$ self-concordant on $D_F$.
  Let $y \in D_F$ and $y^\star$ the minimizer of $F$ over $D_F$. 
  If $\lambda_F(y) < \frac{1}{4}$ then
  \[
    F(y)-F(y^\star) \leq \omega_\star(\lambda_F(y)) < \lambda_F^2(y) 
  \]
  Where $\gls{omegastar}(a) := -a - \ln(1-a) < a^2 / (2(1-a))$.
\end{proposition}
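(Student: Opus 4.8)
The plan is to split the statement into its two inequalities, since the first is the genuine content of self-concordance theory while the second is elementary calculus. Throughout, let $y^\star$ denote the minimizer of $F$ over $D_F$; it exists and is unique once $\lambda_F(y) < 1$, which is part of the cited result and follows from the fact that the sublevel sets of a self-concordant $F$ with strictly positive Hessian are compact when the local norm is below $1$. I will treat $\lambda_F(y)$ as the Newton decrement (the local dual norm of the gradient), consistent with the damped-step formula $1/(1+\lambda_F(y))$ and the quadratic rate $\lambda_F^2(1+2\lambda_F)$ appearing earlier.

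For the main inequality $F(y) - F(y^\star) \le \omega_\star(\lambda_F(y))$, I would invoke the standard self-concordance upper bound on the function value: for self-concordant $F$ and any $z \in D_F$ with $\|z - y\|_{\nabla^2 F(y)} < 1$, one has $F(z) \le F(y) + \langle \nabla F(y), z - y\rangle + \omega_\star(\|z - y\|_{\nabla^2 F(y)})$. Expanding instead around $y^\star$, evaluating at $z = y$, and using $\nabla F(y^\star) = 0$, this gives $F(y) - F(y^\star) \le \omega_\star(\|y - y^\star\|_{\nabla^2 F(y^\star)})$ whenever $\|y - y^\star\|_{\nabla^2 F(y^\star)} < 1$. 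Since $\omega_\star$ is increasing, it then suffices to bound the distance-to-optimum $\|y - y^\star\|_{\nabla^2 F(y^\star)}$ by $\lambda_F(y)$. This last step is the main obstacle: it is the core estimate of Nesterov's Theorem 5.1.13, obtained by writing $\nabla F(y) = \int_0^1 \nabla^2 F(y^\star + t(y - y^\star))(y - y^\star)\,dt$, lower-bounding each Hessian along the segment via the self-concordance inequality $\nabla^2 F(y^\star + t(y-y^\star)) \succeq (1 - t\,\|y-y^\star\|_{\nabla^2 F(y^\star)})^2\,\nabla^2 F(y^\star)$, and comparing the resulting quantity with the definition of $\lambda_F(y)$. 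I would cite this directly from \cite{nesterov_lectures_2018} rather than reproduce the constant-chasing.

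For the elementary bound $\omega_\star(a) = -a - \ln(1-a) < a^2/(2(1-a))$ on $a \in (0,1)$, I would compare power series. Writing $-\ln(1-a) = \sum_{k \ge 1} a^k/k$ gives $\omega_\star(a) = \sum_{k \ge 2} a^k/k$, whereas $a^2/(2(1-a)) = \tfrac{1}{2}\sum_{k \ge 2} a^k$. Since $1/k \le 1/2$ for $k \ge 2$, with strict inequality for $k \ge 3$, the series for $\omega_\star(a)$ is dominated term by term, and the inequality is strict for every $a > 0$.

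Finally I would combine the two parts. Because $\lambda_F(y) < 1/4$, we have $2(1 - \lambda_F(y)) > 1$, hence $\omega_\star(\lambda_F(y)) < \lambda_F(y)^2/(2(1 - \lambda_F(y))) < \lambda_F(y)^2$, and chaining with the first inequality yields $F(y) - F(y^\star) \le \omega_\star(\lambda_F(y)) < \lambda_F^2(y)$, which is exactly the claim.
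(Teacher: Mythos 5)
The paper offers no proof of this proposition: it is quoted verbatim from \cite{nesterov_lectures_2018} (Theorem 5.1.13 plus Lemma 5.1.5). To the extent that you also ultimately defer the key inequality $F(y)-F(y^\star) \le \omega_\star(\lambda_F(y))$ to that citation, your proposal matches the paper, and your two elementary ingredients are correct: the power-series comparison $\omega_\star(a) = \sum_{k\ge 2} a^k/k < \tfrac12 \sum_{k \ge 2} a^k = a^2/(2(1-a))$, and the final chaining using $2(1-\lambda_F(y)) > 3/2 > 1$ when $\lambda_F(y) < 1/4$. (You are also right to read $\lambda_F$ as the Newton decrement; the paper's displayed definition is missing a square root.)

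However, your sketch of how the cited inequality is actually proved contains a false step: the intermediate bound $\|y - y^\star\|_{\nabla^2 F(y^\star)} \le \lambda_F(y)$ does not hold for self-concordant functions. Take $F(x) = -\ln(1-x^2)$ on $(-1,1)$, a sum of two log-barriers and hence self-concordant: here $y^\star = 0$, $F''(0) = 2$, $\lambda_F(x) = \sqrt{2}\,|x|/\sqrt{1+x^2}$, while $\|x - y^\star\|_{y^\star} = \sqrt{2}\,|x| > \lambda_F(x)$ for every $x \neq 0$. The correct distance estimate is of the form $\|y - y^\star\|_{y^\star} \le \lambda_F(y)/(1-\lambda_F(y))$, and patching your argument with it only yields $F(y) - F(y^\star) \le \omega_\star\big(\lambda_F(y)/(1-\lambda_F(y))\big)$, which is strictly weaker than the statement: at $\lambda_F(y) = 1/4$ this bound is $\omega_\star(1/3) \approx 0.0721$, exceeding $\lambda_F^2(y) = 0.0625$. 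So the route ``upper-bound expansion around $y^\star$ plus distance-to-optimum estimate'' cannot recover the proposition with its stated constants. Nesterov's argument goes in the opposite direction: expand the self-concordance \emph{lower} bound around $y$, namely $F(z) \ge F(y) + \langle \nabla F(y), z-y\rangle + \omega(\|z-y\|_y)$ for all $z \in D_F$ (this side needs no restriction $\|z-y\|_y<1$), evaluate at $z = y^\star$, apply Cauchy--Schwarz in the local norm to get $\langle \nabla F(y), y^\star - y\rangle \ge -\lambda_F(y)\,\|y^\star - y\|_y$, and conclude via the conjugacy $\sup_{r\ge 0}\{\lambda r - \omega(r)\} = \omega_\star(\lambda)$, valid for $\lambda<1$ with maximizer $r = \lambda/(1-\lambda)$. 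This gives $F(y^\star) \ge F(y) - \omega_\star(\lambda_F(y))$ directly, and as a by-product establishes the existence of $y^\star$ whenever $\lambda_F(y) < 1$ --- a fact your version had to assume up front.
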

So, combining Corollaries \ref{cor:duration_first_stage}, \ref{cor:duration_quadratic_stage} and Proposition \ref{prop:bound_localnorm} we get Theorem~\ref{thm:total_steps_newton_optim} which, as announced, gives an upper bound on the number of Newton steps necessary for Algorithm~\ref{alg:Newton Method} to terminate and output an $\epsilon$-optimal minimizer.
\begin{theorem}
  \label{thm:total_steps_newton_optim}
  Let $F$ self-concordant on $D_F$. Let $y_0 \in D_F$. Let $y^\star$ the
  minimizer of $F$ over $D_F$. Let $\epsilon > 0$ be the required precision. 
 Then Algorithm~\ref{alg:Newton Method} applied to $F$ starting at $y_0$ with accuracy $\epsilon$ terminates after at most 
  \[  
    N = N_1 + N_2 = 38(F(y_0) - F(y^\star)) + \log_2(\log_2( 1/\epsilon ))
  \]
  Newton steps and its output $\hat{y}$ is an $\epsilon$-minimizer of $F$ that is  $F(\hat{y}) - F(y^\star) < \epsilon$.
\end{theorem}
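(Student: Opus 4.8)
The plan is to prove the theorem by treating the two stages of Algorithm~\ref{alg:Newton Method} separately and then chaining the corresponding bounds. The key observation is that all the ingredients are already in place: Corollary~\ref{cor:duration_first_stage} controls the length of the first (damped) stage, Corollary~\ref{cor:duration_quadratic_stage} controls the length of the second (quadratic) stage, and Proposition~\ref{prop:bound_localnorm} converts the termination test on the local norm $\lambda_F$ into the desired bound on the optimality gap $F(\hat y)-F(y^\star)$. So the proof is essentially a careful bookkeeping argument that stitches these three results together, plus a verification that the regime hypotheses remain valid along the run.

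First I would bound the first stage. This stage runs exactly while $\lambda_F(y) \ge 1/4$, performing damped Newton steps. By Theorem~\ref{thm:decrease_objective}, each such step decreases the objective by at least $\omega(1/4) \ge 1/38$, so after $k$ damped steps one has $F(y^{+,k}) \le F(y_0) - k/38$. Since $F \ge F(y^\star)$ everywhere on $D_F$, this forces $k \le 38(F(y_0)-F(y^\star))$, which is precisely the content of Corollary~\ref{cor:duration_first_stage}: after at most $N_1 = 38(F(y_0)-F(y^\star))$ damped steps the condition $\lambda_F(y) < 1/4$ is reached and the first stage terminates.

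Next I would bound the second stage and establish correctness simultaneously. The second stage starts from a point $y_1$ with $\lambda_F(y_1) < 1/4$ and performs intermediate Newton steps until $\lambda_F^2(y) \le \epsilon$. I would first note that this stage remains in the regime $\lambda_F < 1/4$ throughout: by Proposition~\ref{prop:quadratic_stage}, a step from a point with $\lambda_F < 1/4$ lands at a point with $\lambda_F(y^+) \le \lambda_F^2(y)(1+2\lambda_F(y)) < 3/32 < 1/4$, so the invariant is self-maintaining and, in fact, $\lambda_F$ decreases doubly exponentially. Corollary~\ref{cor:duration_quadratic_stage} then bounds this stage by $N_2 = \log_2\log_2(1/\epsilon)$ steps. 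Finally, at termination the stopping rule guarantees $\lambda_F^2(\hat y) \le \epsilon$, and since $\lambda_F(\hat y) < 1/4$, Proposition~\ref{prop:bound_localnorm} applies and yields $F(\hat y) - F(y^\star) \le \omega_\star(\lambda_F(\hat y)) < \lambda_F^2(\hat y) \le \epsilon$, the claimed accuracy. Adding the two stage lengths gives the total $N = N_1 + N_2$.

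There is no substantial obstacle here, as the heavy lifting is done by the previously established corollaries and propositions. The only points requiring care are (i) verifying that the second-stage invariant $\lambda_F < 1/4$ is preserved, so that both Proposition~\ref{prop:quadratic_stage} and Proposition~\ref{prop:bound_localnorm} remain applicable at every iterate of the stage, and (ii) matching the algorithm's termination test $\lambda_F^2 \le \epsilon$ to the optimality-gap bound $F(\hat y)-F(y^\star) < \lambda_F^2(\hat y)$, so that the target precision $\epsilon$ transfers cleanly from the local norm to the function value.
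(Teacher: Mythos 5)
Your proof is correct and takes essentially the same route as the paper, whose proof consists precisely of chaining Corollary~\ref{cor:duration_first_stage} (at most $N_1 = 38(F(y_0)-F(y^\star))$ damped steps until $\lambda_F < 1/4$), Corollary~\ref{cor:duration_quadratic_stage} (at most $N_2 = \log_2\log_2(1/\epsilon)$ intermediate steps), and Proposition~\ref{prop:bound_localnorm} (converting $\lambda_F^2(\hat y) \le \epsilon$ into $F(\hat y)-F(y^\star) < \epsilon$). Your two extra verifications --- that the invariant $\lambda_F < 1/4$ is preserved throughout the second stage (via $\lambda_F^2(1+2\lambda_F) < 3/32$) and that the algorithm's stopping test matches the target accuracy --- are details the paper leaves implicit, and you check them correctly.
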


\subsection{Notation index for \oursecondalgo algorithm}
The pseudo-code of \oursecondalgo is stated as Algorithm~\ref{alg:reduct-opt}.  Before stating the analysis, we define several useful notations 
\begin{align*}
	F(y) &:= - \sum_{i=1}^d | b_i | \sqrt{y_i} - ( \sum_{i=1}^d \lambda_i^{-1} y_i )^{1/2} \\
	\gls{ystar} &:= \arg\min_{y \in \Delta_{d-1}} F(y) \\
	\gls{B} &:= (b_i^2 ( \|b\|_{2} + \lambda_d^{-1/2} )^{-2})_{i \in [d]} \\
	D_F &:= \{ y \in \Delta_{d-1}: y \ge B \} \\
	y_0 &:= (B_i +(1/2) \lambda^{-1}_d (\|b\|_2 + \lambda^{-1/2}_d)^{-2})_{i \in [d]} \\
	F^{(t)}(y) &:= t F(y) - \sum_{i = 1}^{d} \log( y_i - B_i) -\log( 1 - \sum_{i=1}^{d} y_i) \\
  \gls{ytstar} &:= \arg\min_{y \in D_F} F^{(t)}(y) \\
   	\mathcal{I} &:= \{i \in [d]: b_i \neq 0\} \\
	t_0 &:= 9 \max\Big(\max_{i \in \mathcal{I}} (b_i^2B_i)^{-1/2}  , \, \min_{i \in \mathcal{I}} (\lambda^{-1}_i B_i)^{-1/2} \Big)\\
\end{align*}
The above is meant to be used as a reference for all the notation used in this section. 

\subsection{Sketch of Analysis of the \oursecondalgo algorithm}
Because the analysis of \oursecondalgo is relatively long, we first highlight the main steps, along with pointers to the lemmas which constitute the analysis. Those lemmas are presented in full details in the next subsection.

Step 1: Problem $P_C$ involves minimizing convex function $F$ over $\Delta_{d-1}$, however the behaviour of $F$ is undesirable whenever $y_i$ is close to $0$ for some $i$, so that we first show in lemma~\ref{lem:minimum_in_domain} that the optimal solution $y^\star$ must verify $y^\star \ge B$, this allows to restrict our attention to $D_F$in order to solve $P_C$.

Step 2: The goal of \oursecondalgo is to apply Newton's method for self-concordant convex functions, which is Algorithm~\ref{alg:Newton Method}, however the original form of $P_C$ does not allow this, because its objective function $F$ is not self-concordant. So we prove Lemma~\ref{lem: selfconcordance}, which shows that in order to solve $P_C$ by penalization, we can minimize $F^{(t)}(y)$ when $t$ is large enough and that for all $t \ge t_0$ large enough, $F^{(t)}$ is self-concordant on $D_F$.

Step 3: \oursecondalgo applies Algorithm~\ref{alg:Newton Method} to minimize the convex self-concordant function $F^{(t)}$ over $D_F$ for increasing values of $t$, starting at $t = t_0$. Proposition~\ref{prop:time_centering} provides an upper bound on the necessary number of Newton steps for the inital "centering step", i.e. when $t = t_0$.

Step 4: \oursecondalgo applies Algorithm~\ref{alg:Newton Method} for several values of $t$ which increase geometrically, and Lemma~\ref{lem:geometric_barrier} provides an upper bound on the total number of Newton steps necessary such that, upon termination, one indeed obtains an optimal solution of the original problem $P_C$ with accuracy $\epsilon$. 

Putting everything together, we obtain Theorem~\ref{thm:complexity_reduction_algo}, which provides a complete upper bound on the time and memory complexity of \oursecondalgo as annnounced.

\subsection{Analysis of the \oursecondalgo algorithm}
\label{app:proof_time_complexity_reduc_algo}
Based on the analysis of Algorithm~\ref{alg:Newton Method} we now analyze the \oursecondalgo algorithm, which we propose in this work in order to solve~\problem\!\!, as an alternative to \ouralgo. Our first intermediate result is Lemma~\ref{lem:minimum_in_domain}, stating that $y^\star$ the optimal solution to $P_C$ must verify $y^\star \ge B$. This allows to narrow down the space in which we search for the solution.
\begin{lemma}
  \label{lem:minimum_in_domain}
  Let $y^\star$ the optimal solution to $P_C$. Then $y^\star \ge B$.
%
%     D = \Big\{
%	     y \in \Delta_{d-1} \mid
%         y_i \ge  B_i \text{ for all } i\in [d]  \Big\} 
%  \,.\]
%
\end{lemma}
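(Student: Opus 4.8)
The plan is to characterize the optimizer $y^\star$ through the KKT conditions for $P_C$, to identify the Lagrange multiplier of the constraint $\sum_i y_i = 1$ with the optimal value of the problem, and to bound that value by Cauchy--Schwarz. Write $G := -F$, so that $P_C$ amounts to maximizing $G(y) = \sum_i |b_i|\sqrt{y_i} + \bigl(\sum_i \lambda_i^{-1} y_i\bigr)^{1/2}$ over $\Delta_{d-1}$, and set $S := \bigl(\sum_j \lambda_j^{-1} y_j^\star\bigr)^{1/2}$. The claim is immediate on any coordinate with $b_i = 0$, since there $B_i = 0 \le y_i^\star$. For coordinates with $b_i \neq 0$ I would first argue $y_i^\star > 0$: because
\[
  \partial_i F(y) = -\frac{|b_i|}{2\sqrt{y_i}} - \frac{\lambda_i^{-1}}{2S} \longrightarrow -\infty \quad \text{as } y_i \to 0^+,
\]
moving an infinitesimal mass $\delta$ onto coordinate $i$ from any coordinate carrying positive mass changes $F$ by $-|b_i|\sqrt{\delta} + O(\delta) < 0$, contradicting optimality if $y_i^\star = 0$. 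Hence the only active constraint at $y^\star$ is $\sum_i y_i = 1$, and stationarity with multiplier $\nu$ gives, for every $i$ in the support,
\[
  -\frac{|b_i|}{2\sqrt{y_i^\star}} - \frac{\lambda_i^{-1}}{2S} = \nu,
  \qquad\text{so}\qquad
  \sqrt{y_i^\star} = \frac{|b_i|}{-2\nu - \lambda_i^{-1}/S}.
\]

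The key step is to pin down $-2\nu$. Both terms of $G$ are positively homogeneous of degree $1/2$, hence so is $G$, and Euler's identity gives $\sum_i y_i^\star \, \partial_i G(y^\star) = \tfrac12 G(y^\star)$. On the support $\partial_i G(y^\star) = -\partial_i F(y^\star) = -\nu$, while coordinates outside the support contribute nothing (there $y_i^\star = 0$); using $\sum_i y_i^\star = 1$ this yields $-\nu = \tfrac12 G(y^\star)$, that is $-2\nu = G(y^\star)$ is exactly the optimal value. I then bound this value by Cauchy--Schwarz: $\sum_i |b_i|\sqrt{y_i^\star} \le \|b\|_2 \bigl(\sum_i y_i^\star\bigr)^{1/2} = \|b\|_2$, and $\sum_i \lambda_i^{-1} y_i^\star \le \lambda_d^{-1} \sum_i y_i^\star = \lambda_d^{-1}$ since $\lambda_d$ is the smallest eigenvalue, so that $-2\nu = G(y^\star) \le \|b\|_2 + \lambda_d^{-1/2}$.

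Combining the two displays and discarding the nonnegative term $\lambda_i^{-1}/S$ in the denominator,
\[
  \sqrt{y_i^\star} = \frac{|b_i|}{-2\nu - \lambda_i^{-1}/S} \ge \frac{|b_i|}{-2\nu} \ge \frac{|b_i|}{\|b\|_2 + \lambda_d^{-1/2}},
\]
and squaring gives $y_i^\star \ge b_i^2/(\|b\|_2 + \lambda_d^{-1/2})^2 = B_i$, the desired bound. The main obstacle is the regularity bookkeeping at the boundary of the simplex: one must justify that $y^\star$ lies in the relative interior along every coordinate with $b_i \neq 0$ (so that KKT holds with a single multiplier and Euler's identity is not corrupted by a $0 \cdot \infty$ term), and that the multiplier $\nu$ really coincides with the optimal value rather than merely bounding it. Once these are secured, the remaining estimates are routine Cauchy--Schwarz bounds.
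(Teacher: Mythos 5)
Your proof is correct and follows essentially the same route as the paper: KKT conditions for $P_C$, positivity of $y_i^\star$ on coordinates with $b_i\neq 0$ via the blow-up of $\partial_i F$ at zero, identification of the simplex multiplier with half the optimal value, the Cauchy--Schwarz bound $\|b\|_2+\lambda_d^{-1/2}$, and discarding the nonnegative $\lambda_i^{-1}/S$ term. The only cosmetic difference is that you invoke Euler's identity for the $1/2$-homogeneous objective where the paper obtains the same multiplier identity by multiplying the stationarity conditions by $y_i^\star$, summing, and using complementary slackness.
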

\begin{proof}
	Define $h(z) = -\sqrt{z}$ which is convex over $\mathbb{R}^+$. Hence
	\begin{align*}
		F(y) = \sum_{i=1}^d |b_i| h(y_i) + h\left(\sum_{i=1}^d \lambda_i^{-1} y_i\right)
	\end{align*}
	is convex over $\Delta_{d-1}$, since sums of convex functions are convex, and a linear transformation of a convex function is convex. So $P_C$ is a convex optimization problem, with $d+1$ linear constraints and Lagrangian, for $\gls{mlagrangian} \in (\mathbb{R}^+)^{d+1}$:
	\begin{align*}
		\mathcal{L}(y,\nu) = F(y) - \nu_0 \left(1-\sum_{i=1}^d y_i\right) - \sum_{i=1}^d \nu_i y_i 	
	\end{align*}
KKT conditions state that there must exist $\gls{mlagrangianstar}$ such that $\nabla_y \mathcal{L}(y^\star,\nu^\star)=0$. That is for $i \in [d]$
	\begin{align*}
		0 = \frac{d}{d y_i}	F(y^\star) + \nu_0^\star - \nu_i^\star = -\frac{b_i}{2 \sqrt{y_i^\star}} - \frac{\lambda_i^{-1}}{ 2 \sqrt{\sum_{i=1}^d \lambda_i^{-1} y_i^\star}  } +  \nu_0^\star - \nu_i^\star ,  
	\end{align*}
	with complementary slackness $\nu_i^\star y_i^\star = 0$ for $i \in [d]$ and $\nu_0^\star (1 - \sum_{i=1}^d y_i^\star) = 0$.
	Also, since $y_i \mapsto F(y)$ is decreasing, the constraint $\sum_{i=1}^d y_i \le 1$ must be saturated so that $\sum_{i=1}^d y_i^\star = 1$.

	Multiplying the above by $y^\star_i$ and summing over $i$ we get
	\begin{align*}
		0 &= \sum_{i=1}^d -\frac{b_i\sqrt{y_i^\star} }{2}  - \frac{\sum_{i=1}^d y_i^\star \lambda_i^{-1}}{ 2 \sqrt{\sum_{i=1}^d \lambda_i^{-1} y_i^\star}  } +  \nu_0^\star \sum_{i=1}^d y_i^\star - \sum_{i=1}^d y_i^\star \nu_i^\star  \\
		 &= \sum_{i=1}^d -\frac{b_i\sqrt{y_i^\star} }{2}  - \frac{\sqrt{\sum_{i=1}^d y_i^\star \lambda_i^{-1}}}{2} +  \nu_0^\star
	\end{align*}
	where we used both that $\nu_i^\star y_i^\star = 0$ for $i \in [d]$ and that $\sum_{i=1}^d y_i^\star = 1$. Hence
\begin{align*}
	\nu_0^\star  = \frac{1}{2} \left(\sum_{i=1}^d b_i\sqrt{y_i^\star}  + \sqrt{\sum_{i=1}^d y_i^\star \lambda_i^{-1}} \right)
\end{align*}
First note that if $b_i = 0$ then $y_i^\star \ge B_i = 0$.
Now consider $i$ such that $b_i > 0$. If $y_i^\star = 0$, then $\frac{d}{d y_i} F(y^\star) = -\infty$ which is a contradiction, so that we must have $y_i^\star > 0$, and in turn $\nu^\star_i = 0$ so that the KKT condition becomes
\begin{align*}
	\frac{b_i}{ \sqrt{y_i^\star}} + \frac{\lambda_i^{-1}}{  \sqrt{\sum_{i=1}^d \lambda_i^{-1} y_i^\star}  } = \left(\sum_{i=1}^d b_i\sqrt{y_i^\star}  + \sqrt{\sum_{i=1}^d y_i^\star \lambda_i^{-1}} \right)
  \,.
\end{align*}
Lower bounding the second term of the l.h.s. by $0$
\begin{align*}
	\frac{b_i}{ \sqrt{y_i^\star}} \le \left( \sqrt{\sum_{i=1}^d b_i^2} \sqrt{\sum_{i=1}^d y_i^\star}  + \max_{i \in [d]} \lambda_i^{-1}  \sqrt{ \sum_{i=1}^d y_i^\star } \right) = \|b\|_{2} + \lambda_d^{-1/2}
\end{align*}
where we used the Cauchy-Schwarz inequality, the fact that $\sum_{i=1}^d y_i^\star \le 1$ and the fact that $\lambda_i$ are sorted in decreasing order.
So $y^\star \ge B$ which concludes the proof.
\end{proof}
Our next result is Lemma~\ref{lem: selfconcordance}, which shows that one can transform $P_C$, which is a convex minimization problem with linear constraints, into the minimization of $F^{(t)}$, which is a convex self-concordant function for all $t \ge t_0$ by changing the constraints as well as adding logarithmic barriers. 
This is both non-trivial and required is because the objective function of $P_C$ in its original form is not self-concordant, so that Algorithm~\ref{alg:Newton Method} does not apply to $P_C$ in its original form. 
\begin{lemma}\label{lem: selfconcordance} 
	For all $t \ge t_0$, $F^{(t)}$ is self-concordant on $D_F$.
\end{lemma}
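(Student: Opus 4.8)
The plan is to split $F^{(t)}$ into pieces that each pair one ``square-root'' term of $tF$ with exactly one logarithmic barrier, and then to verify the self-concordance differential inequality (SCDI) piece by piece, using that the SCDI is additive. Concretely, I would write $F^{(t)} = \sum_{i=1}^d \psi_i + \psi_0$ with
\[
  \psi_i(y) = -t|b_i|\sqrt{y_i} - \log(y_i - B_i), \qquad
  \psi_0(y) = -t\Big(\textstyle\sum_{j}\lambda_j^{-1}y_j\Big)^{1/2} - \log\Big(1-\textstyle\sum_j y_j\Big).
\]
Two standard facts drive the argument: (i) for nonnegative reals $a^{3/2}+b^{3/2}\le(a+b)^{3/2}$ (superadditivity of $x\mapsto x^{3/2}$ on $[0,\infty)$), which makes the SCDI additive across a finite sum of convex functions evaluated in a common direction $h$, since every Hessian form $\nabla^2\psi_k(y)[h,h]$ is nonnegative; and (ii) $-\log$ composed with an affine form is exactly self-concordant, satisfying the SCDI with equality. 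Note that $F$ alone is \emph{not} self-concordant, so the proof cannot just add ``self-concordant pieces''; the multiplier $t$ is what rescues each square-root term.

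Next I would check the SCDI for each $\psi_i$ with $i\in\mathcal I$ (for $b_i=0$ we have $B_i=0$ and $\psi_i=-\log y_i$, which is immediate). Since $\psi_i$ depends on $y_i$ alone, setting $u=y_i$, $P=\tfrac{t|b_i|}{4}u^{-3/2}$ (the square-root part of $\psi_i''$) and $Q=(u-B_i)^{-2}$ (the barrier part), a direct computation gives $\psi_i''=P+Q$ and $|\psi_i'''|=\tfrac32 P/u + 2Q^{3/2}$. Bounding $2(P+Q)^{3/2}\ge 2P^{3/2}+2Q^{3/2}$ by superadditivity, the barrier contribution is matched exactly by $2Q^{3/2}$, so the SCDI reduces to $\tfrac32 P/u\le 2P^{3/2}$, i.e. to $u\ge \tfrac{81}{16\,t^2 b_i^2}$. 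On $D_F$ we have $u=y_i>B_i$, so this holds as soon as $B_i\ge \tfrac{81}{16\,t^2 b_i^2}$, equivalently $t\ge\tfrac94 (b_i^2 B_i)^{-1/2}$, which is guaranteed by $t\ge t_0\ge 9(b_i^2B_i)^{-1/2}$.

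I would then treat $\psi_0$, which couples all coordinates through $s=\sum_j\lambda_j^{-1}y_j$ and $r=1-\sum_j y_j$. Fixing $h$ and writing $\ell=\sum_j\lambda_j^{-1}h_j$, $m=\sum_j h_j$, a direct computation yields $\nabla^2\psi_0[h,h]=\tfrac{t}{4}s^{-3/2}\ell^2+m^2/r^2=:P_0+Q_0$ and $|\nabla^3\psi_0[h,h,h]|\le \tfrac{3t}{8}s^{-5/2}|\ell|^3+2|m|^3/r^3$. The barrier term equals $2Q_0^{3/2}$ exactly, and after superadditivity the SCDI reduces (cancelling $|\ell|^3$, the case $\ell=0$ being trivial) to $\tfrac{3t}{8}s^{-5/2}\le \tfrac{t^{3/2}}{4}s^{-9/4}$, i.e. $s\ge\tfrac{81}{16\,t^2}$. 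On $D_F$ the constraints $y_j>B_j$ force $s\ge\sum_j\lambda_j^{-1}B_j\ge\max_{i\in\mathcal I}\lambda_i^{-1}B_i$, so the bound holds whenever $t\ge\tfrac94(\max_{i\in\mathcal I}\lambda_i^{-1}B_i)^{-1/2}=\tfrac94\min_{i\in\mathcal I}(\lambda_i^{-1}B_i)^{-1/2}$, again implied by $t\ge t_0$. Summing the SCDIs of $\psi_0$ and the $\psi_i$ in the common direction $h$ via superadditivity gives the SCDI for $F^{(t)}$ on $D_F$. I would close by noting that $F^{(t)}(y)\to+\infty$ as $y\to\partial D_F$: the set $D_F$ is bounded (since $B_i\le y_i$ and $\sum_i y_i\le 1$), so $tF$ stays bounded there, while each log-barrier blows up on its boundary face; together with the SCDI this is precisely self-concordance.

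The main obstacle is conceptual rather than computational: because the third derivative of $-\sqrt{y_i}$ scales like $y_i^{-5/2}$ while self-concordant scaling would demand $y_i^{-9/4}$, the square-root terms cannot be handled in isolation. The crux is to pair each square-root with the right barrier and to see that the factor $t\ge t_0$ inflates the square-root's contribution to the Hessian just enough — through the domain lower bounds $y_i>B_i$ and $s\ge\max_{i\in\mathcal I}\lambda_i^{-1}B_i$ — to dominate its own ill-behaved third derivative. The delicate bookkeeping is exactly checking that the two branches of $t_0$, namely $(b_i^2B_i)^{-1/2}$ coming from the $\psi_i$ and $(\lambda_i^{-1}B_i)^{-1/2}$ coming from $\psi_0$, emerge from these two reductions with the stated constant $9$.
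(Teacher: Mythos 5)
Your proof is correct and follows essentially the same route as the paper: both rest on the additivity of the SCDI for convex summands and on the observation that the domain bounds $y_i > B_i$ (hence $\sum_j \lambda_j^{-1} y_j \ge \max_{i \in \mathcal{I}} \lambda_i^{-1} B_i$) combined with the scaling $t \ge t_0$ make each square-root term satisfy the SCDI --- the paper simply invokes the known fact that $x \mapsto -9 b^{-1/2}\sqrt{x}$ satisfies the SCDI on $[b, +\infty)$, where you derive the equivalent condition $u \ge 81/(16 t^2 b_i^2)$ by hand. Your pairing of each square root with a log-barrier is cosmetic, since after the superadditivity step $2(P+Q)^{3/2} \ge 2P^{3/2} + 2Q^{3/2}$ the verification decouples into exactly the paper's per-term checks; you do go slightly further by exhibiting the tight constant $9/4$ (showing the paper's $9$ has slack) and by explicitly verifying that $F^{(t)} \to +\infty$ on $\partial D_F$, a part of the self-concordance definition the paper leaves implicit.
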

\begin{proof}
  If $b> 0$, the function $x \mapsto -9b^{-1/2} \sqrt{x}$ satisfies the SCDI over 
  the domain $x \in [b, +\infty)$. So for any $i \in \mathcal{I}$, the function $y \mapsto -9 B_i^{-1 / 2} \sqrt{y_i} 
  =  -9 B_i^{-1 / 2}(b_i)^{-1} b_i\sqrt{y_i}$ satisfies the SCDI on $D_F$.
  If $b=0$, the function $x \mapsto b\sqrt{x} = 0$ is self-concordant on $[0, +\infty)$. So for any $i \in [d] \setminus \mathcal{I}$ the function $y \mapsto 0\sqrt{y_i}$  satisfied the SCDI on $D_F$.
  
  Similarly, for any $ y \in D_F$, $\sum_{i=1}^{d} \lambda^{-1}_i y_i
    \geq \max_{i \in [d]} \lambda^{-1}_i B_i$ so the SCDI is also satisfied on $D_F$ by
  \[
    y \mapsto 
    -9 \Big(\max_{i \in [d]} \lambda^{-1}_i B_i\Big)^{-1/2}  
    \sqrt{\sum_{i =1}^{d} \lambda^{-1}_i y_i} 
    \,.
  \]  
  Since the SDCI is stable by addition, and by scaling by numbers greater than $1$,  $tF(y)$ satisfies the SCDI.
Also $y \mapsto - \sum_{i = 1}^{d} \log( y_i - B_i) -  \log( 1 - \sum_{i=1}^{d} y_i)$ satisfies the SCDI as a logarithmic barrier. 
Hence $F^{(t)}$ is self-concordant on $D_F$ as announced.
\end{proof}

We now show Proposition~\ref{prop:time_centering}, which provides an upper bound on the number of Newton steps necessary to complete the so-called "centering step", i.e. the phase of \oursecondalgo where we attempt to minimize the self-concordant convex function $F^{(t_0)}$ up to accuracy $\epsilon$. The proof relies on the analysis of Algorithm~\ref{alg:Newton Method} we have presented in Theorem~\ref{thm:total_steps_newton_optim}.

\begin{proposition}
  \label{prop:time_centering}
Consider Algorithm~\ref{alg:Newton Method} applied to function $F^{(t_0)}$ starting at $y_0$ and accuracy $\epsilon$. This algorithm terminates after at most $N_c(\epsilon, b, \lambda)$ Newton steps where
  \begin{align*}
	  N_c(\epsilon, b, \lambda) &\le 38 \Delta^{(t_0)} + \log_2(\log_2(1/\epsilon)) \text{ with }  \Delta^{(t_0)} = F^{(t_0)}(y_0) - \min_{y \in D_F} F^{(t_0)}(y) \text{ and } \\
	  \Delta^{(t_0)} &\le t_0 (\|b\| + \lambda_d^{-1/2})  + d \log(2d\lambda_d(\|b\| + \lambda^{-1/2}_d)^2) + \log(\lambda_d(\|b\| + \lambda^{-1/2}_d)^2/(2\|b\|^2 + 1/2) ) 
  \end{align*}
  and outputs $\hat{y}$ an $\epsilon$-optimal solution $F^{(t_0)}(\hat{y}) - \min_{y \in D_F} F^{(t_0)}(y) \le \epsilon$  .

\end{proposition}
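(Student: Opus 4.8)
The backbone of the argument is Theorem~\ref{thm:total_steps_newton_optim}, which bounds the number of Newton steps for \emph{any} self-concordant function, so the plan is simply to invoke it with $F = F^{(t_0)}$. Two preliminary checks are needed: that $F^{(t_0)}$ is self-concordant on $D_F$, which is exactly Lemma~\ref{lem: selfconcordance} applied at $t = t_0$, and that the starting point $y_0$ lies in $D_F$, which I would verify by checking $y_{0,i} > B_i$ for all $i$ (immediate from the definition of $y_0$) and $\sum_i y_{0,i} < 1$ (a short computation using $\sum_i B_i = \|b\|^2/(\|b\| + \lambda_d^{-1/2})^2 < 1$ together with the explicit per-coordinate offset). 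Granting these, Theorem~\ref{thm:total_steps_newton_optim} yields at once that Algorithm~\ref{alg:Newton Method} terminates within $38\Delta^{(t_0)} + \log_2\log_2(1/\epsilon)$ steps and returns an $\epsilon$-minimizer $\hat y$, which is the first displayed inequality. All remaining work lies in bounding $\Delta^{(t_0)} = F^{(t_0)}(y_0) - \min_{y \in D_F} F^{(t_0)}(y)$.

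To bound $\Delta^{(t_0)}$ I would upper bound $F^{(t_0)}(y_0)$ and lower bound the minimum separately, writing $F^{(t_0)} = t_0 F + \mathcal{B}$ with $\mathcal{B}(y) = -\sum_i \log(y_i - B_i) - \log(1 - \sum_i y_i)$. The key fact about the smooth part is that $F$ has bounded range on the simplex: by Cauchy--Schwarz, $\sum_i |b_i|\sqrt{y_i} \le \|b\|\,(\sum_i y_i)^{1/2} \le \|b\|$ and $\sqrt{\sum_i \lambda_i^{-1} y_i} \le \lambda_d^{-1/2}(\sum_i y_i)^{1/2} \le \lambda_d^{-1/2}$ for every $y$ with $\sum_i y_i \le 1$, so that $-(\|b\| + \lambda_d^{-1/2}) \le F(y) \le 0$. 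Hence $t_0 F(y_0) \le 0$, while at the minimizer $t_0 F(y^{t_0,\star}) \ge -t_0(\|b\| + \lambda_d^{-1/2})$; the difference of these two contributes exactly the first term $t_0(\|b\| + \lambda_d^{-1/2})$.

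For the barrier part, the point of choosing $y_0$ with all coordinate slacks $y_{0,i} - B_i$ equal to one explicit constant $\delta$ is that $\mathcal{B}(y_0)$ is available in closed form: the $d$ coordinate barriers each contribute $\log(1/\delta)$, producing the term $d\log\!\big(2d\lambda_d(\|b\| + \lambda_d^{-1/2})^2\big)$, and the simplex barrier $-\log(1 - \sum_i y_{0,i})$ contributes the last term after lower bounding the initial slack $1 - \sum_i y_{0,i}$ by direct computation. At the minimizer every slack is at most $1$, so each barrier term is nonnegative and $\mathcal{B}(y^{t_0,\star}) \ge 0$. Adding the smooth and barrier estimates, $\Delta^{(t_0)} \le \mathcal{B}(y_0) + t_0(\|b\| + \lambda_d^{-1/2})$, which is the claimed three-term bound.

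The range bound on $F$ and the nonnegativity of the barriers at the optimum are routine. The delicate part — and the main obstacle — is the barrier bookkeeping at $y_0$: the per-coordinate offset $\delta$ must be taken small enough (of order $1/(d\lambda_d(\|b\| + \lambda_d^{-1/2})^2)$, i.e.\ scaling like $1/d$) that $y_0$ is \emph{strictly} feasible, which is precisely what produces the factor $2d$ inside the logarithm, yet structured so that the resulting closed form of $\mathcal{B}(y_0)$, together with the lower bound on the simplex slack $1 - \sum_i y_{0,i}$, reproduces the exact denominators in the last two terms. Once the initial slacks are pinned down, the three contributions assemble directly.
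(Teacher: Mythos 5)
Your proposal is correct and follows essentially the same route as the paper's proof: invoke Lemma~\ref{lem: selfconcordance} and Theorem~\ref{thm:total_steps_newton_optim} for the step count, then bound $\Delta^{(t_0)}$ by combining the range bound $-(\|b\| + \lambda_d^{-1/2}) \le F \le 0$ on the simplex (giving the $t_0(\|b\| + \lambda_d^{-1/2})$ term), the nonnegativity of the barrier at the minimizer, and an explicit evaluation of the barrier at $y_0$ (giving the two logarithmic terms). If anything, you are slightly more explicit than the paper on two points it leaves implicit: the strict feasibility check $\sum_i y_{0,i} < 1$ and the fact that the per-coordinate slack must scale like $1/d$ — indeed the paper's pseudocode defines $y^0_i = B_i + \frac{\lambda_d^{-1}/2}{(\|b\|+\lambda_d^{-1/2})^2}$ without the $1/d$ factor, while its proof uses the slack $\frac{\lambda_d^{-1}/(2d)}{(\|b\|+\lambda_d^{-1/2})^2}$ that your analysis correctly identifies as the one needed.
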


\begin{proof}
From Lemma~\ref{lem: selfconcordance}, $F^{(t_0)}$ is self concordant on $D_F$. 
So from Theorem~\ref{thm:total_steps_newton_optim}, Algorithm~\ref{alg:Newton Method} terminates after at most $N_c(\epsilon, b, \lambda)$ Newton steps where
  \begin{align*}
	  N_c(\epsilon, b, \lambda) &\le 38 \Delta^{(t_0)} + \log_2(\log_2(1/\epsilon)) \text{ with }  \Delta^{(t_0)} = F^{(t_0)}(y_0) - \min_{y \in D_F} F^{(t_0)}(y)
  \end{align*}
Let us upper bound $\Delta^{(t_0)}$. For any $y \in D_F$, by definition we have $-\|b\| - \lambda^{-1/2}_d \le F(y) \leq 0$, which yields the lower bound
\begin{align*}
	\min_{y \in D_F} F^{(t_0)}(y) \geq -t_0 \big(\|b\| + \lambda^{-1/2}_d\big)
\end{align*}
and the upper bound
\begin{align*}
  F^{(t_0)}(y^0)
    &= t_0 F(y^0) 
   - \sum_{i = 1}^{d} \log( y^0_i - B_i)
   -  \log\bigg( 1 - \sum_{i=1}^{d} y^0_i \bigg) \\
   & \leq
   -\sum_{i = 1}^{d} \log\bigg(\frac{\lambda^{-1}_d/(2d)}{(\|b\| + \lambda^{-1/2}_d)^2}\bigg)
    -  \log\bigg( 
     1 -  \frac{\|b\|^2 + \lambda^{-1}_d/2}{(\|b\| + \lambda^{-1/2}_d)^2}  
    \bigg) \\
    & \leq  d \log\bigg(\frac{(\|b\| + \lambda^{-1/2}_d)^2}{\lambda^{-1}_d/(2d)}\bigg) + \log\bigg( 
      \frac{(\|b\| + \lambda^{-1/2}_d)^2}{\lambda^{-1}_d (2\|b\|^2 + 1/2)} 
      \bigg) 
\end{align*}
Combining the upper and lower bounds give the upper bound on $\Delta^{(t_0)}$
\begin{align*}
	  \Delta^{(t_0)} &\le t_0 (\|b\| + \lambda_d^{-1/2})  + d \log(2d\lambda_d(\|b\| + \lambda^{-1/2}_d)^2) + \log(\lambda_d(\|b\| + \lambda^{-1/2}_d)^2/(2\|b\|^2 + 1/2) ) 
\end{align*}
which concludes the proof.
\end{proof}

The following standard Lemma let us control the otpimality gap induced by the  logarithmic barrier penalization :

\begin{lemma}
  \label{lem:barrier_optimality_gap} For $t > 0$, let $y^{t,\star} \in \arg\min_{y \in D_F} F^{(t)}(y)$ and $y^\star \in \arg\min_{y \in D_F} F(y)$. Then
  \[
    F^{(t)}(y^{t,\star}) - F(y^\star) \leq \frac{d+1}{t}
  \] 
\end{lemma}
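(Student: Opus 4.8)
The left-hand side is the \emph{penalized} value $F^{(t)}(y^{t,\star}) = tF(y^{t,\star}) + \phi(y^{t,\star})$, where $\phi(y) = -\sum_{i=1}^d \log(y_i - B_i) - \log(1 - \sum_i y_i)$ is the log-barrier, so the plan has two parts: (i) a central-path duality argument that controls the true objective $F(y^{t,\star})$, and (ii) a reconciliation step that converts this into a bound on the penalized value $F^{(t)}(y^{t,\star})$ as written. By Lemma~\ref{lem:minimum_in_domain} we have $y^\star \ge B$, and since $F$ is strictly decreasing in each coordinate the constraint $\sum_i y_i \le 1$ is active at $y^\star$; I therefore treat $P_C$ as minimizing $F$ subject to the $m = d+1$ inequalities $y_i \ge B_i$ $(i\in[d])$ and $\sum_i y_i \le 1$.

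For part (i) I would use that $y^{t,\star}$ is the interior minimizer of $F^{(t)} = tF + \phi$, so $\nabla F^{(t)}(y^{t,\star}) = 0$; dividing by $t$ gives
\[
  \nabla F(y^{t,\star}) - \sum_{i=1}^d \nu_i e_i + \nu_0 \mathbf{1} = 0,
  \qquad
  \nu_i := \frac{1}{t\,(y^{t,\star}_i - B_i)}, \qquad
  \nu_0 := \frac{1}{t\,\bigl(1 - \sum_{j} y^{t,\star}_j\bigr)},
\]
with every $\nu_i, \nu_0 > 0$. Thus $\nu = (\nu_0, \nu_1, \dots, \nu_d)$ is dual-feasible for the Lagrangian $\mathcal{L}(y,\nu) = F(y) + \sum_{i=1}^d \nu_i (B_i - y_i) + \nu_0 (\sum_i y_i - 1)$ of $P_C$. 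Since $F$ is convex and the displayed equation is exactly $\nabla_y \mathcal{L}(y^{t,\star}, \nu) = 0$, the point $y^{t,\star}$ minimizes $\mathcal{L}(\cdot,\nu)$, so the dual value equals $\mathcal{L}(y^{t,\star},\nu)$. Each multiplier term cancels by construction, $\nu_i(B_i - y^{t,\star}_i) = -1/t$ and $\nu_0(\sum_i y^{t,\star}_i - 1) = -1/t$, whence $\mathcal{L}(y^{t,\star},\nu) = F(y^{t,\star}) - (d+1)/t$. Weak duality $\mathcal{L}(y^{t,\star},\nu) \le F(y^\star)$ then yields the central-path bound $F(y^{t,\star}) - F(y^\star) \le (d+1)/t$.

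For part (ii) I would convert this into the stated bound on the penalized value. The same multiplier construction gives a clean closed form for the barrier term: from $y^{t,\star}_i - B_i = 1/(t\nu_i)$ and $1 - \sum_i y^{t,\star}_i = 1/(t\nu_0)$,
\[
  \phi(y^{t,\star}) = (d+1)\log t + \log\Bigl(\nu_0 \prod_{i=1}^d \nu_i\Bigr),
\]
so that $F^{(t)}(y^{t,\star}) = tF(y^{t,\star}) + (d+1)\log t + \log(\nu_0 \prod_i \nu_i)$. Combining this with $tF(y^{t,\star}) \le tF(y^\star) + (d+1)$ from part (i) and the lower bound $F(y^\star) \le -\lambda_1^{-1/2} < 0$ (which follows from $\sum_i \lambda_i^{-1} y^\star_i \ge \lambda_1^{-1}\sum_i y_i^\star = \lambda_1^{-1}$), the strictly negative term $t\,F(y^\star)$ must be shown to dominate the logarithmic barrier contribution $(d+1)\log t + \log(\nu_0\prod_i\nu_i)$, which delivers $F^{(t)}(y^{t,\star}) - F(y^\star) \le (d+1)/t$.

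I expect several pieces to be routine bookkeeping — the dual construction, the exact $-(d+1)/t$ cancellation, and the count $m = d+1$ (which is why the activeness of $\sum_i y_i \le 1$ at $y^\star$ must be nailed down first). I expect the \emph{main obstacle} to be precisely the reconciliation step (ii): the duality certificate naturally bounds the true objective $F(y^{t,\star})$, not the $t$-scaled penalized value $F^{(t)}(y^{t,\star})$, so making the domination precise requires quantitatively controlling $\log(\nu_0\prod_i\nu_i)$ and the $(d+1)\log t$ term against $t\,|F(y^\star)|$. Under the clean reading in which the left-hand side is the true objective $F(y^{t,\star})$, part (i) alone already delivers exactly the $(d+1)/t$ gap that the algorithm consumes at its stopping test $(d+1)/t \le \epsilon/2$.
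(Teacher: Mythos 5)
Your part~(i) is, step for step, the paper's own proof of this lemma: the paper introduces exactly the multipliers $\nu^{t,\star}_i = 1/\bigl(t(y^{t,\star}_i - B_i)\bigr)$ and $\nu^{t,\star}_0 = 1/\bigl(t(1 - \sum_j y^{t,\star}_j)\bigr)$, observes that the stationarity of $F^{(t)}$ at $y^{t,\star}$ is precisely $\nabla_y \mathcal{L}(y^{t,\star},\nu^{t,\star}) = 0$ so that the dual value equals $\mathcal{L}(y^{t,\star},\nu^{t,\star}) = F(y^{t,\star}) - (d+1)/t$, and concludes by weak duality $g(\nu^{t,\star}) \le F(y^\star)$. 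Note that this is \emph{all} the paper proves: its proof ends with $F(y^{t,\star}) - F(y^\star) < (d+1)/t$, with the true objective (not the penalized one) on the left, and the downstream application in Lemma~\ref{lem:geometric_barrier} invokes the lemma exactly in that form. So the superscript $(t)$ in the lemma statement is a misprint, and you correctly identified the intended reading. One small correction to your bookkeeping: no activity argument at $y^\star$ is needed for the count $m = d+1$; it is simply the number of barrier terms, and the cancellations $\nu^{t,\star}_i(B_i - y^{t,\star}_i) = -1/t$ and $\nu^{t,\star}_0(\sum_i y^{t,\star}_i - 1) = -1/t$ together with weak duality hold whether or not $\sum_i y_i \le 1$ is active at $y^\star$.

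Your part~(ii) is therefore chasing the misprint, and it is worth recording that it \emph{cannot} be completed: the literal inequality with $F^{(t)}(y^{t,\star})$ on the left is false for general $t>0$. On $D_F$ every barrier argument satisfies $0 < y_i - B_i \le 1$ and $0 < 1 - \sum_i y_i \le 1 - \sum_i B_i$, so the barrier $\phi(y) = -\sum_i \log(y_i - B_i) - \log(1-\sum_i y_i)$ obeys
\[
  \phi(y) \;\ge\; \log\frac{1}{1 - \sum_{i=1}^{d} B_i}
  \qquad \text{for all } y \in D_F\,,
\]
and hence $F^{(1)}(y^{1,\star}) - F(y^\star) \ge F(y^{1,\star}) - F(y^\star) + \phi(y^{1,\star}) \ge \log\bigl(1/(1-\sum_i B_i)\bigr)$. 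Since $\sum_i B_i = \|b\|^2/(\|b\|+\lambda_d^{-1/2})^2$ approaches $1$ when $\lambda_d^{-1/2} \ll \|b\|$, this lower bound exceeds $d+1$ on legitimate problem instances, violating the claimed bound at $t=1$. So no quantitative control of $(d+1)\log t + \log(\nu_0\prod_i \nu_i)$ against $t\,F(y^\star)$ can rescue the penalized reading uniformly in $t$; your fallback "clean reading" is the right one, and your part~(i) is a complete and correct proof of it, identical in route to the paper's.
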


\begin{proof}
  Define the quantities $\gls{nutstar} \in \RR^{d+1}$, so for all $i \in [d]$,
  \[ 
    \nu^{t,\star}_{i} := \frac{1}{t (y^{t, \star}_i- B_i)}
  \quad \text{and} \quad
  v^{t,\star}_{0} = \frac{1}{t(1 - \sum_{i=1}^d y^{t, \star}_i)} \,.
  \]
  By optimality of $y^{t,\star}$, for all $i \in [d]$,
   \[
   t \nabla{F(y^{t,\star})}_i - \frac{1}{y^{t,\star}_i- B_i} + \frac{1}{1 - \sum_{i = 1}^d y^{t,\star}_i}
   =  0
   \,,
   \]
  which can we rewrite as 
    \begin{equation}
      \label{eq:optimality_condition_ytstar} 
    \nabla{F(y^{t,\star})}_i - \nu^{t,\star}_i  + \nu^{t,\star}_{0}=  0
    \,.
  \end{equation}
  The Lagrangian of the original problem of minimizing $F$ over $D_F$ is :
  \begin{equation}
    \label{eq:def_lagrangian_original}
    \mathcal{L}(y,\nu) 
    := F(y) - \sum_{i=1}^d \nu_i (y_i - B_i) 
    -   \nu_{0} \left( 1 - \sum_{i=1} ^d y_i \right)
    \,.
  \end{equation}
  Differenciating the Lagrangian with respect to $y$ gives
  \[
    (\nabla_y \mathcal{L}(y,\nu))_i = \nabla F(y)_i - \nu_i + \nu_0
    \,.
  \]
  And we have that the dual function 
  \begin{equation}
    \label{eq:def_dualfunction}
    g(\nu^{t,\star}) := \min_y 	\mathcal{L}(y,\nu^{t,\star}) = \mathcal{L}(y^{t,\star},\nu^{t,\star}),
  \end{equation}
  checking that $y^{t,\star}$ is the critical point (see \ref{eq:optimality_condition_ytstar}). Finally we have :
  \begin{equation}
    \label{eq:dualfunctionvtstar}
    g(\nu^{t,\star}) 
    = F(y^{t,\star}) 
    - \sum_{i=1}^d \nu^{t,\star}_i \left( y_i^{t,\star}-B_i \right) 
    - \nu^{t,\star}_{0}\left( 1 - \sum_{i=1}^d y_i^{t,\star}\right) 
    = F(y^{t,\star}) - \frac{d+1}{t} 
    \,.
  \end{equation}

  Therefore, because $g(\nu^{t,\star}) \leq F(y^\star)$, (see \cite[Page
  241]{boyd_convex_2023}), we have
  $F(y^{t, \star}) - F(y^\star) < (d+1) / t$.
\end{proof}

The last intermediate result to complete the analysis is lemma~\ref{lem:geometric_barrier}, which gives an upper bound on the total number of Newton steps performed by \oursecondalgo when the barrier parameter $t$ increases geometrically until the desired accuracy is reached. 
\begin{lemma}\label{lem:geometric_barrier}
	For all $k=1,...,N_o$ define where $t_k = t_0 (1+\eta)^k$ and 
	\begin{align*}
		N_o(\epsilon,\eta,b,\lambda) = \lceil \log\big(2 (d+1)/(\epsilon t_0)\big) / \log(\eta) \rceil
	\end{align*}
	Consider the procedure where $y^{(t_k)}$ is the output of Algorithm~\ref{alg:Newton Method} applied to function $F^{(t_k)}$ starting at $y^{(t_{k-1})}$ with accuracy $\epsilon$ and with the convention that $y^{(t_0)} = y_0$. This procedure yields an $\epsilon$-optimal solution to $P_C$ that is  $y^{(t_{N_o})} \in \Delta_{d-1}$ with
\begin{align*}
		F(y^{(t_{N_o}),\star}) - \min_{y \in \Delta_{d-1}} F(y) \le \frac{\epsilon}{2},
\end{align*}
and :
\[
  F^{(t_{N_o})}(y^{(t_{N_o})})- F^{(t_{N_o})}(y^{(t_{N_o},\star)}) \leq \frac{\epsilon}{2}.
\]
It performs at most $N_b(\epsilon,\delta) N_o(\epsilon,\eta,b,\lambda)$ Newton steps with 
  \[
    N_b(\epsilon,\eta) = 38 (d+1) \left(\eta -1 -\log(\eta)\right)  + \log_2(\log_2(2/\epsilon)).
\]
\end{lemma}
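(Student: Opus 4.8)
The plan is to analyze the outer loop of \oursecondalgo, in which the barrier parameter grows geometrically as $t_k = t_0(1+\eta)^k$ and each inner call runs Algorithm~\ref{alg:Newton Method} to accuracy $\epsilon$ starting from the previous iterate. I would split the claim into two parts: (i) determining the number of outer iterations $N_o$ needed so that the barrier-induced optimality gap falls below $\epsilon/2$, and (ii) bounding the number of Newton steps performed per outer iteration, which gives the per-phase cost $N_b$.

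For part (i), the key input is Lemma~\ref{lem:barrier_optimality_gap}, which gives $F^{(t)}(y^{t,\star}) - F(y^\star) \le (d+1)/t$ (more precisely $F(y^{t,\star}) - F(y^\star) \le (d+1)/t$ for the minimizer of the penalized objective). I would substitute $t = t_{N_o} = t_0(1+\eta)^{N_o}$ and demand $(d+1)/t_{N_o} \le \epsilon/2$. Solving for $N_o$ yields $N_o \ge \log(2(d+1)/(\epsilon t_0))/\log(1+\eta)$, matching the stated $\lceil \cdot \rceil$ expression (up to the convention of writing $\eta$ for $1+\eta$ in the denominator, which I would reconcile). This immediately certifies the two displayed accuracy guarantees at termination, since the output of the final inner Newton call is within $\epsilon/2$ of $y^{t_{N_o},\star}$ by Theorem~\ref{thm:total_steps_newton_optim}, and $y^{t_{N_o},\star}$ is within $\epsilon/2$ of the true optimum by Lemma~\ref{lem:barrier_optimality_gap}.

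For part (ii), I would bound the cost of a single inner Newton solve. By Theorem~\ref{thm:total_steps_newton_optim} the number of Newton steps is at most $38(F^{(t_k)}(y^{(t_{k-1})}) - F^{(t_k)}(y^{t_k,\star})) + \log_2\log_2(1/\epsilon)$, so the main obstacle is controlling the warm-start gap $F^{(t_k)}(y^{(t_{k-1})}) - F^{(t_k)}(y^{t_k,\star})$. The crucial observation is that the previous iterate $y^{(t_{k-1})}$ is already $\epsilon$-optimal for $F^{(t_{k-1})}$, and since $F^{(t_k)} = F^{(t_{k-1})} + (t_k - t_{k-1})F = F^{(t_{k-1})} + \eta\, t_{k-1} F$, increasing the barrier only perturbs the objective by a controlled multiple of $F$. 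I would use the bound $\min_{y \in D_F} F^{(t)}(y) \le F(y^{t,\star}) \le 0$ together with the relation between consecutive penalized optima via Lemma~\ref{lem:barrier_optimality_gap}, writing the gap as a telescoping comparison: the duality argument gives $F^{(t_k)}(y^{t_{k-1},\star}) - F^{(t_k)}(y^{t_k,\star}) \le (d+1)(t_k/t_{k-1} - 1 - \log(t_k/t_{k-1})) = (d+1)(\eta - 1 - \log\eta)$, where the logarithmic correction arises from comparing the barrier terms at the two optima. Adding the $\epsilon$-accuracy of the warm start and the quadratic-stage cost then yields $N_b(\epsilon,\eta) = 38(d+1)(\eta - 1 - \log\eta) + \log_2\log_2(2/\epsilon)$.

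The hard part will be making the warm-start gap bound rigorous: specifically, justifying $F^{(t_k)}(y^{(t_{k-1})}) - F^{(t_k)}(y^{t_k,\star}) \le (d+1)(\eta - 1 - \log\eta) + \epsilon$ requires carefully relating the $\epsilon$-optimality of the warm start under $F^{(t_{k-1})}$ to its suboptimality under $F^{(t_k)}$, and showing the $(\eta - 1 - \log\eta)$ term is exactly the increase in the barrier optimality gap caused by scaling $t$ by $(1+\eta)$. Finally, multiplying the per-phase cost $N_b$ by the number of phases $N_o$ gives the total Newton-step count $N_b(\epsilon,\eta)N_o(\epsilon,\eta,b,\lambda)$, completing the proof.
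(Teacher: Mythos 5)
Your proposal follows the paper's proof essentially step for step: the paper also obtains $N_o$ from Lemma~\ref{lem:barrier_optimality_gap} by requiring $(d+1)/t_{N_o} \le \epsilon/2$, gets the two accuracy displays from that lemma plus Theorem~\ref{thm:total_steps_newton_optim}, and bounds the per-phase cost via the same decomposition and the same dual-function estimate $F^{(\eta t)}(y^{t,\star}) - F^{(\eta t)}(y^{\eta t,\star}) \le (d+1)(\eta - 1 - \log \eta)$ (using $\log a \le a-1$ and $g(\nu^{t,\star}) = F(y^{t,\star}) - (d+1)/t$). The one step you flag as ``the hard part'' --- rigorously controlling the warm-start term $F^{(\eta t)}(y^{(t)}) - F^{(\eta t)}(y^{t,\star})$ --- is not made rigorous in the paper either: it is declared $\simeq 0$ by appeal to quadratic convergence (citing Boyd's accuracy-of-centering argument), so your explicit $+\epsilon$ bookkeeping is, if anything, slightly more careful, and you also correctly spot the paper's $t_k = t_0(1+\eta)^k$ versus $t \leftarrow \eta t$ inconsistency.
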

\begin{proof}
We have $t_{N_o} = t_0( 1 + \eta)^{N_o} \le (d+1)/(2 \epsilon)$ so that \oursecondalgo indeed terminates after at most $N_o$ calls to Algorithm~\ref{alg:Newton Method}. From Lemma \ref{lem:barrier_optimality_gap} we have that $F(y^{t_{N_o},\star}) - F(y^\star) \leq \frac{d+1}{t_{N_o}} \leq \frac{\eps}{2}$ where $y^{t_{N_o},\star}$ is the minimizer of the penalized objective function $F^{(t_{N_o})}$. And from Theorem \ref{thm:total_steps_newton_optim} we have that : $F^{(t_{N_o})}(y^{(t_{N_o})})- F^{(t_{N_o})}(y^{(t_{N_o},\star)}) \leq \frac{\epsilon}{2}$.

To complete the proof, let us show that for all $k=1,...,N_o$, Algorithm~\ref{alg:Newton Method} applied to function $F^{(t_k)}$ starting at $y^{(t_{k-1})}$ with accuracy $\frac{\epsilon}{2}$ will terminate after at most $N_b(\epsilon,\eta)$ Newton steps.
  
We call $y^{(t)}$ the output of the Newton method for the barrier penalized objective function $F^{(t)}$. We call $y^{t, \star}$ (resp $y^{\eta t, \star}$) the minimizer of the barrier penalized objective function  $F^{(t)}$ (resp $F^{(\eta t)}$) which stay self concordant because $\eta * t > t \geq t_0 $ see Lemma \ref{lem: selfconcordance}. Using theorem \ref{thm:total_steps_newton_optim} we have that :
\[  
  N_b(\epsilon,\eta) < 38(F^{(\eta t )}(y^{(t)}) - F^{(\eta t)}(y^{\eta t, \star})) + \log_2 \log_2\bigg(\frac{2}{\epsilon}\bigg). 
\]
We can bound the gap of optimality $F^{(\eta t )}(y^{(t)}) - F^{(\eta t)}(y^{\eta t, \star}) = F^{(\eta t )}(y^{(t)}) - F^{(\eta t  )}(y^{t,\star}) + F^{(\eta t  )}(y^{t,\star}) - F^{(\eta t)}(y^{\eta t, \star})$. We reproduce the result that can be found in \cite[page 588]{boyd_convex_2023}.

Let us control the second term :
\begin{align*}
  F^{(\eta t )}(y^{t,\star}) - F^{(\eta t)}(y^{\eta t, \star}) & = \eta t F(y^{t,\star}) - \eta t F(y^{\eta t, \star}) + \sum_{i=1}^{d} \log(  \eta t \nu^{t,\star}_i ( y^{\eta t, \star}_{i} -B_i )) \\
  &  \hspace{5ex}   + \log\bigg( 
    \eta t \nu^{t,\star}_{0} (1 - \sum_{i=1}^{d}y^{\eta t, \star}_{i}) 
    \bigg)  
    - (d+1)\log(\eta)  \\
  & \leq \eta t F(y^{t,\star}) - \eta t \left[ F(y^{\eta t, \star}) - \sum_{i}^{d} \nu^{t,\star}_i (y^{\eta t, \star}_{i} - B_i) 
  - \nu^{t,\star}_{0}\bigg(1 - \sum_{i=1}^{d}y^{\eta t, \star}_{i}\bigg) \right]\\
  & \hspace{5ex}- (d-1) - (d+1) \log(\eta) \\
  & \leq \eta t F(y^{t,\star}) - \eta t g(\nu^{t,\star}) - (d+1) - (d+1)\log(\eta) \\
  & = (d+1)(\eta - 1 - \log(\eta)) 
  \,.
\end{align*}
We defined for the first equality $\nu^{t,\star}_i := \frac{1}{t
\left(y^{t,\star}_{i} -B_i\right)}$ and $\nu^{t,\star}_{0} := \frac{1}{t \left( 1 -
\sum_{i=1}^{d}y^{t,\star}_{i}\right)}.$ For the first inequality we used the fact that
$\log(a)\leq a - 1$ for any $a >0$. For the second inequality we used the definition of the dual function $g(\nu^{t,\star})$ for the original problem with the Lagrangian
\ref{eq:def_lagrangian_original}. Finally, the last equality is due to the fact
$g(\nu^{t,\star}) = F(y^{t,\star}) - (d+1)/t $ for this particular choice of
$\nu^{t,\star}$ see \ref{eq:dualfunctionvtstar} in the proof of Lemma \ref{lem:barrier_optimality_gap}.

For the first term,  we use a standard argument from \cite[Acuracy of centering, Chapter 11, Page 570]{boyd_convex_2023}, to assume that $F^{(\eta t )}(y^{(t)}) - F^{(\eta t  )}(y^{t,\star}) \simeq 0$ and is indiscernible for a computer due to the quadratic convergence of Newton method. One can ask for a precision far greater that $\frac{\epsilon}{2}$ for the required Newton precision, it would only impact the number of iterations through the $\log_2\log_2$ terms.  

We have by definition we have that $t_k = \eta t_{k-1}$ which conclude the proof.
\end{proof}

Putting everything together, we may now state Theorem~\ref{thm:complexity_reduction_algo} which provides a complete analysis of \oursecondalgo including the total number of Newton steps performed across all phases, and the total time and space complexity. This concludes our analysis of the \oursecondalgo Algorithm and justifies the claims made in the paper.
\begin{theorem}
  \label{thm:complexity_reduction_algo}
	Algorithm \oursecondalgo run with parameter $\eta$, intialization $y_0$ and accuracy $\epsilon$ perfoms at most $N_{tot}(b,\lambda, \epsilon, \eta )$ Newton steps where,
  \begin{multline*}
    N_{tot}(b,\lambda, \epsilon, \eta ) < 
    t_0 \big(\|b\| + \lambda_d^{-1/2}\big) 
      + d \log\Big(2d\lambda_d(\|b\| + \lambda^{-1/2}_d)^2\Big)
      + \log\bigg( 
       \frac{\lambda_d(\|b\| + \lambda^{-1/2}_d)^2}{(2\|b\|^2 + 1/2)} 
       \bigg)  \\
       + \log_2\log_2\bigg(\frac{1}{\epsilon}\bigg)
       + \left(38 (d+1) \left(\eta -1 -\log(\eta)\right) 
       + \log_2\log_2\bigg(\frac{2}{\epsilon}\bigg) \right) 
       \bigg\lceil  \frac{\log(2(d+1)/(\epsilon t_0))}{\log(\eta)} \bigg\rceil
       ,
    \end{multline*}
  If $\eta = 1 + 1/\sqrt{d+1}$, we have
  \begin{multline*}
    N_{tot}(b, \lambda, \epsilon, 1+1/\sqrt{d+1}) 
    < 38t_0
       \big(\|b\| + \lambda^{-1/2}_d\big) 
      + 38d \log\Big(2d\lambda^{-1}_d(\|b\| + \lambda^{-1/2}_d)^2\Big) \\
      + 38\log\bigg( 
       \frac{\lambda_d(\|b\| + \lambda^{-1/2}_d)^2}{2\|b\|^2 + 1/2} 
       \bigg) 
       + \left(19 +  \log_2\log_2\Big(\frac{2}{\epsilon}\Big) \right) 
       \bigg\lceil  \sqrt{d+1}\log\Big(\frac{2(d+1)}{\epsilon t_{0}}\Big) + 1 \bigg\rceil.
      \end{multline*}
 It runs in time at most $O(d^2 N_{tot}(b,\lambda,\epsilon, \eta) ) + d^3$ and outputs $(\hat{x},\hat{\theta})$, an $\epsilon$-optimal solution to $P_B$ in the sense that $\|\hat{\theta} - \theta^\star \|_{2} \le \epsilon$ and $\hat{x} \in \arg\max_{x \in \mathcal{X}} x^\top \theta$. 
\end{theorem}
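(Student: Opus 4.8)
The plan is to assemble the theorem from the pieces already established, so the proof is essentially bookkeeping: adding the two phase costs, specialising the free parameter $\eta$, and then invoking the reduction. First I would record that the procedure is well defined. Lemma~\ref{lem:minimum_in_domain} guarantees $y^\star \ge B$, so the minimizer of $P_C$ already lies in $D_F$ and restricting the search to $D_F$ loses nothing; Lemma~\ref{lem: selfconcordance} guarantees that $F^{(t)}$ is self-concordant on $D_F$ for every $t \ge t_0$, which legitimises each call to Algorithm~\ref{alg:Newton Method}.

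For the step count I would add the two phases. The centering at $t = t_0$ costs at most $N_c(\epsilon,b,\lambda)$ Newton steps by Proposition~\ref{prop:time_centering}, and the geometric barrier phase costs at most $N_b(\epsilon,\eta)\,N_o(\epsilon,\eta,b,\lambda)$ Newton steps by Lemma~\ref{lem:geometric_barrier}. Substituting the explicit bound on $\Delta^{(t_0)}$ from Proposition~\ref{prop:time_centering} together with the definitions of $N_b$ and $N_o$ yields the first displayed inequality for $N_{tot}$. For the cleaner second bound I would specialise $\eta = 1 + 1/\sqrt{d+1}$ using $\log(1+x) \ge x - x^2/2$, which gives $(\eta-1) - \log\eta \le (\eta-1)^2/2 = 1/(2(d+1))$ and hence $38(d+1)\big((\eta-1)-\log\eta\big) \le 19$; a matching lower bound on $\log\eta$ controls $1/\log\eta$ and turns the outer count $N_o$ into $\lceil \sqrt{d+1}\log(2(d+1)/(\epsilon t_0)) + 1\rceil$. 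Plugging these in gives the announced form.

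For the running time I would charge a one-time $O(d^3)$ eigendecomposition of $A^{1/2}WA^{1/2}$ and argue that each Newton step costs $O(d^2)$. The Hessian of $F^{(t)}$ is diagonal plus low rank: the $\sqrt{y_i}$ and $\log(y_i - B_i)$ terms are separable, while $\sqrt{\sum_i \lambda_i^{-1} y_i}$ and the barrier $\log(1-\sum_i y_i)$ each contribute a single rank-one term, so the Newton system can be solved in $O(d^2)$ time and memory via the Sherman--Morrison--Woodbury identity. Multiplying the per-step cost by $N_{tot}$ and adding the eigendecomposition produces the stated $O(d^2 N_{tot}) + d^3$.

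Finally, for correctness, Lemma~\ref{lem:geometric_barrier} already packages the two $\epsilon/2$ error budgets (the barrier optimality gap of Lemma~\ref{lem:barrier_optimality_gap} and the last Newton accuracy) and certifies that the output $y$ is an $\epsilon$-solution of $P_C$, i.e. $F(y) - \min_{\Delta_{d-1}} F \le \epsilon$; Theorem~\ref{thm:reduction-to-conv} then transports this to an $\epsilon$-solution $(\hat x,\hat\theta)$ of $P_B$ with $\hat\theta = c + W^{-1}\hat x/\|\hat x\|_{W^{-1}}$ the exact best response to $\hat x$ over $\Theta$. I expect the main obstacle to be reconciling the precise notion of optimality: Theorem~\ref{thm:reduction-to-conv} controls the objective-value gap, whereas the statement asks for the parameter-distance guarantee $\|\hat\theta - \theta^\star\|_2 \le \epsilon$, so I would pass from an $\epsilon$ gap in value to a bound in $\theta$ via the strong convexity (quadratic growth) of the reduced problem $P_B''$ used in the \ouralgo analysis, adjusting the target accuracy accordingly. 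Carrying the two halved error budgets and the constant bookkeeping consistently through the change of variables, while keeping the per-step $O(d^2)$ claim honest against the diagonal-plus-low-rank Hessian structure, is the delicate part.
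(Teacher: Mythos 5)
Your proposal matches the paper's proof essentially step for step: it sums the centering cost from Proposition~\ref{prop:time_centering} with the barrier-phase cost $N_b(\epsilon,\eta)\,N_o(b,\lambda,\epsilon,\eta)$ from Lemma~\ref{lem:geometric_barrier}, specialises $\eta = 1+1/\sqrt{d+1}$ via the same Boyd-style estimate $38(d+1)(\eta-1-\log\eta)\le 19$, and charges $O(d^3)$ for the eigendecomposition plus $O(d^2)$ per Newton step by applying Sherman--Morrison to the diagonal-plus-low-rank Hessian (your rank-two accounting of the two coupling terms is in fact more precise than the paper's ``diagonal plus rank one''). Your closing concern is well founded but is not a gap in your argument relative to the paper: the paper's own proof simply invokes Theorem~\ref{thm:reduction-to-conv} to convert an $\epsilon$-minimizer of \ref{prob:reduction_ball} into an $\epsilon$-optimal value for \ref{prob:bilinear} and never bridges to the stated parameter-distance guarantee $\|\hat\theta-\theta^\star\|_2\le\epsilon$, so your proposed quadratic-growth patch addresses a loose end the paper leaves open rather than a step your proof is missing.
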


\begin{proof}
  The first step of the algorithm is to reduce \problem to \ref{prob:reduction_ball}. From Theorem \ref{thm:reduction-to-conv} we have that if the final $y$ is an $\epsilon$-minimizer of the problem \ref{prob:reduction_ball} the output $\hat{x}, \hat{\theta}$ is an $\epsilon$-minimizer of the problem \ref{prob:bilinear}. 

  The total number of Newton steps is the sum of the number of Newton steps of the initial
  centering step $N_c$ (see Proposition \ref{prop:time_centering}) and the number of Newton steps during the barrier phase $N_b(\epsilon,\eta) N_o(b,\lambda,\epsilon, \eta)$ (see Lemma \ref{lem:geometric_barrier}). The total number of Newton steps is thus
  \[
    N_{tot}(b,\lambda,\epsilon, \eta) = N_c(b,\lambda,\epsilon, \eta) + N_b(\epsilon,\eta) N_o(b,\lambda,\epsilon, \eta ).
  \]
  Replacing those quantities per their value gives the announced upper bound on the number of Newton steps. 

  Setting $\eta = 1+1 / \sqrt{d+1}$ and following the computation of \cite[Page
  591]{boyd_convex_2023} yields the second formula and the result of Theorem~\ref{thm:reduct-full-analysis-eta-d}.

  The algorithm requires a change of basis which is of time complexity $O(d^3)$. Each
  Newton step requires the computation of the gradient which can be done in $O(d)$, the
  computation of the Hessian of $F^{(t)}$, which is the sum of a diagonal matrix and a
  rank $1$ matrix. Therefore, the time complexity of the computation of the Hessian and
  its inversion is $O(d^2)$ using the Sherman-Morrison formula. Then some matrix-vector
  products are required to compute the descent direction and the local norm of the
  gradient. which can be done in $O(d^2)$. The total time complexity of \oursecondalgo is
  thus $O(d^2 N_{tot} + d^3) $.
\end{proof}

\subsection{Proof of Theorem~\ref{thm:reduction-to-conv}}
\label{app:proof-reduction-to-conv}

\begin{proof}[Proof of Theorem~\ref{thm:reduction-to-conv}]
  % \underline{Step I: Reduction to the unit ball, diagonal $W$ case.}
 Let us start by following exactly Step I in the proof of
  Theorem~\ref{thm:Correctness}.
% \begin{align*}
% 	\text{maximize } x^\top \theta \text{ subject to } \|x\|_{A} \le 1 \text{ and } \|\theta-c\|_{V^{-1}} \le 1
% \end{align*}
With the change of variables $u= U A^{1/2} x$ and $\phi = U A^{-1/2} \theta$, note that
% , we have $x^\top \theta = u^\top \phi$, as well as $x^\top A x = u^\top u$ and
% \begin{align*}
% 	(\theta-c) V^{-1} (\theta-c) 
% 	&= (A^{1/2} \theta-A^{1/2} c) A^{-1/2} V^{-1} A^{-1/2} (A^{1/2} \theta-A^{1/2} c) \\
% 	&= (A^{1/2} \theta-A^{1/2} c) U^\top \Lambda U (A^{1/2} \theta-A^{1/2} c) \\
% 	&= (U A^{1/2} \theta- U A^{1/2} c)^\top \Lambda ( U A^{1/2} \theta- U A^{1/2} c) \\
% 	&= (\phi-b)^\top \Lambda (\phi-b) 
%   \,.
% \end{align*}
$P_B$ is equivalent to solving $P_b'$
\begin{align*}
  \tag{$P_b'$}
	\text{maximize } u^\top \phi 
  \text{ subject to } \|u\|_{2} \le 1 \text{ and } \|\phi-b\|_{\Lambda} \le 1
  \,.
\end{align*}
Note that $P_b'$ falls under the assumptions of Theorem~\ref{thm:reduc-ellp}:
$\Lambda$ is diagonal and the first domain is the unit euclidean ball. 
Therefore, we know that $y\in \Delta_{d-1}$ is an $\eps$-solution to $P_C$ if and only if 
\[
  u = \sqrt{y_i}\sign(b_i) 
  \quad \text{and} \quad
  \phi = b +  \frac{\Lambda^{-1}u}{\|u\|_{\Lambda^{-1}} }  \,.
\]
Reversing the change of variables,
\[
  \theta 
  = A^{1/2} U^\top \phi 
  = A^{1/2} U^\top (b + \frac{\Lambda^{-1} UA^{1/2}x}{\|u\|_{\Lambda^{-1}}}) 
  = c + \frac{W^{-1} x}{\|x \|_{W^{-1}}}  \,
\]
by noticing that $\|u\|_{\Lambda^{-1}} = \|x\|_{W^{-1}}$.
\end{proof}

\section{Proofs of Section~\ref{sec:axes-aligned}}
\label{app:proofs-aligned}

\subsection{Convexity of $H$}
\begin{proposition}
  \label{prop:h-is-convex}
   If $p \geq 2$, then the function $y \mapsto H(y)$ is convex on $\R_{+}^d$.
\end{proposition}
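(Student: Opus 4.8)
The plan is to split $H$ into two pieces and show each is convex on $\R_+^d$. Write $H(y) = H_1(y) + H_2(y)$ with $H_1(y) = -\sum_{i=1}^d |c_i|\, y_i^{1/p}$ and $H_2(y) = -g(y)$, where $g(y) = \bigl(\sum_{i=1}^d \lambda_i^{-1} y_i^{2/p}\bigr)^{1/2}$. Since a finite sum of convex functions is convex, it suffices to treat $H_1$ and $H_2$ separately.

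First I would dispatch $H_1$, which is the easy part and only needs $p \geq 1$. For any exponent $\alpha \in (0,1]$ the map $t \mapsto t^{\alpha}$ is concave on $\R_+$, so with $\alpha = 1/p \leq 1$ the function $t \mapsto -t^{1/p}$ is convex; multiplying by the nonnegative scalar $|c_i|$ keeps it convex, and summing over $i$ shows $H_1$ is convex on $\R_+^d$.

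The crux is $H_2$, equivalently the concavity of $g$. Here I would invoke the standard composition rule for concavity \cite[Section 3.2.4]{boyd_convex_2023}: if $h$ is concave and nondecreasing in each argument and each coordinate map $z_i$ is concave, then $y \mapsto h(z_1(y),\dots,z_d(y))$ is concave. I take the inner maps $z_i(y) = y_i^{2/p}$ and the outer map $h(z) = \bigl(\sum_{i=1}^d \lambda_i^{-1} z_i\bigr)^{1/2}$, so that $g = h \circ z$. The outer map $h$ is concave, being the composition of the concave nondecreasing function $t \mapsto \sqrt{t}$ with the linear map $z \mapsto \sum_i \lambda_i^{-1} z_i$, and it is nondecreasing in each $z_i$ because $\lambda_i^{-1} > 0$. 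This is precisely where $p \geq 2$ enters: the exponent $2/p \leq 1$ makes each inner map $y_i \mapsto y_i^{2/p}$ concave on $\R_+$, whereas for $p < 2$ the exponent exceeds $1$ and $z_i$ would be convex, breaking the argument. The composition rule then yields that $g$ is concave, hence $H_2 = -g$ is convex.

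Combining the two pieces, $H = H_1 + H_2$ is convex on $\R_+^d$, as claimed. The one point needing minor care is the boundary $y_i = 0$, where the powers $y_i^{1/p}$ and $y_i^{2/p}$ have unbounded derivative; this causes no difficulty because $t \mapsto t^{\alpha}$ with $\alpha \in (0,1]$ is concave on the closed half-line $[0,\infty)$, not merely its interior, so every concavity claim above holds on the closed orthant. I expect the main obstacle a reader might perceive is recognizing that only the second term forces $p \geq 2$ and cleanly justifying the nested composition, rather than grinding through a two-level Hessian computation.
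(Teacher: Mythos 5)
Your proof is correct and follows essentially the same route as the paper's: split $H$ into the two terms, handle the first via concavity of $t \mapsto t^{1/p}$, and handle the second via the standard composition rule from \cite[Section 3.2.4]{boyd_convex_2023} (the paper phrases it as convex $=$ convex nonincreasing $\circ$ concave, you as concave $=$ concave nondecreasing $\circ$ concave, which is the same argument up to sign). Your added observations---that the first term needs only $p \geq 1$ while $p \geq 2$ is forced solely by the exponent $2/p$ in the second term, and that concavity holds on the closed orthant---are correct minor refinements, not a different approach.
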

%h(g(y)) is convex if h is convex non-increasing and g is convex
\begin{proof}[Proof of Proposition~\ref{prop:h-is-convex}]
Recall that
\begin{align*}
	H(y) = -\sum_{i=1}^{d} y_i^{1/p} |c_i|-\sqrt{ \sum_{i=1}^{d} \lambda_i^{-1} y_i^{2/p} }
\end{align*}
$y \mapsto -\sum_{i=1}^{d} y_i^{1/p} |c_i|$ is convex as a sum of convex functions since $y \mapsto -y_i^{1/p} |c_i|$ is convex for $p \geq 2$. 
$y \mapsto -\sqrt{ \sum_{i=1}^{d} \lambda_i^{-1} y_i^{2/p} }$ is convex (see for instance~\cite[p. 84]{boyd_convex_2023}) as the composition of a convex non-increasing function $z \mapsto - \sqrt{z}$ and a concave function $y \mapsto \sum_{i=1}^{d} \lambda_i^{-1} y_i^{2/p}$, since $y \mapsto y_i^{2/p} \lambda_i^{-1}$ is concave for all $p \ge 2$. So $H$ is convex as announced.
\end{proof}
%Let us prove that $y \mapsto H_2(y)$ is convex. Let $\alpha = 2 / p$. 
%Compute the gradient and the Hessian
% \begin{align*}
%	 \nabla H_2(y)_i &= \frac{1}{2} \frac{\alpha \lambda_i^{-1} y_i^{\alpha -1}}{H_2(y)}\\
%  \nabla^2 H_2(y)_{i, j}  &= -\frac{1}{4} \frac{\alpha^2 \lambda_i^{-1} \lambda_j^{-1} y_i^{\alpha - 1} y_i^{\alpha - 1}}{H_2(y)^3} + \frac{\alpha(\alpha -1) \lambda_i^{-1} y_i^{\alpha - 2}}{2H_2(y)} \mathds{1}\{i = j\}
% \end{align*}
%Let us define the vectors $u$ and $v$ with coordinates
%\[
%  u_i = \alpha \lambda_i^{-1} y_i^{\alpha -1}
%  \quad \text{and} \quad
%  v_i = \lambda_i^{-1} y_i^{\alpha -2}
%  \,.
%\]
%Then the Hessian of $f$ can be written as 
%\[
%  \nabla H_2(y) = - \frac{\alpha^2}{4H_2(y)^3} u u^\top 
%  - \frac{\alpha(1-\alpha)}{2H_2(y)}  \Diag(v) 
%  \,.
%\]
%Now since $\alpha \leq 1$ and all the coordinates of $v$ are non-negative, $- \nabla
%H_2(y) \succcurlyeq 0$, proving the claim.
%\end{proof}

\subsection{Reduction from $P_B$ to $P_{C, p}$}
\begin{proof}[Proof of Theorem~\ref{thm:reduc-ellp}]
   Define for any $z \in \mathcal{X}$, the objective function
    \[
    G(z) = \max_{\theta \in \Theta } \big\{z^\top \theta \big\}
    = 
    z^\top c
     + \sqrt{z^\top \Lambda^{-1} z} \,.
    \]  
    Then $(z, \theta)$ is an $\eps$-approximate solution to \problem iff $z \in \mathcal{X}$ and 
    $z^\top \theta \geq G(z') - \eps$ for any $z' \in \cX$.
    
    Note that for any $y \in \Delta_{d-1}$, we have, denoting by $x$ the vector with
    coordinates $x_i = y_i^{1/p} \sign(c_i)$,
    \[
    H(y) =
     -\sum_{i=1}^{d} y_i^{1/p}\sign(c_i) c_i
     - \sqrt{\sum_{i=1}^d \lambda_i^{-1} y_i^{2/p}}
     =
     -x^\top c
     - \sqrt{ x^\top \Lambda^{-1}  x}
     = -G(x) \,.
    \]
    \paragraph{First direction}
    Let $y$ be an $\eps$-solution to \ref{prob:reduction_ball_ellp}, and $x$ be the
    corresponding vector in $\cX$.
    Let $(z, \theta) \in \cX \times \Theta$. By definition of $G(z)$, 
    \begin{multline*}
      z^\top \theta \leq G(z) 
     = \sum_{i=1}^d  z_i c_i + \sqrt{\sum_{i = 1}^{d} \lambda_i^{-1}  z_i^2}
     \leq 
     \sum_{i=1}^d | z_i| |c_i| +\sqrt{ \sum_{i = 1}^{d} \lambda_i^{-1} z_i^2}
     \leq \frac{1}{\|z\|_p}
     \bigg(\sum_{i=1}^d | z_i| |c_i| 
     + \sqrt{\sum_{i = 1}^{d} \lambda_i^{-1}  z_i^2}\bigg)
     \,.
    \end{multline*}
    We used the fact that $\|z\|_p \leq 1$ for the final inequality.

  Let $y' \in \Delta_{d-1}$ be the vector with coordinates
  $
  %  y_i' = \bigg(\frac{ z_i}{\|z\|} \bigg)^2 
   y_i' = (|z_i| / \|z\|_p)^p
  \,.
  $
  Then $y'$ is indeed in the simplex $\Delta_{d-1}$ since all its coordinates are
  non-negative and sum to $1$. We have shown above that
  \[
    z^\top \theta \leq 
   \bigg(\sum_{i=1}^d (y_i')^{1/p} |c_i| 
   + \sqrt{\sum_{i = 1}^{d} \lambda_i^{-1} (y_i')^{2/p}}\bigg)
   = -F(y') \,.
  \]
  But by our assumption on the approximate optimality of $y$, we know that $H(y') \geq
  H(y) - \eps $, so
  \[
    z^\top \theta \leq -H(y) + \eps  = G(x) + \eps
    = x^\top \theta^\star + \eps \, .
  \]
  Taking the supremum over $(z, \theta)$ completes the proof.

  \paragraph{Converse}
  Let $y \in \Delta_{d-1}$ be such that $(x, \theta^\star)$ is a solution to
  \problem. 
  Then for any $y' \in \Delta_{d-1}$, with corresponding $x'$, 
  \[
    F(y') 
    = -G(x') 
    \geq -G(x) - \eps = F(y) - \eps \,.
  \]
  Therefore $y$ is an $\eps$-approximate solution to \ref{prob:reduction_ball_ellp} \,.
\end{proof}

\section{Numerical Experiments}
    \label{app:numerical_experiments}

\subsection{Time for a run of a Linear Bandits Algorithms}
Here in Figure~\ref{fig:time} we  average time taken for each algorithm to perform a run with horizon $T = 10000$. The results are averaged over 10 runs and are presented with $95\%$ confidence intervals. We see that the algorithm that requires a bilinear maximization step and using \oursecondalgo is significantly slower than the ones that use \ouralgo. Then Thompson Sampling is even faster because there is no bilinear maximization step and only requires a matrix inversion/ diagonalization at each step. Finally, E2TC is the faster algorithm because it only requires diagonal matrix inversion and multiplication by a vector at most $\log_2(T)$ times and vector additions during the exploration phase which is can be much smaller than $T$.

\begin{figure}[ht]
    \centering
    \includegraphics[width=0.95\textwidth]{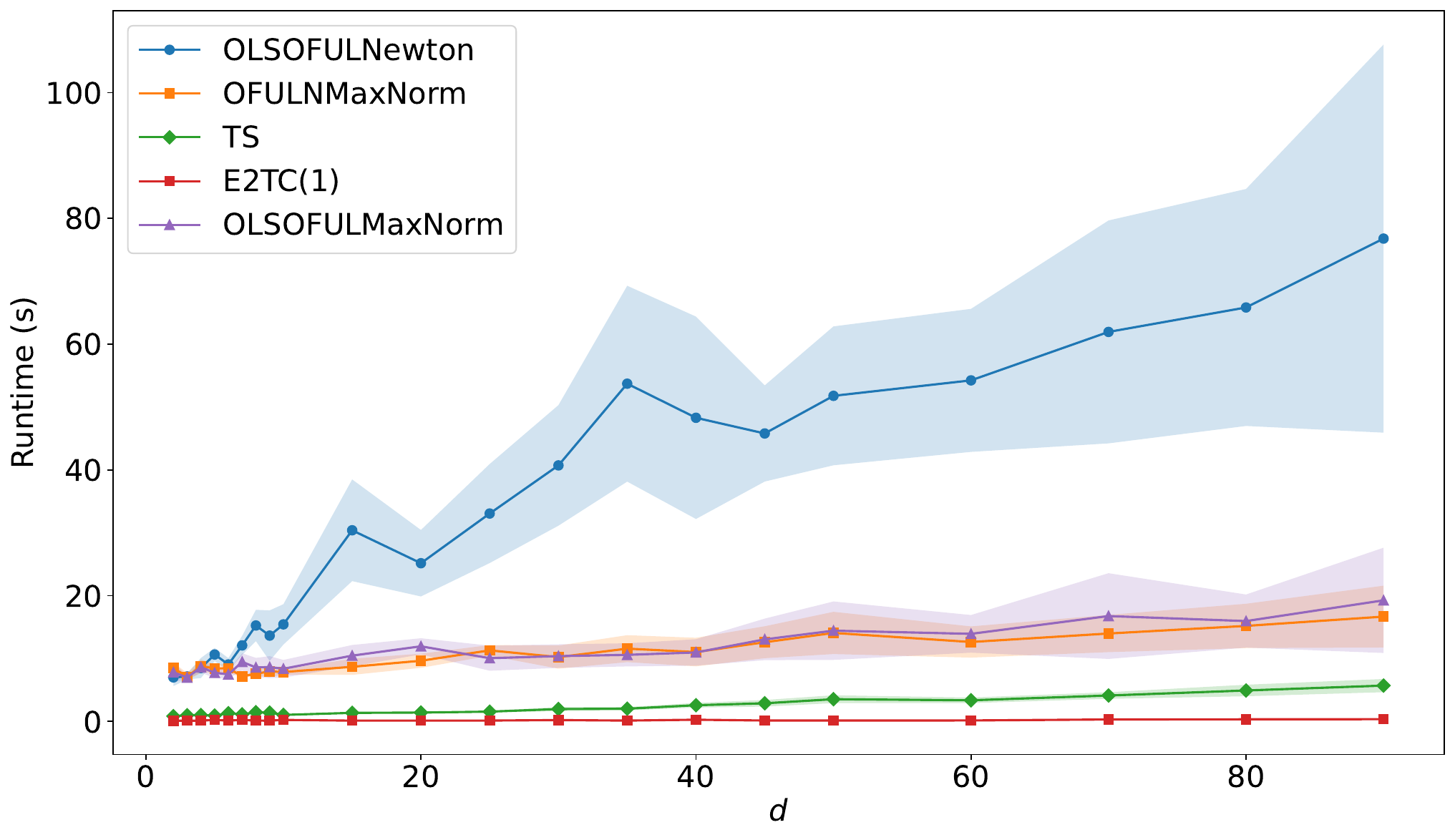}
    \caption{Average time taken for each algorithm to perform a run with horizon $T = 10000$}
    \label{fig:time}
\end{figure}

\subsection{A Little Example in Dimension 2}
In Figure~\ref{fig:little_example} we provide a little example in dimension 2 for \problem when $\mathcal{X}$ is a unit ball and $\Theta$ is an ellipsoid to give some intuition on \problem. The GeoGebra file is given in the supplementary material so that one can change the parameters and see how the solution changes.    

\begin{figure}[ht]
    \centering
    \includegraphics[width=0.8\textwidth]{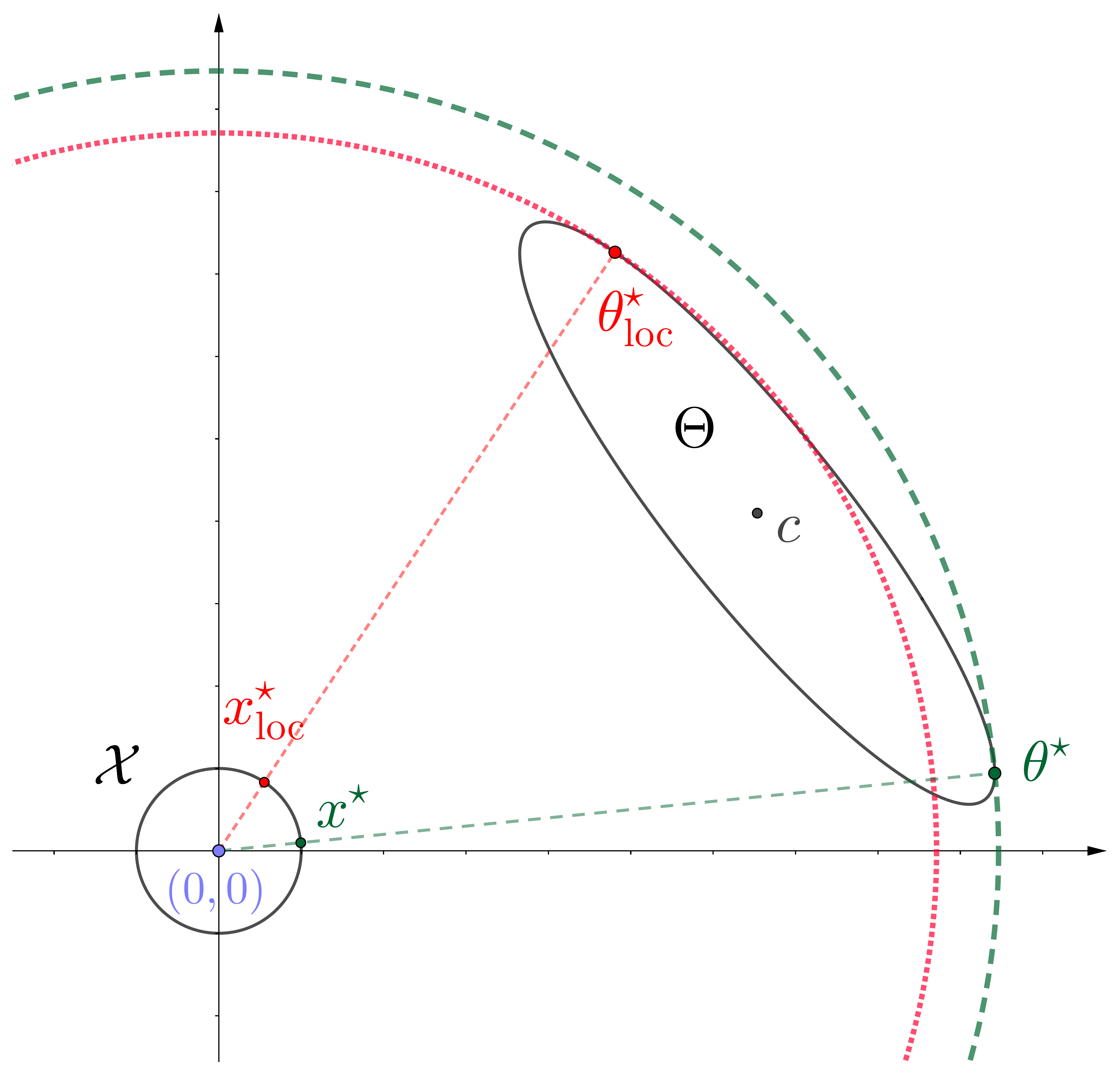}
    \caption{Example in dimension 2 when $\mathcal{X} = \{x, \| x\|_2 \leq 1 \}$ is a unit ball and $\Theta$ an ellipsoid.}
    \label{fig:little_example}
\end{figure}
In the example provided, one can see that there can be multiple local maxima. So without a careful initialization, local search or iterative algorithms such as Successive Alternate Maximization \ref{alg:succ-alg} can converge to a local maximum such as $\theta^{\star}_{loc}$. (if the algorithm is initialized at $\theta^{\star}_{loc}$ it stays at this local maximum). 
\begin{algorithm} 
    \SetKwInOut{Input}{Input}
    \Input{Matrices $A, W \in \mathbb{R}^{d \times d}$, vector $c \in \mathbb{R}^d$, initial guess $\theta_0$, accuracy $\epsilon \ge 0$}
    \tcp{Initialization}
    $\hat{\theta} =  \theta_0$ \\
    $\hat{x} = A^{-1/2}\frac{\hat{\theta}}{\|\hat{\theta}\|_2}$  \\
    $\hat{\theta}^+ = c+ W^{-1} \hat{x}/\|\hat{x}\|_{W^{-1}}$ \\
    \tcp{Iterative steps of alternate maximization}
    \While{$\| \hat{\theta}^+ - \hat{\theta} \|_2 > \epsilon$}{
        $\hat{\theta} = \hat{\theta}^+$ \\
        $\hat{x} = A^{-1/2} \frac{\hat{\theta}}{\|\hat{\theta}\|_2}$  \\
        $\hat{\theta}^+ = c + W^{-1} \hat{x}/\|\hat{x}\|_{W^{-1}}$ 
    }
    $\hat{\theta} = \hat{\theta}^+$ \\
    {\bf Output:} $(\hat{x}, \hat{\theta})$ an approximate solution to $P_{B}$
    \caption{Successive Alternate Maximization Algorithm}\label{alg:succ-alg} 
  \end{algorithm}

% \section{Robust Convex Optimization and Ellipsoidal Uncertainty}

\clearpage
\section{Glossary}
% \printglossaries
\printnoidxglossaries

\end{document}